	\providecommand\BibTeX{{%
			\normalfont B\kern-0.5em{\scshape i\kern-0.25em b}\kern-0.8em\TeX}}}
\newcommand{\Rmnum}[1]{\expandafter\@slowromancap\romannumeral #1@}
\renewcommand{\algorithmicrequire}{\textbf{Input:}}
\renewcommand{\algorithmicensure}{\textbf{Output:}}
\spnewtheorem{claim}{Claim}{\bfseries}{\rmfamily}
\spnewtheorem{remark}{Remark}{\bfseries}{\rmfamily}
\spnewtheorem{property}{Property}{\bfseries}{\rmfamily}
\begin{document}

\title{A Data-dependent Approach for High Dimensional (Robust) Wasserstein Alignment
}
\author{Hu Ding\inst{1} \and Wenjie Liu\inst{1} \and Mingquan Ye\inst{2}}
\institute{
University of Science and Technology of China \\
He Fei, Anhui, China\\
\email{huding@ustc.edu.cn, lwj1217@mail.ustc.edu.cn}\\
\and University of Illinois Chicago\\
Chicago, Illinois, USA\\
\email{mye9@uic.edu}\\
}

\maketitle

\thispagestyle{empty}

\begin{abstract}
Many real-world problems can be formulated as the alignment between two geometric patterns. Previously, a great amount of research focus on the alignment of 2D or 3D patterns  in the field of computer vision. Recently, the alignment problem in high dimensions finds several novel applications in practice. However, the research is still rather limited in the algorithmic aspect. To the best of our knowledge, most existing approaches are just simple extensions of their counterparts for 2D and 3D cases, and often suffer from the issues such as high computational complexities. In this paper, we propose an effective framework to compress the high dimensional geometric patterns. Any existing alignment method can be applied to the compressed geometric patterns and the time complexity can be significantly reduced. Our idea is inspired by the observation that high dimensional data often has a low intrinsic dimension. Our framework is a ``data-dependent'' approach that has the complexity depending on the intrinsic dimension of the input data. Our experimental results reveal that running the alignment algorithm on compressed patterns can achieve similar qualities, comparing with the results on the original patterns, but the runtimes (including the times cost for compression) are substantially lower.

\end{abstract}

\keywords{Wasserstein distance, Procrustes analysis, doubling dimension, network alignment, unsupervised cross-lingual learning, domain adaptation}

\pagestyle{plain}
\pagenumbering{arabic}
\setcounter{page}{1}

\section{Introduction}

\label{sec-intro}

Given two geometric patterns, the problem of alignment is to find their appropriate spatial positions so as to minimize the difference between them. In general, a geometric pattern is represented by a set of (weighted) points in the space, and their difference is often measured by some objective function. Geometric alignment finds many applications in the field of computer vision, such as image retrieval, pattern recognition, fingerprint and facial shape alignment, {\em etc}~\cite{cohen1999earth,maltoni2009handbook,cao2014face}. For different applications, we may have different constraints for the alignment, {\em e.g.}, we can allow rigid transformations for fingerprint alignment. In addition, the {\em Wasserstein distance} (which is also called {\em ``earth mover's distance''} in some applications)~\cite{rubner2earth} has been widely adopted  for measuring the difference of patterns in computer vision, where its major advantage over other measures is the robustness  in practice~\cite{Villani08references}. 
%
%
%
Besides the computer vision applications in 2D/3D, recent research shows that a number of high dimensional problems can be solved through geometric alignments. We briefly introduce several interesting examples below. 

\begin{itemize}

	\item The research on natural language processing has revealed that different languages often share similar structure at the word level~\cite{youn2016universal}; in particular, the recent study on word semantics embedding has also shown the existence of structural isomorphism across languages~\cite{mikolov2013exploiting,DBLP:journals/kais/DevHP21,DBLP:conf/iclr/AboagyeZYWZWYP22}, and finds that the  Wasserstein distance can serve as a good distance for languages and  documents~\cite{DBLP:conf/emnlp/ZhangLLS17,kusner2015word}.   Zhang {\em et al.}~\cite{DBLP:conf/emnlp/ZhangLLS17} proposed to learn the transformation between different languages without any cross-lingual supervision, and the problem is reduced to minimizing the Wasserstein distance via finding the optimal  geometric alignment in high dimensions.  A number of improved algorithms that use the Wasserstein distance to compute the alignment between languages have also been proposed in recent years~\cite{grave2019unsupervised,DBLP:conf/emnlp/Alvarez-MelisJ18}.

	\item A Protein-Protein Interaction (PPI) network is a graph representing the interactions among proteins. Given two PPI networks, finding their alignment is a fundamental bioinformatics problem for understanding the correspondences between different species~\cite{malod2017unified}. However, most existing approaches require to solve the NP-hard subgraph isomorphism problem and often suffer from high computational complexities. To resolve this issue, Liu {\em et al.}~\cite{DBLP:conf/aaai/LiuDC017} recently applied the geometric embedding techniques to develop a new framework based on geometric alignment in Euclidean space.

	\item
	In supervised learning, our task usually is to learn the knowledge from a given labeled training dataset. However,  labeled data could be very limited in practice. We can generate the labels for an unlabeled dataset by exploiting an existing annotated dataset, that is, transfer the knowledge from a source domain to a target domain. The problem is called ``domain adaptation'' in the field of transfer learning~\cite{DBLP:journals/tkde/PanY10}. The problem has received a great amount of attention in the past years~\cite{DBLP:conf/nips/BlitzerCKPW07,DBLP:journals/ml/Ben-DavidBCKPV10}. Recently,  Courty {\em et al.} \cite{DBLP:journals/pami/CourtyFTR17} modeled the domain adaptation problem as a transportation problem of minimizing the Wasserstein distance between the source and target domains in high dimensions.
\end{itemize}
Despite of the above   studies in the practical areas, the research on the algorithmic aspect of high dimensional alignment is still rather limited. We need to take into account of the high dimensionality and large number of points of the geometric patterns simultaneously. 
In particular, as the developing of data acquisition techniques, the data sizes also increase very fast. For example, due to the rapid progress of high-throughput sequencing technologies, biological data are growing exponentially in recent years~\cite{yin2017computing}. So it is quite important to develop efficient algorithmic techniques for dealing with the high-dimensional geometric alignment problem. Unfortunately, it has been shown that \textbf{any constant factor approximation for geometric alignment under rigid transformation with Wasserstein distance takes at least $n^{\Omega(d)}$ time in the worst case}, where $n$ is the maximum size of the two input patterns and $d$ is the dimensionality, unless $\mathtt{SNP}\subset \mathtt{DTIME}(2^{o(n)})$~\cite{DBLP:journals/algorithmica/DingX17}. \textbf{The reason  why the alignment problem is so hard is that it needs to compute the ``transformation'' and ``matching'' jointly for optimizing the objective}. Namely, the transformation and matching interact and influence each other during the optimization procedure. Moreover, it is complicated to determine a rigid transformation in high dimensional space. 
For instance, we can imagine that determining an appropriate rigid transformation in $\mathbb{R}^d$ needs to fix at least $d$ points in the space (so intuitively the complexity can be as large as ${n\choose d}$). 
Therefore, it is natural to ask the question: 

\vspace{0.05in}
{\em Is it possible to design low-complexity algorithm for high-dimensional geometric alignment under some reasonable assumption? } 
\vspace{0.05in}




To the best of our knowledge, we are the first to consider this problem from theoretical perspective. Our idea is inspired by the observation that many real-world datasets often manifest low intrinsic dimensions~\cite{belkin2003problems}. 
For example, human handwriting images can be well embedded to some low dimensional manifold though the Euclidean dimension can be very high. Following this observation, we consider to exploit the widely used notion, ``doubling dimension''~\cite{krauthgamer2004navigating,talwar2004bypassing,karger2002finding,har2006fast,dasgupta2013randomized}, to deal with the geometric alignment problem. 
The doubling dimension is particularly suitable to depict the data having low intrinsic dimension. Our framework is a ``data-dependent'' approach where the complexity depends on the doubling dimension of the input data. We prove that the given geometric patterns with low doubling dimensions can be substantially compressed so as to save a large amount of running time when computing the alignment. More importantly, our compression approach is an independent step, and hence can serve as the preprocessing for different alignment methods. A preliminary version of this work has appeared in~\cite{DBLP:conf/aaai/DingY19}. 

\textbf{Note:} Independent of our work~\cite{DBLP:conf/aaai/DingY19} published in 2019, Beugnot {\em et al.}~\cite{beugnot2021improving} also considered the speedup for computing Wasserstein distance and proposed a $k$-means++ based compression method (which is somewhat similar to our $k$-center based method in Section~\ref{sec-aa}).   Nevertheless, there are several significant differences between our work and~\cite{beugnot2021improving}. First, we consider the alignment problem under rigid transformation, where the work of~\cite{beugnot2021improving} only considers the static version. Also, we extend our method to fractional Wasserstein distance but it is unclear whether the theoretical analysis of~\cite{beugnot2021improving} is also available for the case with outliers. 


The rest of the paper is organized as follows. We introduce several important definitions in Section~\ref{sec-pre}, and discuss some related works and our main idea in Section~\ref{sec-prior} and Section~\ref{sec-oa} respectively. Then, we present our algorithm, analysis, and the time complexity  in Section~\ref{sec-aa} and Section \ref{sec-time}. Finally, we study the practical performance of our proposed algorithm in Section~\ref{sec-exp}.

\subsection{Preliminaries}
\label{sec-pre}

Before introducing the formal definition of geometric alignment, we need to define ``Wasserstein distance'' and ``rigid transformation''  first. 
In general, the Wasserstein distance is used to measure the difference between two distributions. In this paper, we consider the case that the distributions are discrete point sets. Given two points $p$ and $q\in \mathbb{R}^d$, we use $||p-q||$ to denote their Euclidean distance.


\begin{definition}[\textbf{Wasserstein distance $\mathcal{W}^2_2$}~\cite{Villani08references}]
	\label{def-emd}
	Let $A=\{a_1, a_2, \cdots, a_{n_1}\}$ and $B=\{b_1, b_2, \cdots, $ $b_{n_2}\}$ be two sets of weighted points in $\mathbb{R}^d$ with nonnegative weights $\alpha_i$ and $\beta_j$ for each $a_i\in A$ and $b_j\in B$ respectively, and $W_A$ and $W_B$ be their respective total weights. The Wasserstein distance between $A$ and $B$ is  
	\begin{eqnarray}
		\mathcal{W}^2_2(A, B)=\frac{1}{\min\{W_A, W_B\}}\min_{F}\sum^{n_1}_{i=1}\sum^{n_2}_{j=1}f_{ij}||a_i-b_j||^2, \label{for-emd}
	\end{eqnarray} 
	where $F=\{f_{ij}\}$ is a feasible flow from $A$ to $B$, i.e., each $f_{ij}\geq 0$, $\sum^{n_1}_{i=1}f_{ij}\leq\beta_j$, $\sum^{n_2}_{j=1}f_{ij}\leq\alpha_i$, and $\sum^{n_1}_{i=1}\sum^{n_2}_{j=1}f_{ij}=\min\{W_A, W_B\}$ (see Figure~\ref{fig:emd}(a)). 
\end{definition}
\begin{remark}
	\textbf{(1)} Usually we assume $W_A=W_B=1$ for convenience, but in this paper we also consider the ``partial matching'' ($W_A$ can be not equal to $W_B$).  
	\textbf{(2)} Intuitively, the Wasserstein distance can be viewed as the minimum transportation cost between $A$ and $B$, where the weights of $A$ and $B$ are the ``supplies'' and ``demands'' respectively, and the cost of each edge connecting a pair of points from $A$ to $B$ is their ``ground distance''. In general, the ``ground distance'' can be defined in various forms, and here we use the squared Euclidean distance due to its simplicity in practice.  
\end{remark}
\begin{figure}[htbp]
	\subfigure[Wasserstein distance]{
		\begin{minipage}[t]{0.25\linewidth}
			\centering
			\includegraphics[scale=0.35]{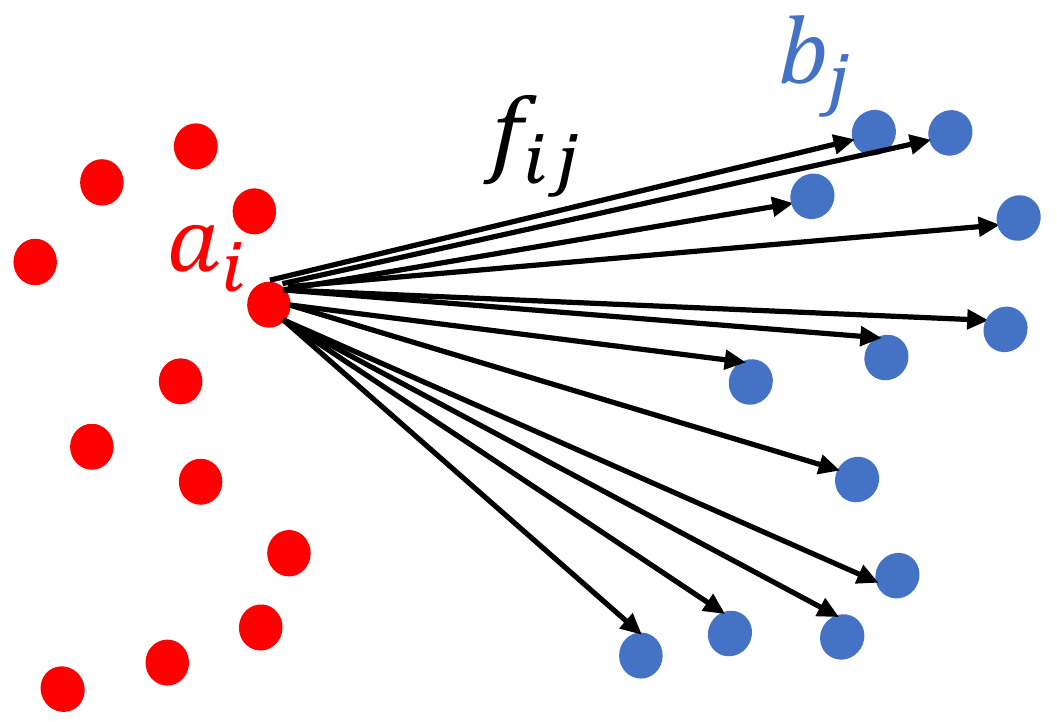}
	\end{minipage}}
	\subfigure[Rigid transformation]{
		\begin{minipage}[t]{0.25\linewidth}
			\centering
			\includegraphics[scale=0.35]{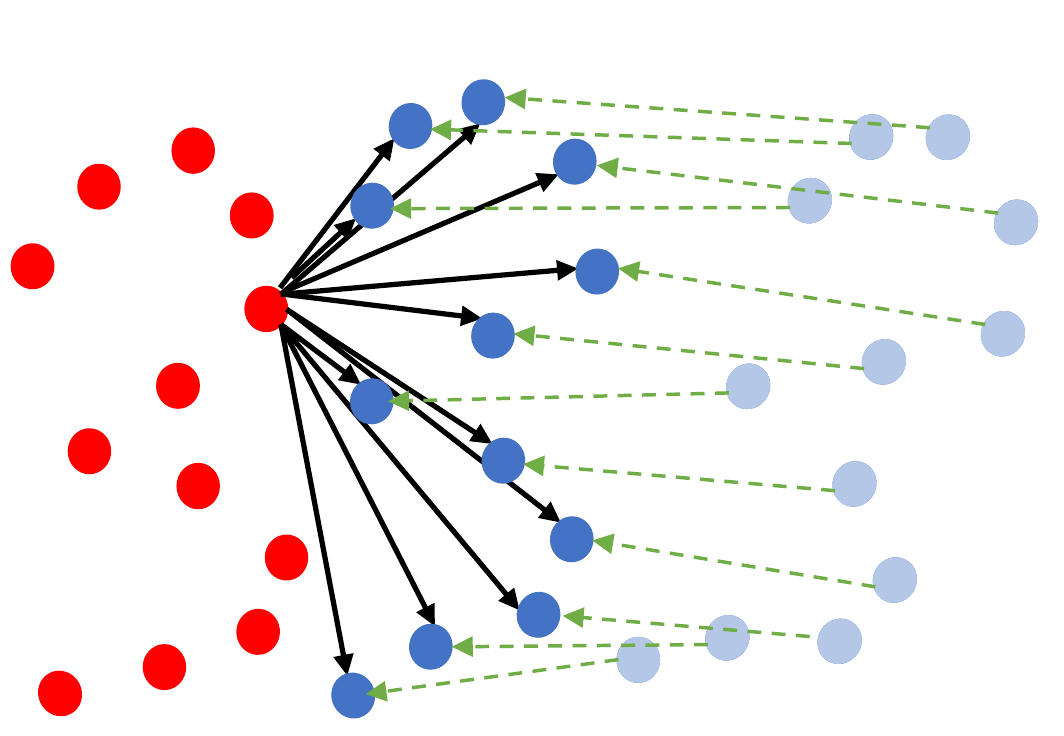}
	\end{minipage}}%
	\subfigure[Fractional Wasserstein distance]{
		\begin{minipage}[t]{0.35\linewidth}
			\centering
			\includegraphics[scale=0.25]{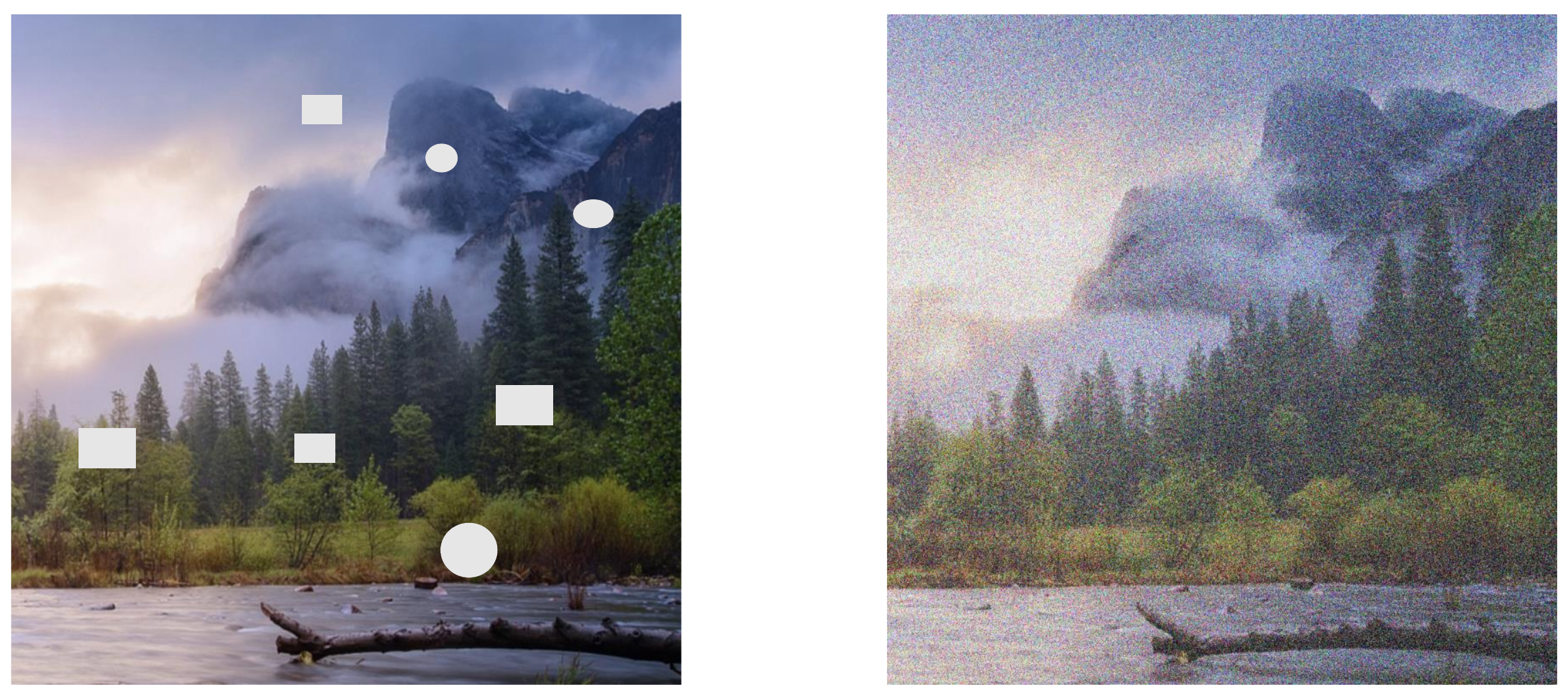}
	\end{minipage}}%
	\caption{(a) The flows of the Wasserstein distance from  $A=\{a_i\mid 1\leq i\leq n_1\}$ to $B=\{b_j\mid 1\leq j\leq n_2\}$, where each $f_{ij}$ is the flow from $a_i$ to $b_j$.  
		(b) To reduce the Wasserstein distance between (the fixed) $A$ and $B$, we find a rigid transformation for the pattern $B$ that transforms it from the light blue point set to the dark blue point set. 
		(c) The left image has some missing parts (masked by the white patches) and the right image has some noise. So it is more appropriate to compute the fractional Wasserstein distance to match them partially. 
	}
\label{fig:emd}  
\end{figure}

In some scenarios, we may only want a ``partial matching'' between $A$ and $B$ ({\em e.g.,} $A$ or $B$ may contain some outliers, or a small pattern $A$ only corresponds to a part of a large pattern $B$)~\cite{mukherjee2021outlier}.  See Figure~\ref{fig:emd}(c) for an example. For the applications in high dimensions, it is also natural to consider the partial matching. For example, one language may contain some words that cannot find their perfectly matched counterparts in the other language in the problem of cross-lingual alignment~\cite{DBLP:conf/emnlp/ZhangLLS17} (this phenomenon is very common between Chinese and English).  Therefore we introduce the fractional Wasserstein distance below.

\begin{definition}[\textbf{fractional Wasserstein distance}]
\label{def-fracemd}
Let $\lambda\in [0,1]$. In Definition~\ref{def-emd}, if we replace the constraint ``$\sum^{n_1}_{i=1}\sum^{n_2}_{j=1}f_{ij}=\min\{W_A, W_B\}$'' by ``$\sum^{n_1}_{i=1}\sum^{n_2}_{j=1}f_{ij}=\lambda\cdot\min\{W_A, W_B\}$'', we call the distance as the fractional Wasserstein distance and denote it as $\mathcal{W}^2_2(A, B, \lambda)$. 
\end{definition}

We consider the rigid transformation for alignment, because it is very natural to interpret in real world and has already been widely used in the aforementioned applications.

\begin{definition}[\textbf{Rigid Transformation}]
\label{def-rt}
Let $P$ be a set of points in $\mathbb{R}^{d}$. A rigid transformation $\mathcal{T}$ on $P$ is a 
transformation ({\em i.e.,} rotation, translation, reflection, or their combination) which preserves the pairwise distances of the points in $P$. 
\end{definition}

\begin{definition}[\textbf{Wasserstein Alignment}]
\label{def-align}
Given two weighted point sets $A$ and $B$ as described in Definition~\ref{def-emd}, the problem of Wasserstein alignment between $A$ and $B$ under rigid transformation is to determine a rigid transformation $\mathcal{T}$ for $B$ so as to minimize the  Wasserstein distance  $\mathcal{W}^2_2(A, \mathcal{T}(B))$, or $\mathcal{W}^2_2(A, \mathcal{T}(B),\lambda)$ if we consider the fractional Wasserstein distance (see Figure~\ref{fig:emd}(b)).
\end{definition}


As previously mentioned, we  use the doubling dimension to describe high dimensional data with low intrinsic dimension. We denote a metric space by $(X, d_X)$ where $d_X$ is the distance function of the set $X$. For instance, we can imagine that $X$ is a set of points in a low dimensional manifold and $d_X$ is simply the Euclidean distance. For any $x\in X$ and $r\geq 0$, we use $\mathtt{Ball}(x, r)=\{p\in X\mid d_X(x,p)\leq r\}$ to denote the ball of radius $r$ around $x$.


\begin{definition}[\textbf{Doubling Dimension}\cite{krauthgamer2004navigating,talwar2004bypassing}]
\label{def-dd}
The doubling dimension of a metric space $(X, d_X)$ is the smallest number $\rho$, such that for any $x\in X$ and $r\geq 0$, $\mathtt{Ball}(x, 2r)$ is always covered by the union of at most $2^\rho$ balls with radius $r$.
\end{definition}

\begin{remark}
The doubling dimension describes the expansion rate of $(X, d_X)$; intuitively, we can imagine a special case: a set of points are uniformly distributed inside a $\rho$-dimensional hypercube, where its doubling dimension is $O(\rho)$ but the Euclidean dimension can be very high. For a more general case, a manifold in high dimensional Euclidean space may have a very low doubling dimension, as many examples studied in machine learning~\cite{belkin2003problems}. 
Unfortunately, as shown in~\cite{laakso2002plane}, such low doubling dimensional metrics cannot always be embedded to low dimensional Euclidean spaces with low distortion of Euclidean distance. 

\end{remark}

\subsection{Related Works}
\label{sec-prior}


\textbf{Wasserstein distance.} If building a bipartite graph, where the two columns of vertices correspond to the points of $A$ and $B$ respectively and each edge connecting $(a_i, b_j)$ has the weight $||a_i-b_j||^2$, we can see that computing the Wasserstein distance actually is a min-cost flow problem from $A$ to $B$. 
A number of minimum cost flow algorithms have been developed in the past decades~\cite{ahuja1993network,orlin1993polynomial,orlin1997polynomial,DBLP:journals/combinatorica/Tardos85,DBLP:journals/jacm/GoldbergT89}. Suppose $n$ and $m$ are the numbers of vertices and edges in the bipartite graph respectively, and $U$ is the maximum weight. Orlin~\cite{DBLP:conf/stoc/Orlin88} developed a strongly polynomial algorithm with the time complexity $O(n\log n (m+n\log n))$. 
Lee and  Sidford~\cite{DBLP:conf/focs/LeeS14} proposed an algorithm that can solve the minimum cost flow problem in $O(n^{2.5} \mathtt{poly}(\log U))$ time. Very recently, Chen~{\em et al.}~\cite{chen2022maximum} improved the time complexity to be $O(m^{1+o(1)})$. 
Sherman~\cite{DBLP:conf/soda/Sherman17} provided a $(1+\epsilon)$-approximation algorithm based on preconditioning, and  the running time is  $O(n^{2+o(1)}\epsilon^{-2})$. Andoni {\em et al.}~\cite{DBLP:conf/stoc/AndoniSZ20} further proposed a parallel $(1+\epsilon)$-approximation algorithm that runs in $\frac{1}{\epsilon^2}\mathtt{poly (log n)}$ time using $\tilde{O}(\frac{1}{\epsilon^2}m)$ expected work. 

In the community of machine learning, Cuturi~\cite{DBLP:conf/nips/Cuturi13} proposed  a much faster ``Sinkhorn Distance'' algorithm which yields an approximation for the Wasserstein distance. Following Cuturi's work, Altschuler {\em et al.}~\cite{DBLP:conf/nips/AltschulerWR17} provided an additive approximation~\footnote{The ``additive approximation'' means that the computed Wasserstein distance is at most $OPT+\epsilon$, if $OPT$ is the optimal Wasserstein distance.} algorithm for computing Wasserstein distance with the running time $O(n^2L^3\log n \epsilon^{-3})$, where $L$ is the maximum edge weight in the bipartite graph. 
Recently, the problems of robust Wasserstein distance and unbalanced Wasserstein distance ({\em e.g.,} allow a small number of outliers) have attracted a lot of attention in the machine learning applications~\cite{mukherjee2021outlier,DBLP:conf/nips/ChapelFWFG21}.

When we focus on the Wasserstein distance problem in Euclidean space $\mathbb{R}^d$, a number of faster algorithms have been proposed in  computational geometry~\cite{DBLP:conf/compgeom/AgarwalV04,DBLP:conf/soda/VaradarajanA99,indyk2007near,cabello2008matching,DBLP:conf/stoc/AndoniNOY14,DBLP:conf/stoc/SharathkumarA12,DBLP:conf/soda/SharathkumarA12,DBLP:journals/siamcomp/Vaidya89a}. 
Recently, Agarwal {\em et al.}~\cite{DBLP:conf/compgeom/AgarwalFPVX17} improved the running time to be $O\big(n^{3/2}\epsilon^{-d}\mathtt{poly}(\log U)\mathtt{poly}(\log n)\big)$.  Using the idea of preconditioning~\cite{DBLP:conf/soda/Sherman17}, Khesin {\em et al.}~\cite{DBLP:conf/compgeom/KhesinNP19} developed two randomized $(1+\epsilon)$-approximation algorithms with the running times $O(n\epsilon^{-O(d)}\log (M)^{O(d)}\log n)$ and $O(n\epsilon^{-O(d)}$ $\log (U)^{O(d)}\log n)$, respectively ($M$ is the aspect ratio of the given point sets). Fox and
Lu~\cite{DBLP:journals/jocg/FoxL22} also presented a near-linear  time algorithm but  their time bound is independent of $M$ and $U$. 
Most of these algorithms rely on the low-dimensional geometric techniques, and therefore their complexities are exponential in the dimensionality $d$.  Li~\cite{DBLP:journals/corr/abs-1002-4034} studied the problem of estimating the Wasserstein distance; his idea is a generalization of the method of~\cite{indyk2007near} that yields an $O(\rho)$-approximate estimate of the Wasserstein distance, where $\rho$ is the doubling dimension of the given data; but  the approximation factor is relatively large and the algorithm needs at least a quadratic preprocessing time. Recently Chen {\em et al.}~\cite{chen2022new} applied the Quadtree to design a streaming Wasserstein distance algorithm in high dimensions, where the approximation factor is $\tilde{O}(\log n)$. Ding {\em et al.}~\cite{DBLP:conf/sdm/DingCYW21} considered the Wasserstein distance query problem for quickly answering the question that whether the distance is lower or higher than a query number; their algorithm is based on a hierarchical $k$-center clustering method in  doubling metric.

\textbf{Geometric alignment.} Computing the geometric alignment of $A$ and $B$ is more challenging, since we need to determine the rigid transformation and the Wasserstein flow simultaneously. Moreover, due to the flexibility of rigid transformations, we cannot apply the Wasserstein distance embedding techniques~\cite{IT03,andoni2009efficient} to resolve the challenges. For example,  the embedding can only preserve the Wasserstein distance between $A$ and $B$; however, since there are infinite number of possible rigid transformations $\mathcal{T}$ for $B$ (note that we do not know $\mathcal{T}$ in advance), it is difficult to also preserve the Wasserstein distance between $A$ and $\mathcal{T}(B)$. In theory, 
Cabello {\em et al.}~\cite{cabello2008matching} presented a $(2+\epsilon)$-approximation solution for the 2D case, and later Klein and  Veltkamp~\cite{klein2005approximation} achieved an $O(2^{d-1})$-approximation in $\mathbb{R}^d$; 
Ding and Xu~\cite{DBLP:journals/algorithmica/DingX17} proposed a PTAS for constant dimensional space. However, these theoretical algorithms cannot be efficiently implemented when the dimensionality is not constant. 
It was also mentioned in~\cite{DBLP:journals/algorithmica/DingX17}  that any constant factor approximation needs a time complexity at least $n^{\Omega(d)}$ based on some reasonable assumption in the theory of computational complexity, where $n=\max\{|A|, |B|\}$. That is, it is unlikely to obtain a $(1+\epsilon)$-approximation within a practical running time, especially when $n$ is very large.  
The recent works \cite{DBLP:journals/corr/NasserJF15,DBLP:journals/corr/abs-2101-03588} proposed a core-set based compression approach to speed up the computation of alignment. However,  their compression methods only work for the case that  $d$ is small. We also refer the reader to the recent surveys~\cite{DBLP:journals/ki/MunteanuS18,DBLP:journals/widm/Feldman20} for the detailed introductions on coresets.

In practice, Cohen and  Guibas~\cite{cohen1999earth} proposed an alternating minimization approach for computing the geometric alignment of $A$ and $B$. 
Several other approaches~\cite{cornea20053d,todorovic2008region} based on graph matching are inappropriate to be extended for high dimensional alignment. 
In machine learning, a related topic is called ``manifold alignment''~\cite{ham2005semisupervised,wang2011manifold}; however, it usually has different settings and applications, and thus is out of the scope of this paper. 

\subsection{Overview of The Alignment Procedure}
\label{sec-oa}

Because the approach of~\cite{cohen1999earth}  is closely related to our proposed algorithm, we introduce it with more details for the sake of completeness. 
Roughly speaking, their approach is similar to the {\em Iterative Closest Point method} (ICP) method~\cite{besl1992method}, where in each iteration it  alternatively updates the Wasserstein distance flow and the rigid transformation. Thus it converges to some local optimum eventually. 
To update the rigid transformation, we can apply the {\em Orthogonal Procrustes (OP) analysis}~\cite{schonemann1966generalized}. The original OP analysis is only for unweighted point sets, and the weighted OP analysis was studied in the Wahba's problem~\cite{wahba1965least}. But our problem with the Wasserstein distance flow is a special case of the weighted OP analysis, and thus the complexity can be further reduced via a more careful analysis below.


Let $A$ and $B$ be the two sets of weighted points for alignment (as Definition~\ref{def-align}). Suppose that the Wasserstein distance flow $F=\{f_{ij}\}$ is fixed and the rigid transformation is waiting to update in the current stage.
We imagine that there exist two new sets of weighted points  
\begin{eqnarray}
	\hat{A}&=&\cup^{n_1}_{i=1}\{a^1_i, a^2_i,\cdots, a^{n_2}_i\};\label{for-cg1}\\
	\hat{B}&=&\cup^{n_2}_{j=1}\{b^1_j, b^2_j,\cdots, b^{n_1}_j\},\label{for-cg2}
\end{eqnarray}
where each $a^j_i$ ({\em resp.,} $b^i_j$) has the weight $f_{ij}$ and the same spatial position of  $a_i$ ({\em resp.,} $b_j$). That is, each $a_i$ ({\em resp.,} $b_j$) is decomposed into $n_2$ ({\em resp.,} $n_1$) copies.  
First, we take a translation vector $\overrightarrow{v}$ such that the weighted mean points of $\hat{A}$ and $\hat{B}+\overrightarrow{v}$ coincide with each other (this can be easily derived, due to the fact that the objective function uses squared distance~\cite{cohen1999earth}). Second, by using the weighted OP analysis, we compute an orthogonal matrix $\mathcal{R}$ for $\hat{B}+\overrightarrow{v}$ to minimize its weighted $L^2_2$ difference to $\hat{A}$. For this purpose, we generate two $d\times (n_1 n_2)$ matrices $\mathbb{M}_A$ and $\mathbb{M}_B$ as follows.

We use $\mathbbm{a}_i$ ({\em resp.,} $\mathbbm{b}_j$) to denote the corresponding $d$-dimensional column vector of $a_i$ ({\em resp.,} $b_j$).
Each point of $\hat{A}$ ({\em resp.,} $\hat{B}+\overrightarrow{v}$) corresponds to an individual column of $\mathbb{M}_A$ ({\em resp.,} $\mathbb{M}_B$); for example, a point $a^j_i\in \hat{A}$ ({\em resp.,} $b^i_j+\overrightarrow{v}\in \hat{B}+\overrightarrow{v}$) corresponds to a column $\sqrt{f_{ij}}\mathbbm{a}_i$ ({\em resp.,} $\sqrt{f_{ij}}(\mathbbm{b}_j+\overrightarrow{v})$) in $\mathbb{M}_A$ ({\em resp.,} $\mathbb{M}_B$). Formally, 
\begin{eqnarray}
	\mathbb{M}_A&=&\big[\sqrt{f_{11}}\mathbbm{a}_1, \sqrt{f_{12}}\mathbbm{a}_1,\cdots, \sqrt{f_{1n_2}}\mathbbm{a}_1,\sqrt{f_{21}}\mathbbm{a}_2, \sqrt{f_{22}}\mathbbm{a}_2,\cdots, \sqrt{f_{2n_2}}\mathbbm{a}_2,\nonumber\\
	&&\cdots,\sqrt{f_{n_1 1}}\mathbbm{a}_{n_1}, \sqrt{f_{n_1 2}}\mathbbm{a}_{n_1},\cdots, \sqrt{f_{n_1 n_2}}\mathbbm{a}_{n_1}\big];\label{for-cg1-matrix}\\
	\mathbb{M}_B&=&\big[\sqrt{f_{11}}(\mathbbm{b}_1+\overrightarrow{v}), \sqrt{f_{12}}(\mathbbm{b}_2+\overrightarrow{v}),\cdots, \sqrt{f_{1n_2}}(\mathbbm{b}_{n_2}+\overrightarrow{v}), \nonumber \\
	&&\sqrt{f_{21}}(\mathbbm{b}_1+\overrightarrow{v}), \sqrt{f_{22}}(\mathbbm{b}_2+\overrightarrow{v}),\cdots, \sqrt{f_{2n_2}}(\mathbbm{b}_{n_2}+\overrightarrow{v}),\nonumber\\
	&&\cdots,\sqrt{f_{n_1 1}}(\mathbbm{b}_1+\overrightarrow{v}), \sqrt{f_{n_1 2}}(\mathbbm{b}_2+\overrightarrow{v}),\cdots, \sqrt{f_{n_1 n_2}}(\mathbbm{b}_{n_2}+\overrightarrow{v})\big].\label{for-cg2-matrix}
\end{eqnarray}
Let  the SVD of $\mathbb{M}_A\times \mathbb{M}^T_B$ be $U\Sigma V^T$, and the optimal orthogonal matrix $\mathcal{R}$ should be $UV^T$ through the OP analysis.

The above idea has a drawback that the matrices $\mathbb{M}_A$ and $\mathbb{M}_B$ are too large and thus the complexity for computing the multiplication $\mathbb{M}_A\times \mathbb{M}^T_B$ can be very high (which is $O(n_1n_2 d^2)$).  Actually we do not need to really construct the large matrices $\mathbb{M}_A$ and $\mathbb{M}_B$, since many of the columns are identical (except for the scalar $\sqrt{f_{ij}}$).  Instead, we can compute the multiplication $\mathbb{M}_A\times \mathbb{M}^T_B$ in $O(n_1n_2d+\min\{n_1, n_2\}\cdot d^2)$ time. \textbf{Note:} Both the complexities ``$O(n_1n_2 d^2)$'' and ``$O(n_1n_2d+\min\{n_1, n_2\}\cdot d^2)$'' can be slightly improved by using the faster rectangle matrix multiplication algorithms~\cite{le2012faster}, but it needs a somewhat complicated discussion on the relation of the values $n_1$, $n_2$, and $d$, and so it is out of the scope of this paper.

\begin{claim}
	\label{lem-app}
	The multiplication $\mathbb{M}_{A}\times \mathbb{M}_{B}^{T}$ can be computed in $O(n_{1}n_{2}d+\min\{n_{1},n_{2}\}\cdot d^{2})$ time.
\end{claim}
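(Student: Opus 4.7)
The plan is to exploit the fact that although $\mathbb{M}_A$ and $\mathbb{M}_B$ each have $n_1 n_2$ columns, these columns involve only $n_1$ (respectively $n_2$) distinct directions $\mathbbm{a}_i$ (respectively $\mathbbm{b}_j+\overrightarrow{v}$), each rescaled by the various $\sqrt{f_{ij}}$ factors. First I would expand the product as a double sum of rank-one matrices, using the standard identity that $\mathbb{M}_A \mathbb{M}_B^T$ equals the sum of the outer products of matching columns of $\mathbb{M}_A$ and $\mathbb{M}_B$:
$$\mathbb{M}_A \mathbb{M}_B^T \;=\; \sum_{i=1}^{n_1}\sum_{j=1}^{n_2} \bigl(\sqrt{f_{ij}}\,\mathbbm{a}_i\bigr)\bigl(\sqrt{f_{ij}}(\mathbbm{b}_j+\overrightarrow{v})\bigr)^{\!T} \;=\; \sum_{i=1}^{n_1}\sum_{j=1}^{n_2} f_{ij}\,\mathbbm{a}_i(\mathbbm{b}_j+\overrightarrow{v})^{\!T}.$$
Assuming without loss of generality that $n_1\leq n_2$, I would then pull the outer index out of the inner sum:
$$\mathbb{M}_A \mathbb{M}_B^T \;=\; \sum_{i=1}^{n_1} \mathbbm{a}_i\, \mathbbm{c}_i^{\!T}, \qquad \text{where } \mathbbm{c}_i \;:=\; \sum_{j=1}^{n_2} f_{ij}\,(\mathbbm{b}_j+\overrightarrow{v}) \;\in\; \mathbb{R}^d.$$

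The algorithm then splits naturally into two phases. Phase 1 computes each $\mathbbm{c}_i$ as a weighted sum of $n_2$ vectors in $\mathbb{R}^d$, taking $O(n_2 d)$ per index $i$ and $O(n_1 n_2 d)$ in total. Phase 2 accumulates the $n_1$ outer products $\mathbbm{a}_i \mathbbm{c}_i^{T}$; each fills a $d\times d$ matrix in $O(d^2)$ time, for a total of $O(n_1 d^2)$. Summing the two phases yields $O(n_1 n_2 d + n_1 d^2)$. If instead $n_2 < n_1$, then swapping the roles of $A$ and $B$ (grouping by $j$ rather than $i$) gives $O(n_1 n_2 d + n_2 d^2)$, so altogether we achieve the advertised $O(n_1 n_2 d + \min\{n_1, n_2\}\cdot d^2)$ bound.

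There is no substantive obstacle here: the result is essentially a bookkeeping observation that the enormous matrices $\mathbb{M}_A$ and $\mathbb{M}_B$ have rank at most $\min\{n_1, n_2\}$ in a structured way, so they never need to be materialized. The only minor care point is that the translation $\overrightarrow{v}$ is a single fixed vector, so precomputing $\mathbbm{b}_j+\overrightarrow{v}$ for every $j$ costs only $O(n_2 d)$ and is absorbed into Phase 1; all other bookkeeping is routine.
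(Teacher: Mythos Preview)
Your proof is correct and essentially the same as the paper's: the paper observes the identity $\mathbb{M}_A\mathbb{M}_B^T=\mathbb{A}F\mathbb{B}^T$ (with $\mathbb{A}=[\mathbbm{a}_1,\ldots,\mathbbm{a}_{n_1}]$, $\mathbb{B}=[\mathbbm{b}_1+\overrightarrow{v},\ldots,\mathbbm{b}_{n_2}+\overrightarrow{v}]$) and then chooses the multiplication order $\mathbb{A}(F\mathbb{B}^T)$ or $(\mathbb{A}F)\mathbb{B}^T$ depending on which of $n_1,n_2$ is smaller, which is exactly your Phase~1/Phase~2 computation written out as explicit sums rather than in matrix form.
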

\begin{proof}
	With a slight abuse of notations, we also use $F$ to denote the $n_1\times n_2$ matrix of the Wasserstein distance flow where each entry is $f_{ij}$; also, ``$F_{i,:}$'' represents the $i$-th row of the matrix $F$. Given a vector $t$, we use $\sqrt{t}$ to denote the new vector with each entry being the square root of the corresponding one in $t$. Also, we use $\mathtt{diag}(t)$ to denote the diagonal matrix where the $i$-th diagonal entry is the $i$-th entry of $t$. For example, if $t=[t_1, t_2, \cdots, t_n]$, then $\sqrt{t}=[\sqrt{t_1}, \sqrt{t_2}, \cdots, \sqrt{t_n}]$ and 
	$$\mathtt{diag}(\sqrt{t})=
	\begin{bmatrix}
		\sqrt{t_1},~~~0~,~~~0~,\cdots,~~~0~\\
		~~~0~,\sqrt{t_2},~~~0~,\cdots,~~~0~ \\
		~~~0~,~~~0~, \sqrt{t_3},\cdots,~~~0~ \\
		\cdots,\cdots,\cdots ~ \\
		~~~~~0~,~~~0~,~~~0~,\cdots,\sqrt{t_n} \\
	\end{bmatrix}.
	$$

	Following the constructions of $\mathbb{M}_A$ and $\mathbb{M}_B$ with some simple calculation, we have 
	\begin{eqnarray}
		\mathbb{M}_A&=&\big[\sqrt{f_{11}}\mathbbm{a}_1, \sqrt{f_{12}}\mathbbm{a}_1,\cdots, \sqrt{f_{1n_2}}\mathbbm{a}_1,\sqrt{f_{21}}\mathbbm{a}_2, \sqrt{f_{22}}\mathbbm{a}_2,\cdots, \sqrt{f_{2n_2}}\mathbbm{a}_2,\nonumber\\
		&&\cdots,\sqrt{f_{n_1 1}}\mathbbm{a}_{n_1}, \sqrt{f_{n_1 2}}\mathbbm{a}_{n_1},\cdots, \sqrt{f_{n_1 n_2}}\mathbbm{a}_{n_1}\big]\nonumber\\
		&=&\big[\mathbbm{a}_{1}\sqrt{F_{1,:}},\mathbbm{a}_{2}\sqrt{F_{2,:}},\cdots,\mathbbm{a}_{n_{1}}\sqrt{F_{n_{1},:}}\big];\nonumber\\
		\mathbb{M}_B&=&\big[\sqrt{f_{11}}(\mathbbm{b}_1+\overrightarrow{v}), \sqrt{f_{12}}(\mathbbm{b}_2+\overrightarrow{v}),\cdots, \sqrt{f_{1n_2}}(\mathbbm{b}_{n_2}+\overrightarrow{v}),\nonumber\\
		&&\sqrt{f_{21}}(\mathbbm{b}_1+\overrightarrow{v}), \sqrt{f_{22}}(\mathbbm{b}_2+\overrightarrow{v}),\cdots, \sqrt{f_{2n_2}}(\mathbbm{b}_{n_2}+\overrightarrow{v}),\nonumber\\
		&&\cdots,\sqrt{f_{n_1 1}}(\mathbbm{b}_1+\overrightarrow{v}), \sqrt{f_{n_1 2}}(\mathbbm{b}_2+\overrightarrow{v}),\cdots, \sqrt{f_{n_1 n_2}}(\mathbbm{b}_{n_2}+\overrightarrow{v})\big]\nonumber\\
		&=&\big[\mathbbm{b}_1+\overrightarrow{v},\cdots,\mathbbm{b}_{n_2}+\overrightarrow{v}\big]\times\big[\mathtt{diag}(\sqrt{F_{1,:}}),\mathtt{diag}(\sqrt{F_{2,:}}),\cdots,\mathtt{diag}(\sqrt{F_{n_{1},:}})\big].\nonumber
	\end{eqnarray}
	For simplicity, we let $\mathbb{A}=\big[\mathbbm{a}_1,\cdots, \mathbbm{a}_{n_1}\big]$ and $\mathbb{B}=\big[\mathbbm{b}_1+\overrightarrow{v},\cdots,\mathbbm{b}_{n_2}+\overrightarrow{v}\big]$. 
	Then,
	\begin{align*}
		\mathbb{M}_{A}\times \mathbb{M}_{B}^{T}&=\sum_{i=1}^{n_{1}}\big(\mathbbm{a}_{i}\sqrt{F_{i,:}}\big)\times\big(\mathtt{diag}(\sqrt{F_{i,:}})\mathbb{B}^{T}\big)\\
		&=\sum_{i=1}^{n_{1}}\mathbbm{a}_{i}F_{i,:}\mathbb{B}^{T}=\mathbb{A}F\mathbb{B}^{T}.
	\end{align*}
	The sizes of $\mathbb{A}$, $F$, and $\mathbb{B}$ are $d\times n_1$, $n_1\times n_2$, and $d\times n_2$, respectively. So 
	it is easy to see that computing ``$\mathbb{A}F\mathbb{B}^{T}$'' takes $O(n_{1}n_{2}d+\min\{n_{1},n_{2}\}\cdot d^{2})$ time ({\em e.g.,} if $n_1\geq n_2$, we compute $\mathbb{A}F$ first and then compute $(\mathbb{A}F)\times \mathbb{B}^{T}$).
\end{proof}

Therefore, the time complexity for obtaining the optimal $\mathcal{R}$ is $O(n_1n_2d+\min\{n_1, n_2\}\cdot d^2+d^3)$, where the term ``$d^3$'' comes from the computation of the SVD.

\begin{proposition}
	\label{pro-cg}
	Each iteration of the approach of~\cite{cohen1999earth} takes $\Gamma(n_1, n_2, d)+O(n_1n_2d+\min\{n_1, n_2\}\cdot d^2+d^3)$ time, where $\Gamma(n_1, n_2, d)$ denotes the time complexity of the used Wasserstein distance algorithm. In practice, we usually assume $n_1, n_2=O(n)$ with some $n\geq d$, and then the complexity can be simply written as $\Gamma(n, d)+O(n^2d)$.
\end{proposition}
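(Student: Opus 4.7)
The plan is to decompose a single iteration of the Cohen--Guibas alternating procedure into its three constituent subroutines and bound each one separately, then sum the costs. The three subroutines are: (i) recomputing the Wasserstein flow $F=\{f_{ij}\}$ for the current positions of $A$ and $\mathcal{T}(B)$; (ii) updating the translation component of $\mathcal{T}$ so that the weighted means of $\hat A$ and $\hat B+\overrightarrow{v}$ coincide; and (iii) updating the rotation component by the weighted Orthogonal Procrustes step, which reduces to forming $\mathbb{M}_A\times \mathbb{M}_B^T$ and taking its SVD.

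For (i), the cost is exactly $\Gamma(n_1,n_2,d)$ by definition, so nothing needs to be proved. For (ii), the optimal translation when the ground distance is squared Euclidean is just the difference of the weighted centroids, as noted in the overview; computing both centroids scans all $f_{ij}$ and the $a_i,b_j$ coordinates once, which costs $O(n_1 n_2+(n_1+n_2)d)$ and is absorbed into $O(n_1 n_2 d)$. For (iii), Claim~\ref{lem-app} already shows that $\mathbb{M}_A\times \mathbb{M}_B^T$ can be computed in $O(n_1n_2d+\min\{n_1,n_2\}\cdot d^2)$ time, producing a $d\times d$ matrix; applying SVD to this $d\times d$ matrix and forming $\mathcal{R}=UV^T$ takes $O(d^3)$ additional time.

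Summing these three contributions yields the stated bound
\[
\Gamma(n_1,n_2,d)+O\bigl(n_1 n_2 d+\min\{n_1,n_2\}\cdot d^2+d^3\bigr).
\]
The second statement is then immediate: under $n_1,n_2=O(n)$ with $n\ge d$, the term $\min\{n_1,n_2\}\cdot d^2$ is dominated by $n^2 d$ and $d^3\le n^2 d$, so the additive overhead collapses to $O(n^2 d)$, giving $\Gamma(n,d)+O(n^2d)$.

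There is no real obstacle here, since Claim~\ref{lem-app} already handles the only nontrivial bound; the proposition is essentially a bookkeeping statement that sums the per-step costs of the three subroutines of one Cohen--Guibas iteration. The only thing to be slightly careful about is to observe explicitly that the translation update is $O(n_1 n_2 d)$ (or cheaper) so it does not dominate, and that the SVD is applied to the already-formed $d\times d$ matrix rather than to the huge matrices $\mathbb{M}_A,\mathbb{M}_B$, so its cost is genuinely $O(d^3)$ and not larger.
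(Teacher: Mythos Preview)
Your proposal is correct and follows essentially the same approach as the paper: the paper does not give a separate formal proof of this proposition but instead derives it in the surrounding discussion by summing the cost $\Gamma(n_1,n_2,d)$ of the Wasserstein step, the $O(n_1n_2d+\min\{n_1,n_2\}\cdot d^2)$ bound from Claim~\ref{lem-app} for forming $\mathbb{M}_A\mathbb{M}_B^T$, and the $O(d^3)$ cost of the SVD. Your treatment is slightly more explicit about the translation step, but the argument is the same.
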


The bottleneck is that the algorithm needs to repeatedly compute the Wasserstein distance and transformation, especially when $n$ and $d$ are large (usually $\Gamma(n,d)=\Omega(n^2 d)$). Based on the property of low doubling dimension, we construct a pair of compressed point sets to replace the original $A$ and $B$, and run the same algorithm on the compressed data instead. As a consequence, the running time is reduced significantly. Note that our compression step is \textbf{independent of} the approach~\cite{cohen1999earth}; actually, any alignment method with the same objective function in Definition~\ref{def-align} can benefit from our compression idea.

\section{The Algorithm and Analysis}
\label{sec-aa}
Our idea starts from the widely studied 
$k$-center clustering problem.
Given an integer $k\geq 1$ and a point set $P$ in some metric space, the $k$-center clustering is to partition $P$ into $k$ clusters and cover each cluster by an individual ball, such that the maximum radius of the balls is minimized. Gonzalez~\cite{gonzalez1985clustering} presented an elegant $2$-approximation algorithm, where the radius of each resulting ball ({\em i.e.,} cluster) is at most two times the optimum. Initially, it  sets $S=\{c_1\}$ where $c_1$ is an arbitrary point selected from $P$; then in each of the following $k-1$ iterations,  a new point who has the largest distance to $S$ among the points of $P$ is added to $S$ (we define the distance between a point $q$ and $S$ to be $\min\{||q-p||\mid p\in S\}$). Let $S=\{c_1, \cdots, c_k\}$, and then $P$ is covered by the $k$ balls $\mathtt{Ball}(c_1, r), \cdots, \mathtt{Ball}(c_k, r)$ with  
\begin{eqnarray}
	r\leq\min\{||c_i-c_j||\mid 1\leq i\neq j\leq k\}.\label{for-2pairwise} 
\end{eqnarray}
It is easy to prove that $r$ is at most two times the optimal radius of the given instance.

Let $P$ be a point set in $ \mathbb{R}^d$ with the doubling dimension $\rho$. The diameter of $P$ is denoted by $\Delta$, {\em i.e.}, $\Delta=\max\{||p-q||\mid p, q\in P\}$.
Then we have the following lemma.

\begin{lemma}
	\label{lem-number}
	Given an integer $k\geq 1$, if one runs the Gonzalez's $k$-center clustering algorithm,  the obtained radii of  the clusters are at most $\frac{2}{k^{1/\rho}}\Delta$. 
\end{lemma}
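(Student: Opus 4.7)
The plan is to combine the separation property of Gonzalez's output with the packing bound implied by a low doubling dimension. First, I would strengthen equation~(\ref{for-2pairwise}) by a ``ghost'' extra step: if one continues the greedy procedure for one more iteration, the point $c_{k+1}$ that would be chosen lies at distance exactly $r$ from $\{c_1,\dots,c_k\}$, and since the Gonzalez ``farthest-point'' quantities $\delta_j:=\min_{i<j}\|c_j-c_i\|$ are monotonically non-increasing in $j$, every pair among $\{c_1,\dots,c_{k+1}\}$ has distance at least $r$. So I start with $k+1$ points in $P$ that are pairwise $r$-separated.

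Second, I would use the doubling dimension hypothesis to produce a very economical cover of $P$. Pick any $c_1\in P$; since $P$ has diameter $\Delta$, we have $P\subseteq \mathtt{Ball}(c_1,\Delta)$. Iterating Definition~\ref{def-dd} exactly $t$ times, $\mathtt{Ball}(c_1,\Delta)$ is covered by at most $2^{t\rho}$ balls of radius $\Delta/2^t$; re-parameterising by $\alpha=2^t$, this gives a covering by at most $\alpha^\rho$ balls of radius $\Delta/\alpha$. Setting $\alpha=k^{1/\rho}$ produces a covering of $P$ by at most $k$ balls, each of radius $\Delta/k^{1/\rho}$.

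Third, I would finish by pigeonhole: the $k+1$ pairwise $r$-separated centers lie in at most $k$ covering balls, so two of them, $c_i$ and $c_j$, fall into a single ball of radius $\Delta/k^{1/\rho}$. Therefore $\|c_i-c_j\|\leq 2\Delta/k^{1/\rho}$, and combining this with the extended version of~(\ref{for-2pairwise}) yields $r\leq 2\Delta/k^{1/\rho}$, as required.

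The main obstacle is the clean handling of the parameter $\alpha=k^{1/\rho}$ in the doubling property: the literal statement of Definition~\ref{def-dd} is dyadic, so when $k^{1/\rho}$ is not a power of $2$ one must round $t=\lceil(\log_2 k)/\rho\rceil$ and argue that the rounding is harmless (this is the standard ``continuous'' form of the doubling inequality, obtained from the dyadic one at the cost of at most a constant factor that can be absorbed, or folded into the claimed constant $2$ by choosing the rounding direction carefully). Everything else is structural, once the extra ``ghost'' center $c_{k+1}$ has been put in place.
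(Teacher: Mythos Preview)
Your argument is sound in outline and close in spirit to the paper's, but it is organized differently and the rounding issue you flag is not as harmless as you suggest. The paper does \emph{not} introduce a ghost center or run a pigeonhole against an explicit covering. Instead it works directly with the $k$ centers $S=\{c_1,\dots,c_k\}$: by~(\ref{for-2pairwise}) the aspect ratio of $S$ is at most $\Delta/r$, and then the standard packing bound for doubling metrics (stated as Claim~\ref{claim-compress}) gives $k=|S|\le 2^{\rho\lceil\log_2(\Delta/r)\rceil}\le (2\Delta/r)^{\rho}$, from which $r\le 2\Delta/k^{1/\rho}$ follows immediately. The single ceiling in Claim~\ref{claim-compress} is exactly what produces the factor~$2$.

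In your route, by contrast, you need the covering to have at most $k$ balls (so that $k+1$ separated points force a collision). With the dyadic definition this forces $t=\lfloor(\log_2 k)/\rho\rfloor$, and the resulting ball radius is only guaranteed to be $\le 2\Delta/k^{1/\rho}$, not $\le \Delta/k^{1/\rho}$; the pigeonhole then yields $r\le 4\Delta/k^{1/\rho}$. Rounding ``in the other direction'' (taking $t=\lceil(\log_2 k)/\rho\rceil$) shrinks the radius but can produce up to $2^{\rho}k$ balls, and pigeonhole no longer applies. So the extra factor of~$2$ cannot simply be folded away; your argument proves the lemma with constant~$4$ rather than~$2$. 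If you want the stated constant, drop the ghost center and pigeonhole and instead bound $|S|$ from above via the aspect-ratio packing lemma, exactly as the paper does: there the ceiling goes the helpful way and the constant~$2$ falls out directly.
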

\begin{proof}
	Let $S$ be the set of $k$ points obtained by the Gonzalez's algorithm, and the obtained radius be $r$. We also define the aspect ratio of $S$ as the ratio of the maximum to the  minimum pairwise distance in $S$. Then, it is easy to see that the aspect ratio of $S$ is at most $\Delta/r$ from (\ref{for-2pairwise}). Now, we need the following Claim~\ref{claim-compress} from~\cite{krauthgamer2004navigating,talwar2004bypassing}. Actually, the claim can be obtained by recursively applying the definition of doubling dimension.
	
		
		\begin{claim}
			\label{claim-compress}
			Let $(X, d_X)$ be a metric space with the doubling dimension $\rho$, and $Y\subset X$. If the aspect ratio of $Y$ is upper bounded by some positive value $\alpha$, then $|Y|\leq 2^{\rho\lceil\log_2 \alpha\rceil}$.
		\end{claim}
	
	Replacing $X$ and $Y$ by $P$ and $S$ respectively in the above claim, we have
	\begin{eqnarray}
		|S|\leq 2^{\rho\lceil\log_2\Delta/r\rceil}\leq 2^{\rho (1+\log_2\Delta/r)}. \label{for-number1} 
	\end{eqnarray}
	Since $|S|=k$, (\ref{for-number1}) implies $r\leq \frac{2}{k^{1/\rho}}\Delta$.
\end{proof}

\begin{corollary}
	\label{rem-k}
	If we let $k=(\frac{2}{\epsilon})^\rho$ with some small $\epsilon>0$, the radius in Lemma~\ref{lem-number} will be $\epsilon\Delta$.
	%
	%
	%
\end{corollary}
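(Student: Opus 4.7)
The plan is to derive this corollary as an immediate substitution into the bound of Lemma~\ref{lem-number}. That lemma tells us that running the Gonzalez $k$-center algorithm on $P$ produces clusters of radius at most $\frac{2}{k^{1/\rho}}\Delta$, so the only task is to choose $k$ so that this expression equals $\epsilon\Delta$. Setting $k=(2/\epsilon)^{\rho}$ yields $k^{1/\rho}=2/\epsilon$, and hence $\frac{2}{k^{1/\rho}}\Delta=\epsilon\Delta$, which is precisely the claim.

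The only subtlety I would flag is that $k$ must be a positive integer to feed into Gonzalez's algorithm, while $(2/\epsilon)^{\rho}$ need not be integral for arbitrary $\epsilon>0$ and doubling dimension $\rho$. The standard fix is to take $k=\lceil (2/\epsilon)^{\rho}\rceil$; since $\lceil (2/\epsilon)^{\rho}\rceil\geq (2/\epsilon)^{\rho}$, we still obtain $k^{1/\rho}\geq 2/\epsilon$ and therefore the radius bound $\frac{2}{k^{1/\rho}}\Delta\leq \epsilon\Delta$ continues to hold. I would either include this rounding remark explicitly or simply state, as the corollary does, that we ``let $k=(2/\epsilon)^{\rho}$'' with the implicit understanding that integer rounding only helps.

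Because the argument is a one-line calculation, there is no real obstacle. The substantive work was already done inside Lemma~\ref{lem-number} (which in turn invoked Claim~\ref{claim-compress} about the aspect ratio of a metric with bounded doubling dimension). So my write-up would consist of at most two sentences: restate the bound from Lemma~\ref{lem-number}, and observe that plugging in $k=(2/\epsilon)^{\rho}$ collapses it to $\epsilon\Delta$.
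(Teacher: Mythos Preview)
Your proposal is correct and matches the paper's approach: the paper states the corollary without proof, treating it as an immediate substitution of $k=(2/\epsilon)^\rho$ into the bound $r\leq \frac{2}{k^{1/\rho}}\Delta$ from Lemma~\ref{lem-number}. Your additional remark about rounding $k$ to an integer is a reasonable technical observation that the paper omits.
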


\textbf{Our compression algorithm.} Let $A$ and $B$ be the two given point sets in Definition~\ref{def-align}, and we assume their diameters are $\Delta_A$ and $\Delta_B$ respectively. Let $\Delta=\max\{\Delta_A, \Delta_B\}$. We also assume that they both have the doubling dimension at most $\rho$. Our idea for compressing $A$ and $B$ is as follows. As described in Lemma~\ref{lem-number}, we  run the Gonzalez's algorithm on $A$ and $B$ respectively. We denote by $S_A=\{c^A_1,\cdots, c^A_k\}$ and $S_B=\{c^B_1,\cdots, c^B_k\}$ the obtained sets of $k$-cluster centers with $k=(\frac{2}{\epsilon})^\rho$. For each cluster center $c^A_j$ ({\em resp.,} $c^B_j$), we assign a weight that is equal to the total weight of the points in the corresponding cluster. As a consequence, we obtain a new instance $(S_A, S_B)$ for geometric alignment. It is easy to know that the total weight of $S_A$ ({\em resp.,} $S_B$) is equal to $W_A$ ({\em resp.,} $W_B$). For the sake of convenience, we name this compression method as \textsc{KCenter}. The following theorem shows that one can achieve an approximate solution for the instance $(A, B)$ by solving the alignment of $(S_A, S_B)$.


\begin{theorem}
	\label{the-quality}
	Suppose $\epsilon>0$ is a small parameter in Corollary~\ref{rem-k}. Given any $c\geq 1$, let $\tilde{\mathcal{T}}$ be a rigid transformation yielding $c$-approximation for minimizing $\mathcal{W}^2_2\big(S_A, \mathcal{T}(S_B)\big)$ in Definition~\ref{def-align}. Then,  
	\begin{eqnarray}
		\mathcal{W}^2_2\big(A, \tilde{\mathcal{T}}(B)\big)&\leq& c(1+2\epsilon)^2\cdot\min_\mathcal{T}\mathcal{W}^2_2\big(A, \mathcal{T}(B)\big)+2\epsilon(c+1+2c\epsilon)(1+2\epsilon)\Delta^2\nonumber\\
		&=& c\big(1+O(\epsilon)\big)\cdot\min_\mathcal{T}\mathcal{W}^2_2\big(A, \mathcal{T}(B)\big)+2\epsilon \big(1+O(\epsilon)\big)(c+1)\Delta^2. \label{for-quality1}
	\end{eqnarray}
\end{theorem}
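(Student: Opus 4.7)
\smallskip

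\noindent\textbf{Proof plan.} The strategy is to show that compression changes the Wasserstein alignment objective by only a multiplicative $(1+O(\epsilon))$ factor and an additive $O(\epsilon)\Delta^2$ error, \emph{uniformly over all rigid transformations $\mathcal{T}$}. Once this ``transformation-oblivious'' two-sided distortion bound is in place, the theorem follows by a clean three-line chain: apply the upper bound at the unknown optimum $\mathcal{T}^*$ for the original instance, invoke the $c$-approximation guarantee on the compressed instance for $\tilde{\mathcal{T}}$, and then apply the lower bound at $\tilde{\mathcal{T}}$ to go back to the original instance.

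The core lemma I would prove is the following pair of inequalities, valid for every rigid transformation $\mathcal{T}$:
\begin{align*}
\mathcal{W}^2_2(S_A,\mathcal{T}(S_B)) &\leq (1+2\epsilon)\,\mathcal{W}^2_2(A,\mathcal{T}(B))+2\epsilon(1+2\epsilon)\Delta^2,\\
\mathcal{W}^2_2(A,\mathcal{T}(B)) &\leq (1+2\epsilon)\,\mathcal{W}^2_2(S_A,\mathcal{T}(S_B))+2\epsilon(1+2\epsilon)\Delta^2.
\end{align*}
For the first, take an optimal flow $F^{*}=\{f^{*}_{ab}\}$ realizing $\mathcal{W}^2_2(A,\mathcal{T}(B))$ and \emph{aggregate} it cluster-by-cluster: set $\hat{f}_{ij}=\sum_{a\in\mathrm{cl}^A_i,\,b\in\mathrm{cl}^B_j}f^{*}_{ab}$. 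The marginal constraints $\sum_j \hat{f}_{ij}\leq W^A_i$, $\sum_i \hat{f}_{ij}\leq W^B_j$, and total mass $\min\{W_A,W_B\}$ follow immediately from those of $F^{*}$, so $\hat F$ is feasible for $(S_A,\mathcal{T}(S_B))$. For the second, in reverse, take an optimal $\hat F^{*}$ on the compressed sets and \emph{disaggregate} it proportionally, $f_{ab}=\hat f^{*}_{ij}\cdot\frac{\alpha_a\beta_b}{W^A_i W^B_j}$ for $a\in\mathrm{cl}^A_i$, $b\in\mathrm{cl}^B_j$; a direct calculation shows $\sum_{j,b}f_{ab}\leq\alpha_a\cdot\sum_j\hat f^{*}_{ij}/W^A_i\leq\alpha_a$, and symmetrically for the $b$-marginals.

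The cost comparison in both directions uses the two crucial ingredients given by Corollary~\ref{rem-k} and the definition of rigid transformation: for $a\in\mathrm{cl}^A_i$ and $b\in\mathrm{cl}^B_j$, $\|a-c^A_i\|\leq\epsilon\Delta$, and $\|\mathcal{T}(b)-\mathcal{T}(c^B_j)\|=\|b-c^B_j\|\leq\epsilon\Delta$ since $\mathcal{T}$ is an isometry. A triangle inequality then gives $\big|\|a-\mathcal{T}(b)\|-\|c^A_i-\mathcal{T}(c^B_j)\|\big|\leq 2\epsilon\Delta$, and I would apply the elementary bound $(x+y)^2\leq(1+\alpha)x^2+(1+1/\alpha)y^2$ with $\alpha=2\epsilon$ and $y=2\epsilon\Delta$ to obtain
$$\|a-\mathcal{T}(b)\|^2\leq(1+2\epsilon)\|c^A_i-\mathcal{T}(c^B_j)\|^2+2\epsilon(1+2\epsilon)\Delta^2,$$
and the analogous inequality with $a,\mathcal{T}(b)$ and $c^A_i,\mathcal{T}(c^B_j)$ swapped. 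Weighting by the (aggregated or disaggregated) flows, summing, using $\sum f^{*}_{ab}=\sum\hat f^{*}_{ij}=\min\{W_A,W_B\}$, and dividing by $\min\{W_A,W_B\}$ yields the two displayed inequalities.

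To close the argument, let $\mathcal{T}^{*}$ achieve $\min_{\mathcal{T}}\mathcal{W}^2_2(A,\mathcal{T}(B))=:\mathrm{OPT}$. The first inequality at $\mathcal{T}^{*}$ gives $\mathcal{W}^2_2(S_A,\mathcal{T}^{*}(S_B))\leq(1+2\epsilon)\mathrm{OPT}+2\epsilon(1+2\epsilon)\Delta^2$. Since $\tilde{\mathcal{T}}$ is $c$-approximate for the compressed instance, $\mathcal{W}^2_2(S_A,\tilde{\mathcal{T}}(S_B))\leq c\cdot\mathcal{W}^2_2(S_A,\mathcal{T}^{*}(S_B))$. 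Finally, the second inequality at $\tilde{\mathcal{T}}$ lifts back: $\mathcal{W}^2_2(A,\tilde{\mathcal{T}}(B))\leq(1+2\epsilon)\mathcal{W}^2_2(S_A,\tilde{\mathcal{T}}(S_B))+2\epsilon(1+2\epsilon)\Delta^2$. Chaining these three estimates and collecting the additive terms produces exactly $c(1+2\epsilon)^2\mathrm{OPT}+2\epsilon(c+1+2c\epsilon)(1+2\epsilon)\Delta^2$. The only delicate step I anticipate is the disaggregation construction with partial matching ($W_A\neq W_B$), since one must verify that proportional splitting simultaneously respects both marginal capacities; the calculation sketched above handles this, and once it is in place the rest is bookkeeping with the $(x+y)^2$ inequality.
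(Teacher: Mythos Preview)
Your proof is correct and follows essentially the same three-step chain as the paper: a two-sided $(1+2\epsilon)$-distortion bound (your core lemma is exactly the paper's inequalities~(\ref{for-quality6}) and~(\ref{for-quality7})), then the $c$-approximation guarantee on the compressed instance, then chaining. The only notable difference is that the paper sidesteps your explicit aggregation/disaggregation of flows by re-indexing $S_A$ and $S_B$ to have the same cardinalities $n_1,n_2$ and weights $\alpha_i,\beta_j$ as $A,B$ (viewing each cluster center as several overlapping copies, one per original point in its cluster), so that any feasible flow for one instance is automatically feasible for the other; this is a slicker bookkeeping device but yields the identical constants you obtain via your $(x+y)^2\le(1+2\epsilon)x^2+(1+\tfrac{1}{2\epsilon})(2\epsilon\Delta)^2$ inequality.
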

\begin{proof}
	First, we denote by $\mathcal{T}_{\mathtt{opt}}$ the optimal rigid transformation achieving $\min_\mathcal{T}\mathcal{W}^2_2\big(A, \mathcal{T}(B)\big)$. Since $\tilde{\mathcal{T}}$ yields $c$-approximation for minimizing $\mathcal{W}^2_2\big(S_A, \mathcal{T}(S_B)\big)$, we have 
	\begin{eqnarray}
		\mathcal{W}^2_2\big(S_A, \tilde{\mathcal{T}}(S_B)\big)&\leq& c\cdot \min_\mathcal{T}\mathcal{W}^2_2\big(S_A, \mathcal{T}(S_B)\big) \nonumber\\
		&\leq& c\cdot\mathcal{W}^2_2\big(S_A, \mathcal{T}_{\mathtt{opt}}(S_B)\big).\label{for-quality3}
	\end{eqnarray}
	
	Recall that each point $c^A_j$ ({\em resp.,} $c^B_j$) has the weight equal to the total weights of the points in the corresponding cluster. For instance, if the cluster contains $\{a_{j(1)}, a_{j(2)}, \cdots, a_{j(h)}\}$, the weight of $c^A_j$ should be $\sum^h_{l=1}\alpha_{j(l)}$; actually, we can view $c^A_j$ as $h$ overlapping points $\{a'_{j(1)}, a'_{j(2)}, \cdots, a'_{j(h)}\}$ with each $a'_{j(l)}$ having the weight $\alpha_{j(l)}$. 
	Therefore, for the sake of convenience, we use another representation for $S_A$ and $S_B$ in our proof below:
	\begin{eqnarray}
		S_A=\{a'_1, \cdots, a'_{n_1}\}\text{ and } S_B=\{b'_1, \cdots, b'_{n_2}\}, \label{for-newrep}
	\end{eqnarray}
	where each $a'_j$ ({\em resp.,} $b'_j$) has the weight $\alpha_j$ ({\em resp.,} $\beta_j$). Note that $S_A$ and $S_B$ only have $k$ distinct positions respectively in the space. Moreover, due to Corollary~\ref{rem-k}, we know that $||a'_i-a_i||$, $||b'_j-b_j||\leq \epsilon \Delta $ for any $1\leq i\leq n_1$ and $1\leq j\leq n_2$, and these bounds are invariant under any rigid transformation in the space. Consequently, for any pair $(i, j)$ and any  rigid transformation $\mathcal{T}$, we have  
	\begin{eqnarray}
		&&||a_i-\mathcal{T}(b_j)||^2 \nonumber\\
		&\leq& \big(||a_i-a'_i||+||a'_i-\mathcal{T}(b'_j)||+||\mathcal{T}(b'_j)-\mathcal{T}(b_j)||\big)^2\nonumber\\
		&\leq&\big(||a'_i-\mathcal{T}(b'_j)||+2\epsilon\Delta\big)^2\nonumber\\
		&=&||a'_i-\mathcal{T}(b'_j)||^2+4\epsilon\Delta||a'_i-\mathcal{T}(b'_j)||+4\epsilon^2\Delta^2\nonumber\\
		&\leq&||a'_i-\mathcal{T}(b'_j)||^2+2\epsilon\big(\Delta^2+||a'_i-\mathcal{T}(b'_j)||^2\big) +4\epsilon^2\Delta^2 \nonumber\\
		&=&(1+2\epsilon)||a'_i-\mathcal{T}(b'_j)||^2+(2\epsilon+4\epsilon^2)\Delta^2\label{for-quality4}
	\end{eqnarray}
	through the triangle inequality. Using exactly the same idea, we also have  
	\begin{eqnarray}
		||a'_i-\mathcal{T}(b'_j)||^2\leq (1+2\epsilon)||a_i-\mathcal{T}(b_j)||^2+(2\epsilon+4\epsilon^2)\Delta^2.\label{for-quality5}
	\end{eqnarray}
	Based on Definition~\ref{def-emd}, we denote by $\tilde{F}=\{\tilde{f}_{ij}\}$ the induced flow of $\mathcal{W}^2_2\big(S_A, \tilde{\mathcal{T}}(S_B)\big)$ (using the representations (\ref{for-newrep}) for $S_A$ and $S_B$). Then (\ref{for-quality4}) directly implies that  
	\begin{eqnarray}
		&&\mathcal{W}^2_2\big(A, \tilde{\mathcal{T}}(B)\big) \nonumber\\
		&\leq&\frac{1}{\min\{W_A, W_B\}}\sum^{n_1}_{i=1}\sum^{n_2}_{j=1}\tilde{f}_{ij}||a_{i}-\tilde{\mathcal{T}}(b_j)||^2\nonumber\\
		&\leq&\frac{1+2\epsilon}{\min\{W_A, W_B\}}\sum^{n_1}_{i=1}\sum^{n_2}_{j=1}\tilde{f}_{ij}||a'_{i}-\tilde{\mathcal{T}}(b'_j)||^2 +(2\epsilon+4\epsilon^2)\Delta^2\nonumber\\
		&=&(1+2\epsilon)\mathcal{W}^2_2(S_A, \tilde{\mathcal{T}}(S_B)) +(2\epsilon+4\epsilon^2)\Delta^2.\label{for-quality6}
	\end{eqnarray}
	By using the similar idea (replacing $\tilde{\mathcal{T}}$ by $\mathcal{T}_{\mathtt{opt}}$, and exchanging the roles of $(A, B)$ and $(S_A, S_B)$),    (\ref{for-quality5}) directly implies that 
	\begin{eqnarray}
		\mathcal{W}^2_2\big(S_A, \mathcal{T}_{\mathtt{opt}}(S_B)\big)&\leq&(1+2\epsilon)\mathcal{W}^2_2\big(A, \mathcal{T}_{\mathtt{opt}}(B)\big) +(2\epsilon+4\epsilon^2)\Delta^2.\label{for-quality7}
	\end{eqnarray}
	Combining (\ref{for-quality3}), (\ref{for-quality6}), and (\ref{for-quality7}), we have  
	\begin{eqnarray}
		\mathcal{W}^2_2\big(A, \tilde{\mathcal{T}}(B)\big)&\leq&(1+2\epsilon)\mathcal{W}^2_2\big(S_A, \tilde{\mathcal{T}}(S_B)\big)+(2\epsilon+4\epsilon^2)\Delta^2\nonumber\\
		&\leq&(1+2\epsilon)\cdot c\cdot\mathcal{W}^2_2\big(S_A, \mathcal{T}_{\mathtt{opt}}(S_B)\big) +(2\epsilon+4\epsilon^2)\Delta^2 \nonumber\\
		&\leq& c(1+2\epsilon)^2\cdot\mathcal{W}^2_2\big(A, \mathcal{T}_{\mathtt{opt}}(B)\big)+2\epsilon(c+1+2c\epsilon)(1+2\epsilon)\Delta^2,
	\end{eqnarray}
	and the proof is completed.
\end{proof}

When $\epsilon$ is small enough, Theorem~\ref{the-quality} shows that $\mathcal{W}^2_2\big(A, \tilde{\mathcal{T}}(B)\big)\approx c\cdot\mathcal{W}^2_2\big(A, \mathcal{T}_{\mathtt{opt}}(B)\big)$. That is, $\tilde{\mathcal{T}}$, the solution of $(S_A, S_B)$, achieves roughly the same performance on $(A, B)$. 
Consequently, we propose the approximation algorithm for geometric alignment (see Algorithm~\ref{alg-1}). We would like to emphasize that though we use the algorithm from~\cite{cohen1999earth} in Step 3, Theorem~\ref{the-quality} is an independent result; that is, any alignment method with the same objective function in Definition~\ref{def-align} can benefit from Theorem~\ref{the-quality}.

\renewcommand{\algorithmicrequire}{\textbf{Input:}}
\renewcommand{\algorithmicensure}{\textbf{Output:}}
\begin{algorithm}
	\caption{Geometric Alignment with \textsc{KCenter}}
	\label{alg-1}
	\begin{algorithmic}[1]
		\REQUIRE An instance $(A, B)$ of the geometric alignment problem in Definition~\ref{def-align} with bounded doubling dimension $\rho$ in $\mathbb{R}^d$; $k\in \mathbb{Z}^+$.
		\ENSURE A rigid transformation $\mathcal{T}$ of $B$ and the Wasserstein distance flow between $A$ and $\mathcal{T}(B)$.
		\STATE Run the Gonzalez's $k$-center clustering algorithm on $A$ and $B$ as described in Theorem~\ref{the-quality}, and obtain the sets of cluster centers $S_A$ and $S_B$ respectively.
		\STATE Apply the existing alignment algorithm, e.g., the algorithm of~\cite{cohen1999earth}, on $(S_A, S_B)$.
		\STATE Obtain the rigid transformation $\mathcal{T}$ from Step 2, and compute the corresponding Wasserstein distance flow between $A$ and $\mathcal{T}(B)$.
	\end{algorithmic}
\end{algorithm}

\subsection{Extension \Rmnum{1}: When $k$ Is Not Given}
\label{sec-ext1}

As discussed in Corollary~\ref{rem-k},  the value  $k=(2/\epsilon)^\rho$  depends on the doubling dimension $\rho$, if we require the compression error ({\em i.e.,} the radius) to be no larger than $\epsilon\Delta$. However, the exact doubling dimension $\rho$ usually is not easy to obtain~\cite{har2006fast}. So we consider another scenario. Let ``$\epsilon\Delta$'' be the pre-specified compression error bound, and we run the Gonzalez's algorithm iteratively until the radius is reduced to be no larger than the bound. The question is that 

{\em can our algorithm be aware when the stopping condition is reached? }

We answer this question in the affirmative, where the only change is that the value of $k$ is required to be larger than $(2/\epsilon)^\rho$. In the worst case, $k$ can be as large as $(4/\epsilon)^\rho$ (so the value $k$ is enlarged by a factor $2^\rho$). 

First, we need to estimate the diameter $\Delta$. In practice we often avoid to compute the exact value of $\Delta$ since it takes at least quadratic complexity. Instead, we can simply take a point $p$ and its farthest point $q$ from $P$; let $\tilde{\Delta}=||p-q||$ and then we know 
\begin{eqnarray}
	\frac{1}{2}\Delta\leq \tilde{\Delta}\leq \Delta.\label{for-estdia}
\end{eqnarray} 
Then we have the following lemma (which is a counterpart of Lemma~\ref{lem-number}).

\begin{lemma}
	\label{lem-number-2}
	Given a small parameter $\epsilon>0$, if one runs the Gonzalez's $k$-center clustering algorithm iteratively until the obtained radius is no larger than $\epsilon\tilde{\Delta}$,  the number of obtained clusters is  at most $(\frac{4}{\epsilon})^\rho$. 
\end{lemma}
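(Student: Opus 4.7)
The plan is to mimic the proof of Lemma~\ref{lem-number} almost verbatim, with only the two modifications that (a) we are using the estimated diameter $\tilde{\Delta}$ rather than the true diameter $\Delta$, and (b) we think of the algorithm as stopping by a radius threshold rather than by a fixed cardinality. Concretely, I would suppose toward a contradiction that the Gonzalez procedure has produced a set $S=\{c_1,\ldots,c_k\}$ of $k$ centers and has not yet halted, which by the stopping rule means the current clustering radius $r$ satisfies $r>\epsilon\tilde{\Delta}$.

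Next I would apply the standard Gonzalez invariant~(\ref{for-2pairwise}) to conclude $\min_{i\neq j}\|c_i-c_j\|\geq r$, which (together with the trivial upper bound $\max\|c_i-c_j\|\leq\Delta$) gives an aspect ratio of $S$ at most $\Delta/r$. Then I would invoke Claim~\ref{claim-compress} with $X=A$ (or $B$) and $Y=S$ to deduce
\begin{equation*}
k=|S|\leq 2^{\rho\lceil \log_2(\Delta/r)\rceil}\leq 2^{\rho(1+\log_2(\Delta/r))}=2^\rho\bigl(\Delta/r\bigr)^\rho.
\end{equation*}
Finally, I would use both sides of the estimate~(\ref{for-estdia}), which give $\Delta/\tilde{\Delta}\leq 2$, combined with the standing assumption $r>\epsilon\tilde{\Delta}$, to bound
\begin{equation*}
\frac{\Delta}{r}<\frac{\Delta}{\epsilon\tilde{\Delta}}\leq\frac{2}{\epsilon},
\end{equation*}
so that $k<2^\rho(2/\epsilon)^\rho=(4/\epsilon)^\rho$. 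This contradicts the assumption that the algorithm has not halted after producing more than $(4/\epsilon)^\rho$ centers, giving the claimed bound.

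I do not anticipate a real obstacle here: the argument is essentially just the packing argument of Lemma~\ref{lem-number} in contrapositive form, and the only new ingredient is the factor of $2$ loss coming from using $\tilde{\Delta}$ in place of $\Delta$, which is exactly what turns the bound $(2/\epsilon)^\rho$ of Corollary~\ref{rem-k} into $(4/\epsilon)^\rho$. The only minor care needed is to state the stopping condition crisply, so that ``has not halted after $k$ centers'' really does translate into the strict inequality $r>\epsilon\tilde{\Delta}$ that the packing argument consumes.
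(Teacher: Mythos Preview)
Your proposal is correct and follows essentially the same approach as the paper: both apply the packing bound of Claim~\ref{claim-compress} to the Gonzalez center set and use~(\ref{for-estdia}) to absorb the factor-of-two loss from replacing $\Delta$ by $\tilde{\Delta}$. The only cosmetic difference is that the paper applies the packing argument directly to the terminal center set (asserting its aspect ratio is at most $\Delta/(\epsilon\tilde{\Delta})\leq 2/\epsilon$ and concluding $|S|\leq(4/\epsilon)^\rho$), whereas you phrase the same step in contrapositive form by looking at a not-yet-halted state.
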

\begin{proof}
	Let $S$ be the set of $k$ points by Gonzalez's algorithm, and the obtained  radius be $r\leq \epsilon \tilde{\Delta}$. It is easy to see that the aspect ratio of $S$ is at most $\Delta/(\epsilon \tilde{\Delta})\leq \frac{2}{\epsilon}$ (by (\ref{for-estdia})). 
	%
	%
	%
	From Claim~\ref{claim-compress}, we have
	\begin{eqnarray}
		|S|\leq 2^{\rho\lceil\log_2 2/\epsilon\rceil}\leq 2^{\rho (1+\log_2 2/\epsilon)}=(4/\epsilon)^\rho. \label{for-number1-2} 
	\end{eqnarray}
\end{proof}

From Lemma~\ref{lem-number-2}, we just need to modify  Step 1 of Algorithm~\ref{alg-1} for the case that the doubling dimension $\rho$ is not given.


\subsection{Extension \Rmnum{2}:  $k$-means $+$ $k$-center}
\label{sec-ext2}
In Theorem~\ref{the-quality}, we show that the Gonzalez's $k$-center clustering algorithm can be used to compress input data. So a natural question is that whether other clustering method, such as the widely used $k$-means clustering, can be also applied to achieve this purpose. To answer this question, we need to revisit the proof of Theorem~\ref{the-quality}. 
%
%
In $k$-center clustering, each obtained cluster has a sufficiently small  radius, and so we can obtain the inequalities (\ref{for-quality4}) and (\ref{for-quality5}). But if we run $k$-means instead, the obtained clusters may have large radii and thus (\ref{for-quality4}) and (\ref{for-quality5}) can be violated. This observation inspires the following compression method that combines $k$-center and $k$-means.  

\textbf{\textsc{KCenter+}.} In Step 1 of Algorithm~\ref{alg-1}, the obtained sets $S_A$ and $S_B$ are the corresponding ball centers (each cluster is covered by a ball with radius $\leq\epsilon\Delta$ if $k\geq(\frac{2}{\epsilon})^\rho$). Actually we can further improve the result locally. For each ball center, we can replace it by \textbf{the mean point} of the cluster. Since our Wasserstein distance $\mathcal{W}^2_2(\cdot, \cdot)$ is the sum of a set of weighted squared Euclidean distances, the additive error bound ``$2\epsilon(c+1+2c\epsilon)(1+2\epsilon)\Delta^2$'' in (\ref{for-quality1}) can be reduced. 
For example, in the second inequality of (\ref{for-quality4}), the items ``$||a_i-a'_i||$'' and ``$||\mathcal{T}(b'_j)-\mathcal{T}(b_j)||$'' are just simply bounded by $\epsilon \Delta$; but if we take a more careful analysis, the formula (\ref{for-quality4}) can be rewritten as follows:
\begin{eqnarray}
	&&||a_i-\mathcal{T}(b_j)||^2 \nonumber\\
	&\leq& \big(||a_i-a'_i||+||a'_i-\mathcal{T}(b'_j)||+||\mathcal{T}(b'_j)-\mathcal{T}(b_j)||\big)^2\nonumber\\
	&=&||a'_i-\mathcal{T}(b'_j)||^2+2(||a_i-a'_i||+||\mathcal{T}(b'_j)-\mathcal{T}(b_j)||)\times||a'_i-\mathcal{T}(b'_j)||\nonumber\\
	&&+(||a_i-a'_i||+||\mathcal{T}(b'_j)-\mathcal{T}(b_j)||)^2\nonumber\\
	&\leq&||a'_i-\mathcal{T}(b'_j)||^2+4\epsilon\Delta\times||a'_i-\mathcal{T}(b'_j)||+2||a_i-a'_i||^2+2||\mathcal{T}(b'_j)-\mathcal{T}(b_j)||^2\nonumber\\
	&\leq&||a'_i-\mathcal{T}(b'_j)||^2+2\epsilon\big(\Delta^2+||a'_i-\mathcal{T}(b'_j)||^2\big) +2||a_i-a'_i||^2+2||\mathcal{T}(b'_j)-\mathcal{T}(b_j)||^2 \nonumber\\
	&=&(1+2\epsilon)||a'_i-\mathcal{T}(b'_j)||^2+2\epsilon\Delta^2+\boxed{2||a_i-a'_i||^2+2||\mathcal{T}(b'_j)-\mathcal{T}(b_j)||^2}.\label{for-quality4-mean}
\end{eqnarray}
Comparing (\ref{for-quality4-mean}) with (\ref{for-quality4}), we can see that the item ``$4\epsilon^2\Delta^2$'' in the right-hand side bound is replaced by ``$2||a_i-a'_i||^2+2||\mathcal{T}(b'_j)-\mathcal{T}(b_j)||^2$''. Note that $a'_i$ and $\mathcal{T}(b'_j)$ are actually the centers of the balls that respectively cover $a_i$ and $\mathcal{T}(b_j)$. So if we replace the ball centers by the means, the accumulated additive errors in (\ref{for-quality6}) and (\ref{for-quality7}) can be further reduced  (though this bound remains the same in the worst case, {\em e.g.,} all the points of the cluster locate uniformly on the sphere of the ball, and then $||a_i-a'_i||$ and $||\mathcal{T}(b'_j)-\mathcal{T}(b_j)||$ are always equal to $\epsilon \Delta$).

Intuitively, we use $k$-center clustering first to bound the radius of each cluster, and then compute the mean of each cluster to refine the result. We name this method  as \textsc{KCenter+}. In Section~\ref{sec-exp}, our experiments also verified the fact that \textsc{KCenter+} usually can achieve better performance than \textsc{KCenter}.

\subsection{Extension \Rmnum{3}: Robust Alignment with Fractional Wasserstein Distance}

We further consider the alignment problem with the fractional Wasserstein distance as Definition~\ref{def-fracemd}. The first question is how to compute this new distance. Recall that the vanilla Wasserstein distance is equivalent to computing the minimum cost maximum flow on the  bipartite graph of $A$ and $B$. When we consider the fractional Wasserstein distance that allows $1-\lambda$ outliers, we can modify the bipartite graph by adding two ``dummy'' points (see Figure~\ref{fig:frac_emd} for an illustration). Then, we can directly apply any off-the-shelf  Wasserstein distance algorithms ({\em e.g.,} the network simplex algorithm~\cite{ahuja1993network} or the Sinkhorn distance algorithm~\cite{DBLP:conf/nips/Cuturi13}) to compute the fractional Wasserstein distance $\mathcal{W}^2_2(A, B,\lambda)$.

\begin{figure}[htbp]
	\begin{center}	
		\includegraphics[width=0.7\columnwidth]{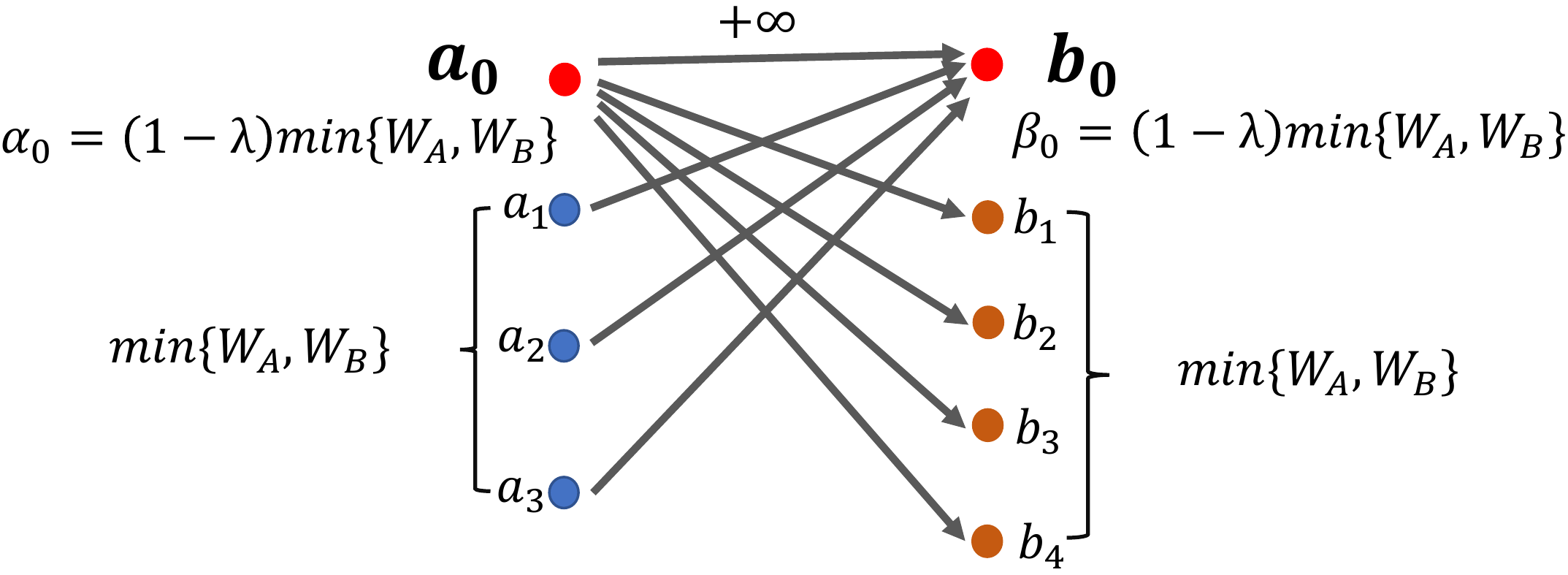} 
	\end{center}
	\caption{We add two dummy points $a_0$ and $b_0$, where their weights are both $(1-\lambda)\cdot\min\{W_A, W_B\}$. The point $a_0$ connects all the $b_j$s with the edge cost $0$, and so does $b_0$; the cost of the edge connecting $a_0$ and $b_0$ is $+\infty$. The total flow from the  left to the right is $\min\{W_A, W_B\}+(1-\lambda)\cdot\min\{W_A, W_B\}$. Intuitively, the dummy point $a_0$ absorbs the $(1-\lambda)\cdot\min\{W_A, W_B\}$ outlier flows from $B$, and the dummy point $b_0$ absorbs the $(1-\lambda)\cdot\min\{W_A, W_B\}$ outlier flows from $A$.} 
	\label{fig:frac_emd}  
\end{figure}

%
%
%
%

\begin{algorithm}
	\caption{Fractional Wasserstein distance}
	\label{alg-2}
	\begin{algorithmic}[1]
		\REQUIRE An instance $(A, B)$ of the Wasserstein distance problem in Definition~\ref{def-fracemd} with the fraction $\lambda ~(0 < \lambda \leq 1)$.
		\ENSURE The fractional Wasserstein distance $\mathcal{W}^2_2(A, B,\lambda)$ and the corresponding flows from $A$ to $B$.
		\STATE Set $w_0=(1-\lambda) \min {\{W_A,W_B \}}$
		\STATE Add the dummy points $a_0$ and $b_0$ to $A$ and $B$, respectively; their weights are both $w_0$. Denote  the new point sets by $\tilde{A}$ and $\tilde{B}$, respectively. 
		\STATE Build the bipartite graph of $\tilde{A}$ and $\tilde{B}$, and set the ground distance (edge cost) as Figure~\ref{fig:frac_emd}. 
		
		\STATE Apply the existing Wasserstein distance algorithm e.g., \cite{ahuja1993network} or~\cite{DBLP:conf/nips/Cuturi13}, to compute the optimal transport flow matrix and the Wasserstein distance.
	\end{algorithmic}
\end{algorithm}
%
%

\begin{lemma}
	\label{theorem_alg2_frac}
	The Algorithm \ref{alg-2} returns the optimal fractional Wasserstein distance flow of $(A,B)$ with the total flow being equal to  $\lambda \min {\{W_A,W_B \}}$. The time complexity is as same as the vanilla Wasserstein distance algorithm~\cite{ahuja1993network} or~\cite{DBLP:conf/nips/Cuturi13}
\end{lemma}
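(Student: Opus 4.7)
The plan is to recast Algorithm~\ref{alg-2} as a standard (vanilla) Wasserstein instance on the augmented graph $(\tilde A,\tilde B)$, show by an LP-level argument that its optimum coincides with that of the fractional Wasserstein LP in Definition~\ref{def-fracemd}, and then observe that the running time is preserved because only $O(n_1+n_2)$ edges and two vertices are added.

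Without loss of generality assume $W_A\le W_B$, so $\min\{W_A,W_B\}=W_A$ and $w_0=(1-\lambda)W_A$. I would first verify the basic bookkeeping: the total supply in $\tilde A$ is $W_A+w_0$, the total demand in $\tilde B$ is $W_B+w_0\ge W_A+w_0$, so any optimal flow for the vanilla Wasserstein problem on $(\tilde A,\tilde B)$ routes exactly $W_A+w_0$ units and saturates every supply node. Writing $S=\sum_{i,j\ge1}f_{ij}$, $S_{a_0}=\sum_{j\ge1}f_{0j}$, and $S_{b_0}=\sum_{i\ge1}f_{i0}$, the infinite cost on $(a_0,b_0)$ forces $f_{00}=0$ in any optimum, so conservation at $a_0$ yields $S_{a_0}=w_0$, and summing the saturated supply equations over real $i$ gives $S+S_{b_0}=W_A$. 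Since $S_{b_0}\le w_0$, this forces $S\ge W_A-w_0=\lambda W_A$.

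The heart of the proof is a cost-preserving correspondence between feasible flows of the augmented LP with $S=\lambda W_A$ and feasible flows of the fractional LP. In one direction, any fractional-LP feasible $(f_{ij})_{i,j\ge1}$ with $\sum f_{ij}=\lambda W_A$ lifts to the augmented graph by taking $f_{i,0}=\alpha_i-\sum_{j\ge1}f_{ij}$ (nonnegative, summing to $w_0\le$ capacity of $b_0$) and distributing $w_0$ across $\{f_{0,j}\}_{j\ge1}$ inside the residual capacities $\beta_j-\sum_{i\ge1}f_{ij}$; this is possible because the total residual on the $B$ side is $W_B-\lambda W_A\ge W_A-\lambda W_A=w_0$. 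The other direction is immediate by restriction. Because dummy edges contribute $0$ to the objective, both directions preserve cost, so the two LP optima have the same value. Finally, since rerouting any ``excess'' real-to-real flow through the zero-cost dummy residuals can only weakly decrease the total cost, there is always an optimal augmented solution attaining the lower bound $S=\lambda W_A$, which is precisely the flow returned by Algorithm~\ref{alg-2}.

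The time-complexity assertion is then immediate: the augmented bipartite graph has $n_1+n_2+2$ vertices and $(n_1+1)(n_2+1)$ edges, within a constant factor of the original; the $+\infty$ edge cost is implemented by any value strictly larger than $\sum_{i,j\ge1}||a_i-b_j||^2$, which is computable within the same time budget. Plugging this slightly enlarged graph into~\cite{ahuja1993network} or the Sinkhorn iteration of~\cite{DBLP:conf/nips/Cuturi13} changes the asymptotic running time only by absorbed lower-order terms. I expect the main obstacle to be the lifting in the third paragraph, namely confirming that the $A$- and $B$-side residual capacities can simultaneously absorb $w_0$ units of dummy flow; both bounds reduce to the assumption $W_A\le W_B$, after which the rest is routine LP bookkeeping.
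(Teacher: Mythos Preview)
Your proposal is correct and follows essentially the same route as the paper: establish $f_{00}=0$, then build a cost-preserving lifting between fractional flows on $(A,B)$ and augmented flows on $(\tilde A,\tilde B)$ to conclude the optima coincide (the paper phrases the lifting step as a proof by contradiction, but the construction is the same). Your treatment is in fact slightly more careful on one point: when $W_A\ne W_B$ the bookkeeping after $f_{00}=0$ only yields $S\ge\lambda\min\{W_A,W_B\}$, and you correctly supply the rerouting-through-$b_0$ argument to force equality at optimum, whereas the paper asserts equality directly.
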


\begin{proof}
	We denote the obtained flows as $F=\{f_{ij}\mid 0\leq i\leq n_1, 0\leq j\leq n_2\}$. First, we need to show that $f_{00}=0$. Otherwise, we can arbitrarily pick a non-zero flow, say $f_{i_0 j_0}$, from $F\setminus\{f_{00}\}$, and perform the following modification: let $\delta=\min\{f_{00}, f_{i_0 j_0}\}$, and update
	\begin{eqnarray}
		f_{00}&\longrightarrow& f_{00}-\delta;\nonumber\\
		f_{i_0 j_0}&\longrightarrow& f_{i_0 j_0}-\delta;\nonumber\\
		f_{0 j_0}&\longrightarrow& f_{0 j_0}+\delta;\nonumber\\
		f_{i_0 0}&\longrightarrow& f_{i_0 0}+\delta. 
	\end{eqnarray}
	From the edge costs in Figure~\ref{fig:frac_emd} (the edge cost connecting $a_0$ and $b_0$ is $\infty$), we know that the total transportation cost is reduced after the above modification. This is in contradiction with the optimality of $F$. Also, since $\alpha_0=\beta_0=(1-\lambda)\cdot\min\{W_A, W_B\}$, together with $f_{00}=0$, we have the total flows from $A$ to $B$  that is  
	\begin{eqnarray}
		\sum^{n_1}_{i=1}\sum^{n_2}_{j=1}f_{ij}= \lambda \cdot\min\{W_A, W_B\}.
	\end{eqnarray}
	
	Now, we prove the optimality of the flows with respect to Definition~\ref{def-fracemd}. Suppose the flows $F'=\{f'_{ij}\mid 1\leq i\leq n_1, 1\leq j\leq n_2\}$ yield the optimal fractional Wasserstein distance, {\em i.e.,}
	\begin{eqnarray}
		\sum^{n_1}_{i=1}\sum^{n_2}_{j=1}f'_{ij} ||a_i-b_j||^2<\sum^{n_1}_{i=1}\sum^{n_2}_{j=1}f_{ij} ||a_i-b_j||^2. \label{for-fwd-1}
	\end{eqnarray}
	We can easily augment $F'$ to be a solution for $(\tilde{A}, \tilde{B})$ with the dummy points $a_0$ and $b_0$: $F'\longrightarrow F'\cup \{f'_{00}, f'_{0j}, f'_{i0}\mid 1\leq i\leq n_1, 1\leq j\leq n_2\}$, where 
	\begin{eqnarray}
		f'_{00}&=&0;\nonumber\\
		f'_{0j}&=&\frac{(1-\lambda)\cdot\min\{W_A, W_B\}}{W_B-\lambda\cdot\min\{W_A, W_B\}} (\beta_j-\sum^{n_1}_{i=1}f'_{ij});\nonumber\\
		f'_{i0}&=&\frac{(1-\lambda)\cdot\min\{W_A, W_B\}}{W_A-\lambda\cdot\min\{W_A, W_B\}} (\alpha_i-\sum^{n_2}_{j=1}f'_{ij}).
	\end{eqnarray}
	It is easy to verify that the augmented $F'$ is a feasible flow set from $\tilde{A}$ to $\tilde{B}$ with the total flows being equal to $\min\{W_A, W_B\}+(1-\lambda)\cdot\min\{W_A, W_B\}$. 
	From the edge costs in Figure~\ref{fig:frac_emd}, together with (\ref{for-fwd-1}), we know that the augmented $F'$ yields a lower transportation cost between $\tilde{A}$ and $\tilde{B}$, which is in contradiction with the optimality of $F$. Therefore,  Algorithm \ref{alg-2} returns the optimal fractional Wasserstein distance flow of $(A,B)$. The time complexity is as same as the vanilla Wasserstein distance algorithm~\cite{ahuja1993network} or~\cite{DBLP:conf/nips/Cuturi13}, since we only add two more points to the input. 
\end{proof}

Using Algorithm~\ref{alg-2}, we can compute the partial alignment between $A$ and $B$. We still use our proposed Algorithm~\ref{alg-1}, where the only difference is that we need to apply Algorithm~\ref{alg-2} to compute the fractional Wasserstein distance between $S_A$ and $\mathcal{T}(S_B)$ ({\em resp.,} $A$ and $\mathcal{T}(B)$). Similar with Theorem~\ref{the-quality}, we also have the following Theorem~\ref{the-quality-2} for the partial alignment with any given parameter $\lambda\in [0,1]$. The proof (which is very similar with that of Theorem~\ref{the-quality}) is placed in the appendix.

\begin{theorem}
	\label{the-quality-2}
	Suppose $\epsilon>0$ is a small parameter in Corollary~\ref{rem-k}. Let $\lambda\in [0,1]$. Given any $c\geq 1$, let $\tilde{\mathcal{T}}$ be a rigid transformation yielding $c$-approximation for minimizing $\mathcal{W}^2_2\big(S_A, \mathcal{T}(S_B), \lambda\big)$ in Definition~\ref{def-align}. Then,  
	\begin{eqnarray}
		&&\mathcal{W}^2_2\big(A, \tilde{\mathcal{T}}(B), \lambda\big)\nonumber\\
		&\leq& c(1+2\epsilon)^2\cdot\min_\mathcal{T}\mathcal{W}^2_2\big(A, \mathcal{T}(B),\lambda\big)+2\epsilon(c+1+2c\epsilon)(1+2\epsilon)\Delta^2\nonumber\\
		&=& c\big(1+O(\epsilon)\big)\cdot\min_\mathcal{T}\mathcal{W}^2_2\big(A, \mathcal{T}(B),\lambda\big)+2\epsilon \big(1+O(\epsilon)\big)(c+1)\Delta^2. \label{for-quality-2-1}
	\end{eqnarray}
\end{theorem}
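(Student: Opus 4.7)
The plan is to mirror the proof of Theorem~\ref{the-quality} almost verbatim, since the core of that proof is two pointwise geometric inequalities that do not depend on the flow structure at all. Specifically, inequalities (\ref{for-quality4}) and (\ref{for-quality5}) hold for every index pair $(i,j)$ and every rigid transformation $\mathcal{T}$, and they only rely on $\|a_i-a'_i\|,\|b_j-b'_j\|\leq\epsilon\Delta$ (preserved by any rigid transformation). So once we can show that an optimal fractional flow on one instance transfers to a feasible fractional flow on the other with the same $\{f_{ij}\}$ values, the inequalities lift automatically to the weighted sums defining $\mathcal{W}^2_2(\cdot,\cdot,\lambda)$.

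The first step I would take is to reintroduce the expanded representation $S_A=\{a'_1,\ldots,a'_{n_1}\}$ and $S_B=\{b'_1,\ldots,b'_{n_2}\}$ with weights $\alpha_i,\beta_j$ inherited from $A,B$, exactly as in (\ref{for-newrep}). Under this representation, the capacity constraints ($\sum_i f_{ij}\leq\beta_j$, $\sum_j f_{ij}\leq\alpha_i$) and the total-mass constraint $\sum_{i,j}f_{ij}=\lambda\cdot\min\{W_A,W_B\}$ are literally the same for the two instances $(A,B,\lambda)$ and $(S_A,S_B,\lambda)$, because $W_A,W_B,\alpha_i,\beta_j$ are all preserved by the \textsc{KCenter} compression. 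Hence the induced optimal flow $\tilde{F}=\{\tilde{f}_{ij}\}$ realizing $\mathcal{W}^2_2(S_A,\tilde{\mathcal{T}}(S_B),\lambda)$ is a feasible (though not necessarily optimal) fractional flow for $(A,\tilde{\mathcal{T}}(B),\lambda)$, and symmetrically the optimal fractional flow for $(A,\mathcal{T}_{\mathtt{opt}}(B),\lambda)$ is feasible for $(S_A,\mathcal{T}_{\mathtt{opt}}(S_B),\lambda)$.

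Next I would chain three inequalities exactly as in the original argument. Using $\tilde{F}$ as an upper-bounding feasible flow and applying (\ref{for-quality4}) entrywise gives
\begin{eqnarray*}
\mathcal{W}^2_2(A,\tilde{\mathcal{T}}(B),\lambda)
&\leq& \frac{1}{\min\{W_A,W_B\}}\sum_{i,j}\tilde{f}_{ij}\|a_i-\tilde{\mathcal{T}}(b_j)\|^2 \\
&\leq& (1+2\epsilon)\mathcal{W}^2_2(S_A,\tilde{\mathcal{T}}(S_B),\lambda)+\lambda(2\epsilon+4\epsilon^2)\Delta^2,
\end{eqnarray*}
where the additive term arises because $\sum_{i,j}\tilde{f}_{ij}=\lambda\min\{W_A,W_B\}$ and $\lambda\leq 1$ lets us drop the $\lambda$ factor. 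A symmetric application of (\ref{for-quality5}) with $\mathcal{T}_{\mathtt{opt}}$ gives $\mathcal{W}^2_2(S_A,\mathcal{T}_{\mathtt{opt}}(S_B),\lambda)\leq (1+2\epsilon)\mathcal{W}^2_2(A,\mathcal{T}_{\mathtt{opt}}(B),\lambda)+(2\epsilon+4\epsilon^2)\Delta^2$. Combining these two bounds with the $c$-approximation inequality $\mathcal{W}^2_2(S_A,\tilde{\mathcal{T}}(S_B),\lambda)\leq c\cdot\mathcal{W}^2_2(S_A,\mathcal{T}_{\mathtt{opt}}(S_B),\lambda)$ and collecting terms reproduces (\ref{for-quality-2-1}) verbatim.

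The only genuine subtlety, and the one place the proof could potentially fail for the fractional version, is the flow-transfer step: an optimal fractional flow is only partial, so one might worry that the ``unused'' mass on $S_A$ or $S_B$ should be reassigned when moving to $A,B$. However, because the compression is weight-preserving on a per-cluster basis and because the representation (\ref{for-newrep}) matches individual weights $\alpha_i,\beta_j$ exactly, the same numerical flow $\{\tilde{f}_{ij}\}$ respects all capacity constraints on both sides with no reassignment needed. This is the main thing worth writing out carefully; everything else is a routine replay of the Theorem~\ref{the-quality} calculation with the extra factor of $\lambda$ absorbed via $\lambda\leq 1$.
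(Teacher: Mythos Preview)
Your proposal is correct and follows essentially the same route as the paper's appendix proof: transfer the optimal fractional flow between $(A,B,\lambda)$ and $(S_A,S_B,\lambda)$ via the expanded representation~(\ref{for-newrep}), apply the pointwise bounds (\ref{for-quality4}) and (\ref{for-quality5}), and chain with the $c$-approximation inequality. The only cosmetic difference is the normalization constant: the paper's appendix writes $\frac{1}{\lambda\min\{W_A,W_B\}}$ so that the additive $(2\epsilon+4\epsilon^2)\Delta^2$ term appears directly, whereas you use $\frac{1}{\min\{W_A,W_B\}}$ and then absorb the resulting $\lambda$ factor via $\lambda\leq 1$; both yield the stated bound.
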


\section{The Time Complexity}
\label{sec-time}
We analyze the time complexity of Algorithm~\ref{alg-1} and consider Step~1-3 separately. 
To simplify our description, we use $n$ to denote $\max\{n_1, n_2\}$. In Step 2, we suppose that the iterative approach~\cite{cohen1999earth} takes $h\geq1$ rounds. 


\textbf{Step 1.} A straightforward implementation of the Gonzalez's  algorithm is selecting the $k$ cluster centers iteratively with the running time  $O(knd)$. Several faster implementations for the high dimensional case with low doubling dimension have been studied before; their idea is to maintain some data structures to reduce the amortized complexity of each iteration. We refer the reader to~\cite{har2006fast} for more details. Also note that if we use \textsc{KCenter+}, it only yields an extra $O(nd)$ time since we just need  to scan the whole data in one-pass for computing the means.


\textbf{Step 2.} Since we run the algorithm~\cite{cohen1999earth} on the smaller instance $(S_A, S_B)$ instead of $(A, B)$, we know that the complexity of Step~2 is $O\Big(h \big(\Gamma(k,d)+k^2 d+kd^2+d^3\big)\Big)$ by Proposition~\ref{pro-cg}.

\textbf{Step 3.} We need to compute the transformed $\mathcal{T}(B)$ first and then the Wasserstein distance $\mathcal{W}^2_2(A, \mathcal{T}(B))$. Note that the transformation $\mathcal{T}$ is not off-the-shelf, because it is the combination of a sequence of rigid transformations from the iterative approach~\cite{cohen1999earth} in Step 2. Since it takes $h$ rounds, $\mathcal{T}$ should be the multiplication of $h$ rigid transformations. We use $(\mathcal{R}_l, \overrightarrow{v}_l)$ to denote the orthogonal matrix and translation vector obtained in the $l$-th round for $1\leq l\leq h$. We can update $B$ round by round: starting from $l=1$, update $B$ to be $\mathcal{R}_l B+\overrightarrow{v}_l$ in each round; the whole time complexity will be $O(h n d^2)$. In fact, we have a more efficient way by computing $\mathcal{T}$ first before transforming $B$.

\begin{lemma}
	\label{lem-tran}
	Let $(\mathcal{R}, \overrightarrow{v})$ be the orthogonal matrix and translation vector of $\mathcal{T}$. Then
	\begin{eqnarray}
		\mathcal{R}&=&\Pi^\lambda_{l=1}\mathcal{R}_l, \nonumber\\
		\overrightarrow{v}&=&(\Pi^\lambda_{l=2}\mathcal{R}_l)\overrightarrow{v}_1+(\Pi^\lambda_{l=3}\mathcal{R}_l)\overrightarrow{v}_2+\cdots +\mathcal{R}_\lambda\overrightarrow{v}_{\lambda-1}+\overrightarrow{v}_\lambda,\label{for-tran}
	\end{eqnarray}
	and $\mathcal{T}(B)$ can be obtained in $O(h d^3+n d^2)$ time. 
\end{lemma}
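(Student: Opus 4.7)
The plan is to first verify the explicit formulas for $\mathcal{R}$ and $\overrightarrow{v}$ by a straightforward induction on the number of rounds $h$, and then to describe how to evaluate $\mathcal{T}(B)$ in the claimed time by precomputing $(\mathcal{R},\overrightarrow{v})$ before touching any point of $B$.

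For the formula, I would unfold the composition of the $h$ per-round rigid transformations $x \mapsto \mathcal{R}_l x+\overrightarrow{v}_l$. After one round the image is $\mathcal{R}_1 x+\overrightarrow{v}_1$; applying round two gives $\mathcal{R}_2\mathcal{R}_1 x+\mathcal{R}_2\overrightarrow{v}_1+\overrightarrow{v}_2$; and in general, assuming inductively that after $l-1$ rounds the image equals $(\Pi^{l-1}_{j=1}\mathcal{R}_j)\,x+\sum^{l-1}_{j=1}(\Pi^{l-1}_{j'=j+1}\mathcal{R}_{j'})\overrightarrow{v}_j$, one more round yields exactly the same shape with $l-1$ replaced by $l$. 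Setting $l=h$ recovers the stated formula (with the appropriate reading of the product order, namely $\Pi^{h}_{l=j+1}\mathcal{R}_{l}=\mathcal{R}_h\mathcal{R}_{h-1}\cdots\mathcal{R}_{j+1}$). Since each $\mathcal{R}_l$ is orthogonal, $\mathcal{R}$ is a product of orthogonal matrices and hence orthogonal, so $(\mathcal{R},\overrightarrow{v})$ is a genuine rigid transformation.

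For the complexity, I would precompute $\mathcal{R}$ by left-multiplying the $h$ matrices one at a time, each multiplication costing $O(d^3)$, for a total of $O(h d^3)$. For $\overrightarrow{v}$, I avoid ever forming the partial products $\Pi^{h}_{l'=l+1}\mathcal{R}_{l'}$ explicitly; instead I maintain a running $d$-vector $u$, initialized to $\overrightarrow{v}_1$, and update $u \leftarrow \mathcal{R}_l u+\overrightarrow{v}_l$ for $l=2,\ldots,h$. Each update is a single matrix-vector product plus an addition, costing $O(d^2)$, so $\overrightarrow{v}=u$ is obtained in $O(h d^2)$ total. Finally, I apply $(\mathcal{R},\overrightarrow{v})$ to $B$ by one $d\times d$ times $d\times n$ multiplication followed by adding $\overrightarrow{v}$ to each of the $n$ columns, costing $O(n d^2+nd)=O(n d^2)$. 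Summing gives $O(h d^3)+O(h d^2)+O(n d^2)=O(h d^3+n d^2)$, which matches the bound in the lemma.

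The only point that requires any care is keeping the matrix-multiplication order straight in the induction (composition of affine maps composes the linear parts in the reverse order to which they are applied), but once the invariant is written out this is routine; all other steps are standard block-multiplication bookkeeping.
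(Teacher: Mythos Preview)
Your proposal is correct and follows essentially the same approach as the paper: verify the formulas by unrolling the composition, then precompute $(\mathcal{R},\overrightarrow{v})$ before applying it to $B$, giving $O(hd^3)+O(nd^2)$. The only cosmetic difference is that the paper computes all the suffix products $\Pi^{h}_{l=i}\mathcal{R}_l$ recursively for $i=h,h-1,\ldots,1$ and then uses them to assemble $\overrightarrow{v}$, whereas you compute $\overrightarrow{v}$ via a Horner-style running update $u\leftarrow\mathcal{R}_l u+\overrightarrow{v}_l$; both cost the same asymptotically, and your variant is arguably cleaner since it avoids storing the intermediate matrix products.
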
 
\begin{proof}
	The equations (\ref{for-tran}) can be easily verified by simple calculations and we only need to focus on the time complexity.  We can recursively compute the multiplications $\Pi^\lambda_{l=i}\mathcal{R}_l$ for $i=h, h-1, \cdots, 1$. Consequently, the orthogonal matrix $\mathcal{R}$ and translation vector $\overrightarrow{v}$ can be obtained in $O(h d^3)$ time. In addition, the complexity for computing $\mathcal{T}(B)=\mathcal{R}B+\overrightarrow{v}$ is $O(n d^2)$.
\end{proof}

Lemma~\ref{lem-tran} provides a complexity significantly lower than the previous $O(h n d^2)$ (usually $n$ is much larger than $d$ in practice). 
After obtaining $\mathcal{T}(B)$, we can compute $\mathcal{W}^2_2(A, \mathcal{T}(B))$ in $\Gamma(n,d)$ time.
\textbf{Note} that the complexity $\Gamma(n,d)$ usually is $\Omega(n^2 d)$, which dominates the complexity of Step 1 and the second term $nd^2$ in the complexity of Lemma~\ref{lem-tran}. Overall, we have the following theorem for the total runtime. 

\begin{theorem}
	\label{the-time}
	Suppose $n=\max\{n_1, n_2\}\geq d$ and the algorithm of ~\cite{cohen1999earth} takes $h\geq1$ rounds. The running time of Algorithm~\ref{alg-1} is 
	$O\Big(h\big(\Gamma(k, d)+k^2d+kd^2+d^3\big)\Big)+\Gamma(n, d)$, where $k= (\frac{2}{\epsilon})^\rho$.
\end{theorem}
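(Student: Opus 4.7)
The plan is to bound the running time of Algorithm~\ref{alg-1} by summing the costs of its three steps, then absorbing the smaller terms into the dominating term $\Gamma(n,d)$ that comes from the final Wasserstein distance computation on the original input.

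First I would analyze Step~1. Running Gonzalez's $k$-center algorithm on $A$ (resp.\ $B$) in the straightforward way takes $O(knd)$ time. Since $\Gamma(n,d)=\Omega(n^2 d)$ in practice and $k \leq n$, this cost is dominated by $\Gamma(n,d)$, and hence can be absorbed into the final additive term. If one uses \textsc{KCenter+}, the extra pass to compute cluster means adds only $O(nd)$, which is also absorbed.

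Next I would handle Step~2 by invoking Proposition~\ref{pro-cg} on the compressed instance $(S_A,S_B)$, which has $|S_A|,|S_B|\leq k$ points. Each of the $h$ rounds of the alternating algorithm of~\cite{cohen1999earth} then costs $\Gamma(k,d)+O(k^2 d+\min\{k,k\}\cdot d^2+d^3) = \Gamma(k,d)+O(k^2 d+kd^2+d^3)$ by the same reasoning used to prove Proposition~\ref{pro-cg} (in particular Claim~\ref{lem-app} applied with $n_1,n_2\leq k$). Multiplying by $h$ rounds gives the claimed $O\bigl(h(\Gamma(k,d)+k^2 d+kd^2+d^3)\bigr)$ bound for Step~2.

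For Step~3 I would apply Lemma~\ref{lem-tran} to compose the $h$ rigid transformations produced by Step~2 into a single $(\mathcal{R},\overrightarrow{v})$ in time $O(hd^3)$, and then transform $B$ in $O(nd^2)$ time, so that the transformed pattern $\mathcal{T}(B)$ is obtained in $O(hd^3+nd^2)$ time. Computing the Wasserstein distance $\mathcal{W}^2_2(A,\mathcal{T}(B))$ then costs $\Gamma(n,d)$. The $O(hd^3)$ term is already present in the Step~2 bound, and since $n\geq d$ and $\Gamma(n,d)=\Omega(n^2 d)$, the term $O(nd^2)$ is dominated by $\Gamma(n,d)$.

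Finally I would add the three bounds and note the dominations listed above to arrive at the stated total $O\bigl(h(\Gamma(k,d)+k^2 d+kd^2+d^3)\bigr)+\Gamma(n,d)$, substituting $k=(2/\epsilon)^{\rho}$ from Corollary~\ref{rem-k}. There is no genuine obstacle here; the only subtlety is being careful with the per-round bound of the Cohen--Guibas iteration when applied on the compressed instance, which is precisely what Proposition~\ref{pro-cg} (via Claim~\ref{lem-app}) delivers, and checking that $\Gamma(n,d)$ absorbs the preprocessing and final transformation costs under the assumption $n\geq d$.
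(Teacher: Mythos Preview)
Your proposal is correct and follows essentially the same approach as the paper: both decompose the running time over the three steps of Algorithm~\ref{alg-1}, invoke Proposition~\ref{pro-cg} for Step~2 on the compressed instance, apply Lemma~\ref{lem-tran} for Step~3, and absorb the $O(knd)$ and $O(nd^2)$ terms into $\Gamma(n,d)$ using $\Gamma(n,d)=\Omega(n^2 d)$ and $n\geq d$.
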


If we run the same number of rounds on the original instance $(A, B)$ by the approach~\cite{cohen1999earth}, the total running time will be $O\Big(h\big(\Gamma(n, d)+n^2d\big)\Big)$ by Proposition~\ref{pro-cg}. When $k\ll n$, Algorithm~\ref{alg-1} achieves a significant reduction on the running time.

\section{Experiments}
\label{sec-exp}
We consider three applications in our experiments: PPI network alignment, unsupervised bilingual lexicon induction, and domain adaption. All the experimental results were obtained on  a server equipped with 2.4GHz Intel CPUs and 256GB main memory; the algorithms were implemented in Matlab. For each instance, we run $20$ trials and report the average results. Our code is
publicly available at \href{https://github.com/lwjie595/RobustGeometricAlignment}{https://github.com/lwjie595/RobustGeometricAlignment}.

\subsection{Datasets}

\textbf{(1)} For  PPI network alignment, we use the popular benchmark dataset NAPAbench~\cite{NAPAbench} of PPI networks. It consists of several different families of PPI networks generated from the real proteins. The {\em duplication mutation complementation (DMC)}~\cite{vazquez2003modeling} and {\em duplication with random mutation (DMR) } model~\cite{sole2002model} are two different node-duplication network growth models, while {\em crystal growth (CG)} model~\cite{kim2008age} generates the  networks by simulating the physics of growing protein crystals. 
We use $3$ pairs of PPI networks from these  $3$ models, where each network is a graph containing $3000$ to $10000$ nodes. In the   preprocessing step, we apply the popular {\em node2vec} technique~\cite{grover2016node2vec} to represent each network by a group of vectors in $\mathbb{R}^{50}$; following the approach of~\cite{DBLP:conf/aaai/LiuDC017}, we assign a unit weight to each vector. 

\textbf{(2)} For unsupervised bilingual lexicon induction, we have 5 pairs of languages: {\em Chinese-English (zh-en), Spanish-English (es-en), Italian-English (it-en), Japanese-Chinese (ja-zh), and Turkish-English (tr-en)}. Given the datasets from~\cite{DBLP:conf/emnlp/ZhangLLS17}, each language has a vocabulary list containing $3000$ to $13000$ words; we also follow their preprocessing method that represents all the words by the vectors in $\mathbb{R}^{50}$ through the embedding technique~\cite{mikolov2013exploiting}. Each  vocabulary list is represented by a distribution in the space where each vector has the weight equal to the corresponding frequency in the language.

\textbf{(3)} For domain adaption, we consider  the Caltech-Office dataset from~\cite{gong2012geodesic} that was widely studied before.  The Caltech-Office dataset contains $10$ categories of images across  $4$ different domains: \textit{Amazon} (A), \textit{Caltech10} (C), \textit{DSLR} (D), and \textit{Webcam} (W). Each dataset contains $3000$ to $7000$ data items in $\mathbb{R}^{800}$.  We use ``$\rightarrow$'' to denote the transform between two domains, {\em e.g.,} ``\textit{C$\rightarrow$A}'' represents the transform from \textit{Caltech10} to  \textit{Amazon}. Similar with the previous research on domain adaptation~\cite{courty2016optimal,perrot2016mapping,gong2012geodesic,jin2021two}, we use the labeled data in the source domain as the training data and use the k-Nearest Neighbor (k-NN) method as the classifier to predict the labels in the target domain.


\subsection{Algorithms for Testing}
\label{sec-testalg}

We use the   alignment algorithm~\cite{cohen1999earth} (due to its simplicity and practicality) and consider the following data compression methods for comparison. 
\begin{itemize}
	\item \textbf{\textsc{Original}:} directly run the   alignment algorithm on the original datasets without compression. 
	\item \textbf{\textsc{KCenter}:} our proposed method Algorithm~\ref{alg-1}, {\em i.e.,} run the Gonzalez's $k$-center algorithm to compress the input data.  
	\item \textbf{\textsc{KCenter+}:} replace each ball center by the mean point of the cluster (see Section~\ref{sec-ext2}).
	
	\item \textbf{\textsc{KMeans}:} replace the  Gonzalez's $k$-center algorithm by the $k$-means clustering algorithm~\cite{lloyd1982least} for compression (the initialization is implemented by the $k$-means++ seeding~\cite{arthur2007k}). 
	
	\item \textbf{\textsc{Random}:} sample $k$ points uniformly at random from $A$ and $B$ separately, and each sampled point of $A$ ({\em resp.,} $B$) has the weight $\frac{W_A}{k}$ ({\em resp.,} $\frac{W_B}{k}$).
	\item \textbf{\textsc{Random+}:} sample $k$ points uniformly at random from $A$ and $B$ separately; we assign each point of $A$ ({\em resp.,} $B$) to its nearest sampled point; each sampled point of $A$ ({\em resp.,} $B$) has the weight  equal to the total weights of the points that assigned to it. 
	
	\item \textbf{\textsc{StochasticOpt}:} the stochastic gradient descent algorithm that uses the batches of subsamples from the two input patterns  to compute the optimal transportation and  the orthogonal matrices for rigid transformations~\cite{grave2019unsupervised}.

\end{itemize}

%

\subsection{Results} 

For each instance, we vary the compression rate $\gamma=\frac{k}{(n_1+n_2)/2}$ (recall that $n_1$ and $n_2$ are respectively the numbers of points of $A$ and $B$). We also consider the robust alignment with fractional Wasserstein distance. We set the value $\lambda\in [0.9,1]$. For PPI network alignment and unsupervised bilingual lexicon induction, we evaluate the performance based on  the obtained  Wasserstein distance of two patterns and the normalized running time over the time of \textsc{Original} ({e.g.,} if the normalize running time is $0.1$, it means the algorithm saves $90\%$ runtime compared with \textsc{Original}). The runtime of each method includes  the time for data compression,  the time for alignment on the compressed data, and the time for computing the final Wasserstein flow. 

	\textbf{PPI network alignment.}
	To see the trends of the algorithms, we first show the  results  of PPI network alignment on the CG dataset  in Figure~\ref{fig:CG1}.
	The compression rate $\gamma $ ranges from $0.02$ to $0.1$.
	Our \textsc{KCenter+} compression method can achieve the  performance close to \textsc{Original}, but it takes significant lower runtimes comparing with other baselines (\textsc{Random} is always the fastest one, since it is just simple uniform sampling; but it always obtained the largest Wasserstein distance). 
	The similar results  of the other two PPI datasets are shown in Figure~\ref{fig:DMC1} and Figure~\ref{fig:DMR1}, respectively. 
	The algorithms also achieve the similar performances for the experiments on fractional Wasserstein distance, where the results on the three PPI datasets  are shown in Figure~\ref{fig:CG2}, Figure~\ref{fig:DMC2}, and Figure~\ref{fig:DMR2}, respectively. 
	We vary the fraction parameter $\lambda$ from $0.9$ to $1$, and fix the compression rate $\gamma$   to be $0.1$. The experimental results suggest that our proposed methods \textsc{KCenter} and \textsc{KCenter+} also work well for fractional Wasserstein distance. 
	To see the influence from the dimension, we also conduct the following experiment on the CG dataset. Previously, we set the dimension to be $50$ through  {\em node2vec}~\cite{grover2016node2vec}; now we vary the dimension from $50$ to $300$ by tuning the {\em node2vec} algorithm. We show the results of Wasserstein alignment and fraction Wasserstein alignment in Figure~\ref{fig:CG_dim_size} and Figure~\ref{fig:CG_dim_noise}, respectively. The compression rate $\gamma $ is fixed to be $0.1$, and the fraction value  $\lambda$ is fixed to be $0.9$ for the experiment on fractional Wasserstein distance. We can see that for different dimensions, \textsc{KCenter+} can achieve the performance close to \textsc{Original} but  with much lower running time.

\textbf{Unsupervised bilingual lexicon induction.}
The parameters $\gamma$ and $\lambda$ for unsupervised bilingual lexicon induction are set to be as same as the experiments for PPI network alignment. We show the results of Wasserstein alignment and fractional Wasserstein alignment on \textsc{es-en} in Figure~\ref{fig:esen1}  and Figure~\ref{fig:esen2} respectively. Similar with the experiments on PPI networks, 
our \textsc{KCenter+} compression method   can achieve the  performance close to \textsc{Original}.
For the other three unsupervised bilingual lexicon induction pairs,  the similar experimental results are shown in Figure~\ref{fig:iten1}, Figure~\ref{fig:jazh1}, Figure~\ref{fig:tren1}, and Figure~\ref{fig:zhen1}, respectively; the experimental results on fractional Wasserstein distance are shown in Figure~\ref{fig:iten2}, Figure~\ref{fig:jazh2}, Figure~\ref{fig:tren2}, and Figure~\ref{fig:zhen2}, respectively.

\textbf{Domain adaptation.}
The results of Wasserstein distance, normalized time and accuracy for different compression rates are shown in Tables \ref{tab:DA_nonoiseemd}, \ref{tab:DA_nonoisetime}, \ref{tab:DA_nonoiseacc}, respectively. We illustrate the best results with bold fonts in the tables. We can see that
\textsc{KCenter+} performs better than the other five baselines for most cases. 
We also illustrate the results on fractional Wasserstein distance for domain adaption in Tables \ref{tab:DA_noiseemd}, \ref{tab:DA_noisetime}, and \ref{tab:DA_noiseacc}, where their performances are similar to the results in Tables \ref{tab:DA_nonoiseemd}-\ref{tab:DA_nonoiseacc}.  
We vary the value of $\lambda$ from $0.9$ to $1.0$, and the compression rate $\gamma $ is fixed to be $0.1$.

\textbf{Experimental conclusion.} 
Overall, our proposed \textsc{KCenter+} usually outperforms the other baselines and 
achieves the results close to  \textsc{Original}. Although \textsc{KMeans} can also  archive similar good performance as \textsc{KCenter+}, it takes significant higher runtimes than \textsc{KCenter+}.  Also, \textsc{KCenter+} often has  a higher classification accuracy for domain adaption than the other baselines (except for \textsc{Original}). 

\section{Conclusion}
In this paper, we propose a novel framework for compressing point sets in high dimensions, so as to approximately preserve the quality for the Wasserstein alignment. This work is motivated by several emerging applications in the fields of machine learning and bioinformatics. Our method utilizes the property of low doubling dimension, and yields a significant speedup for the computation of alignment. In the experiments, we show that the proposed compression approach can efficiently reduce the running time to a great extent. In the future, it is interesting to consider the alignment problems for other distances rather than the Wasserstein distance. It is also important to study several other issues of the alignment problems, such as the robustness and parallel implementation.

\newcounter{sd2}

\begin{figure}[htbp]
	\begin{center}
		\centerline
		{\includegraphics[width=0.44\columnwidth]{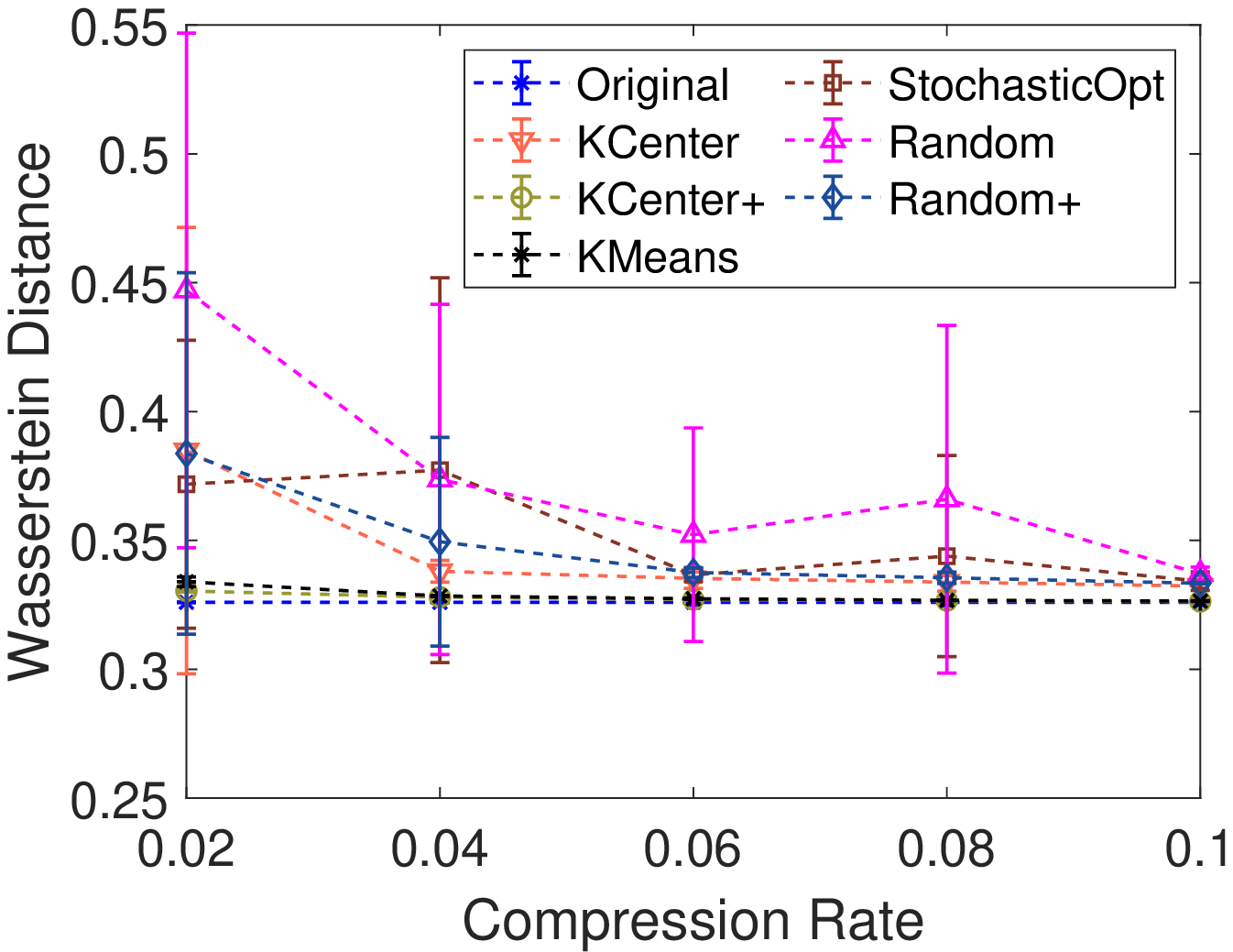} 
			\hspace{0.1in}
			\includegraphics[width=0.44\columnwidth]{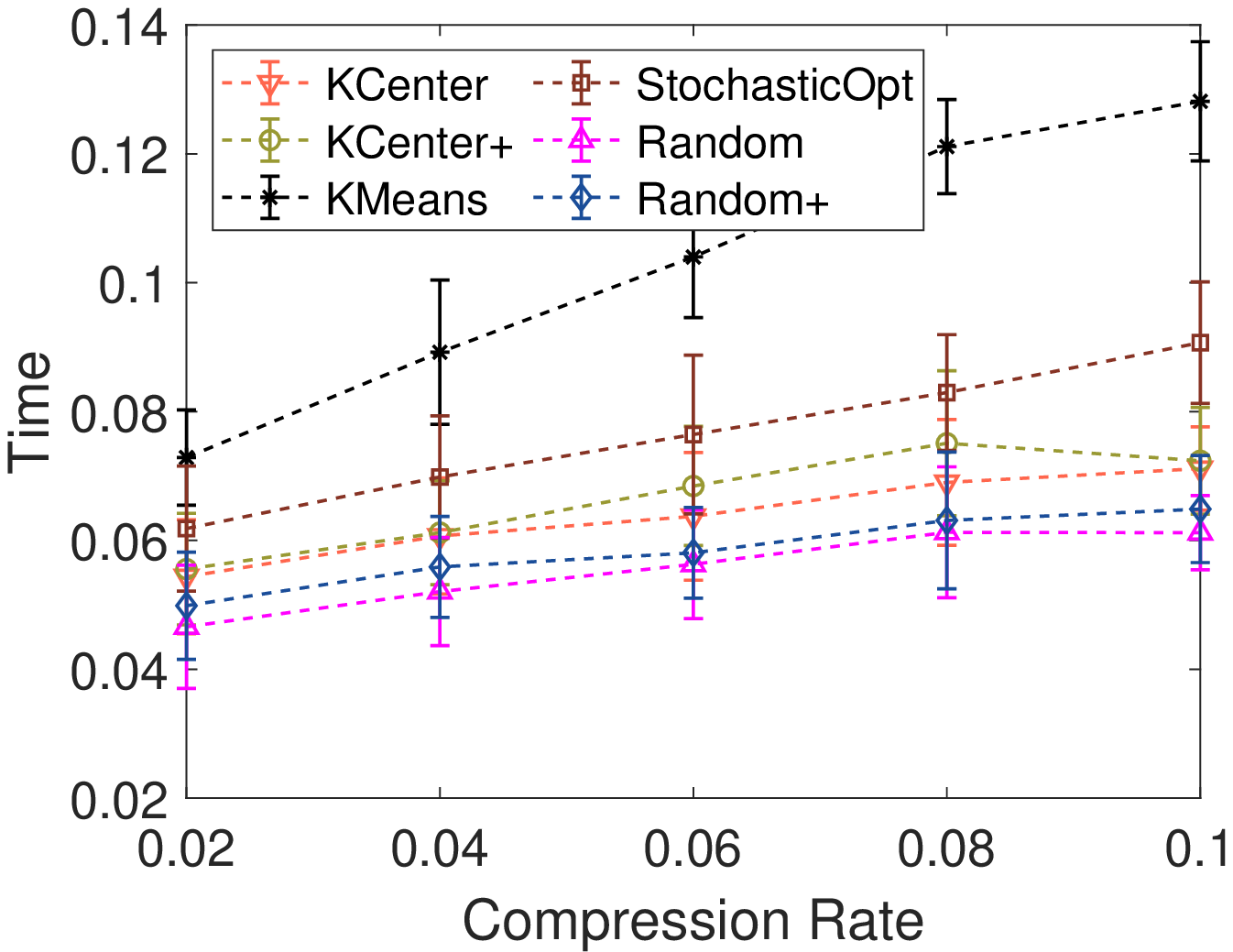}}
	\end{center}
	\vspace{-0.25in}
	\caption{The Wasserstein  distance and normalized running time on \textsc{CG} for PPI network alignment. } 
	\label{fig:CG1}  
	
\end{figure}
\begin{figure}[htbp]
	\begin{center}
		\centerline
		{\includegraphics[width=0.44\columnwidth]{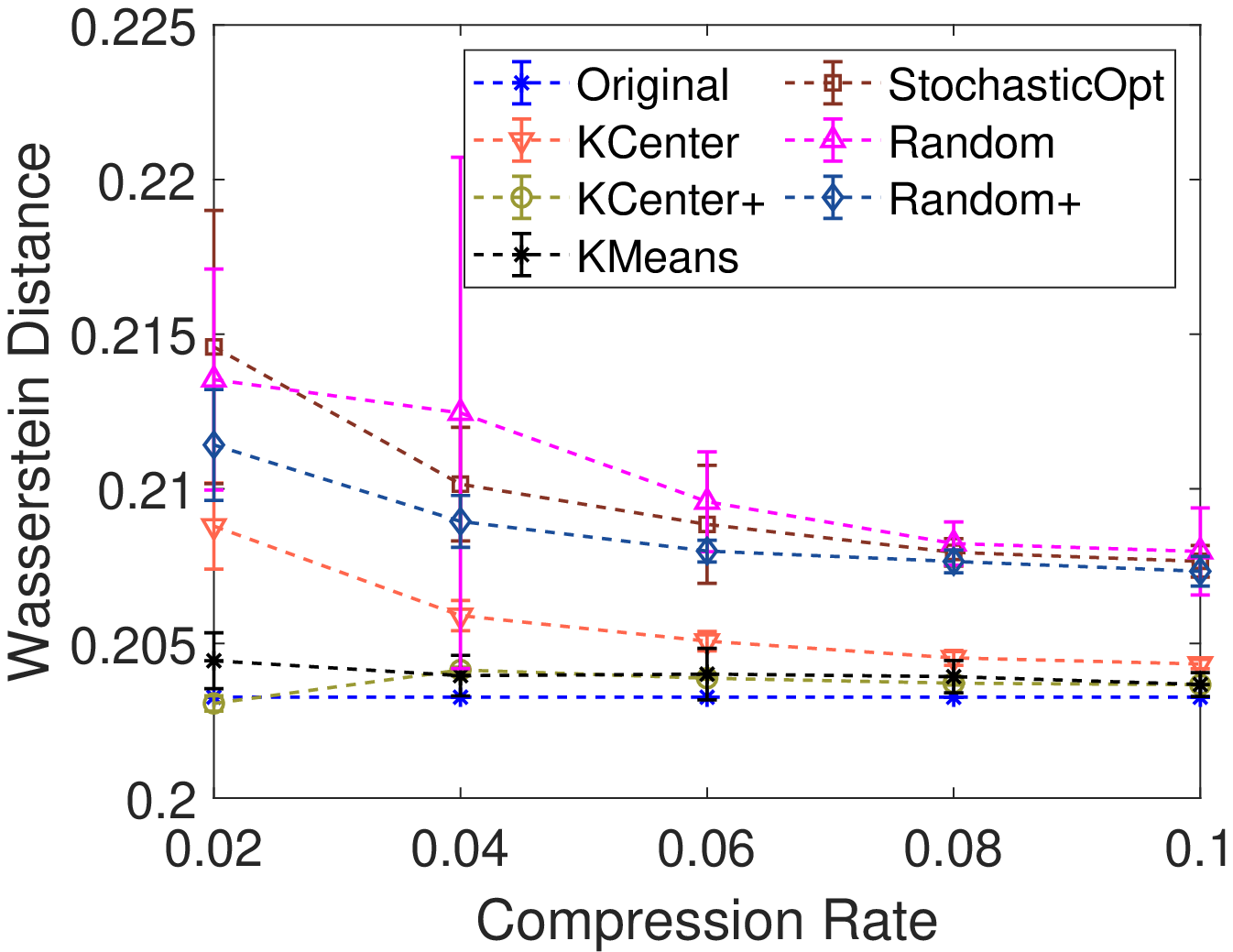} 
			\hspace{0.1in}
			\includegraphics[width=0.44\columnwidth]{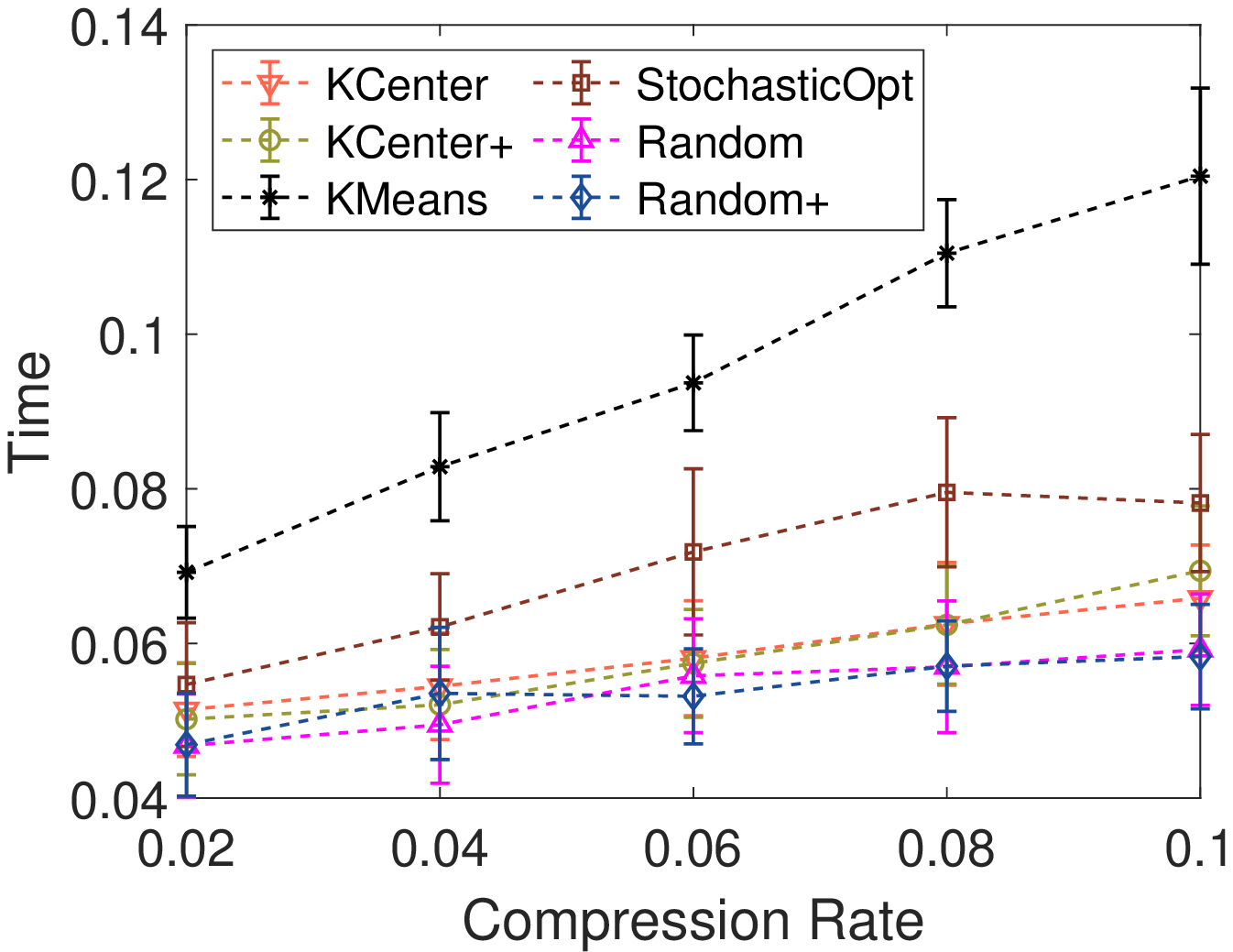}}
	\end{center}
	\vspace{-0.25in}
	\caption{The Wasserstein  distance and normalized running time on \textsc{DMC} for PPI network alignment. } 
	\label{fig:DMC1}  
\end{figure}

\begin{figure}[htbp]
	\begin{center}
		\centerline
		{\includegraphics[width=0.44\columnwidth]{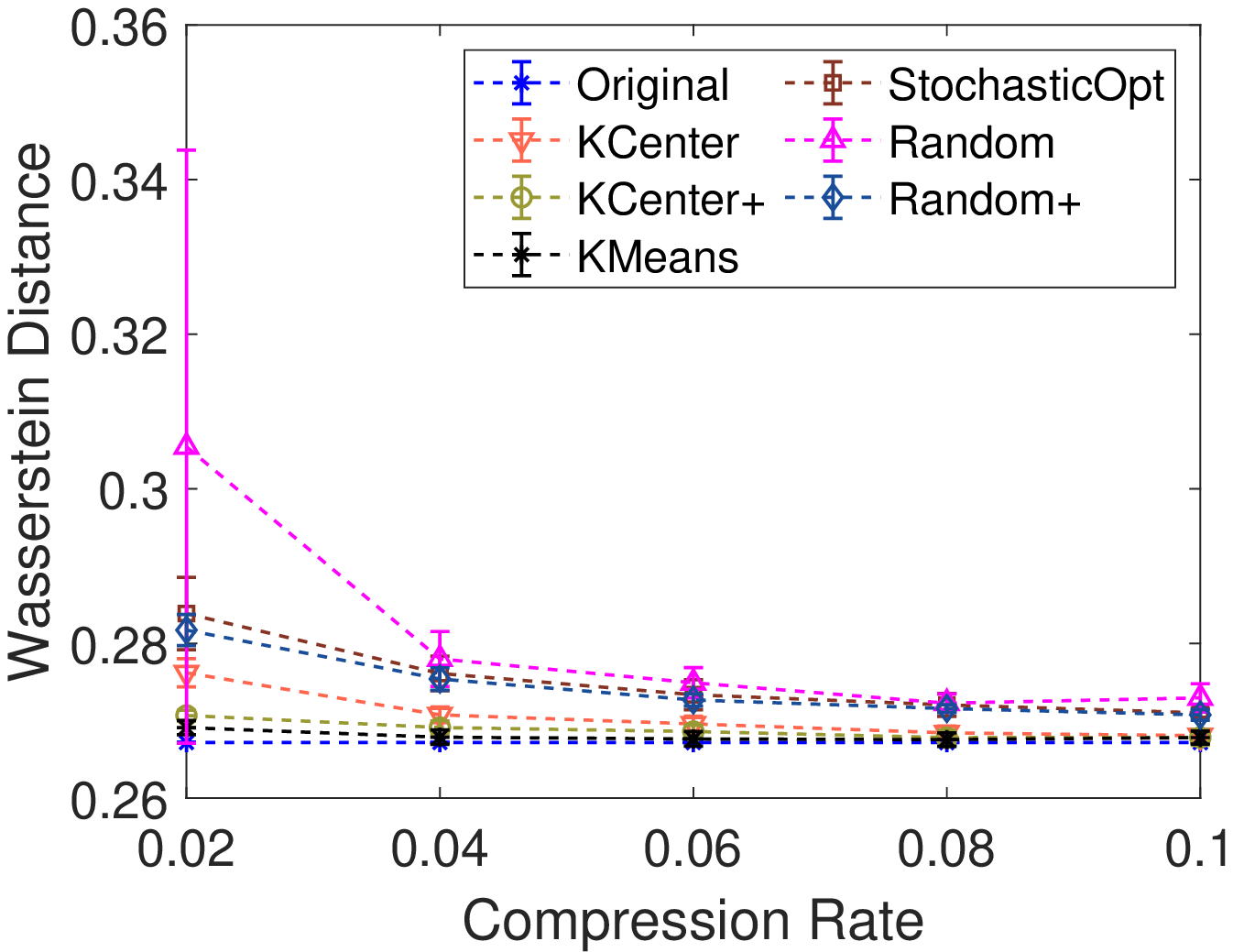} 
			\hspace{0.1in}
			\includegraphics[width=0.44\columnwidth]{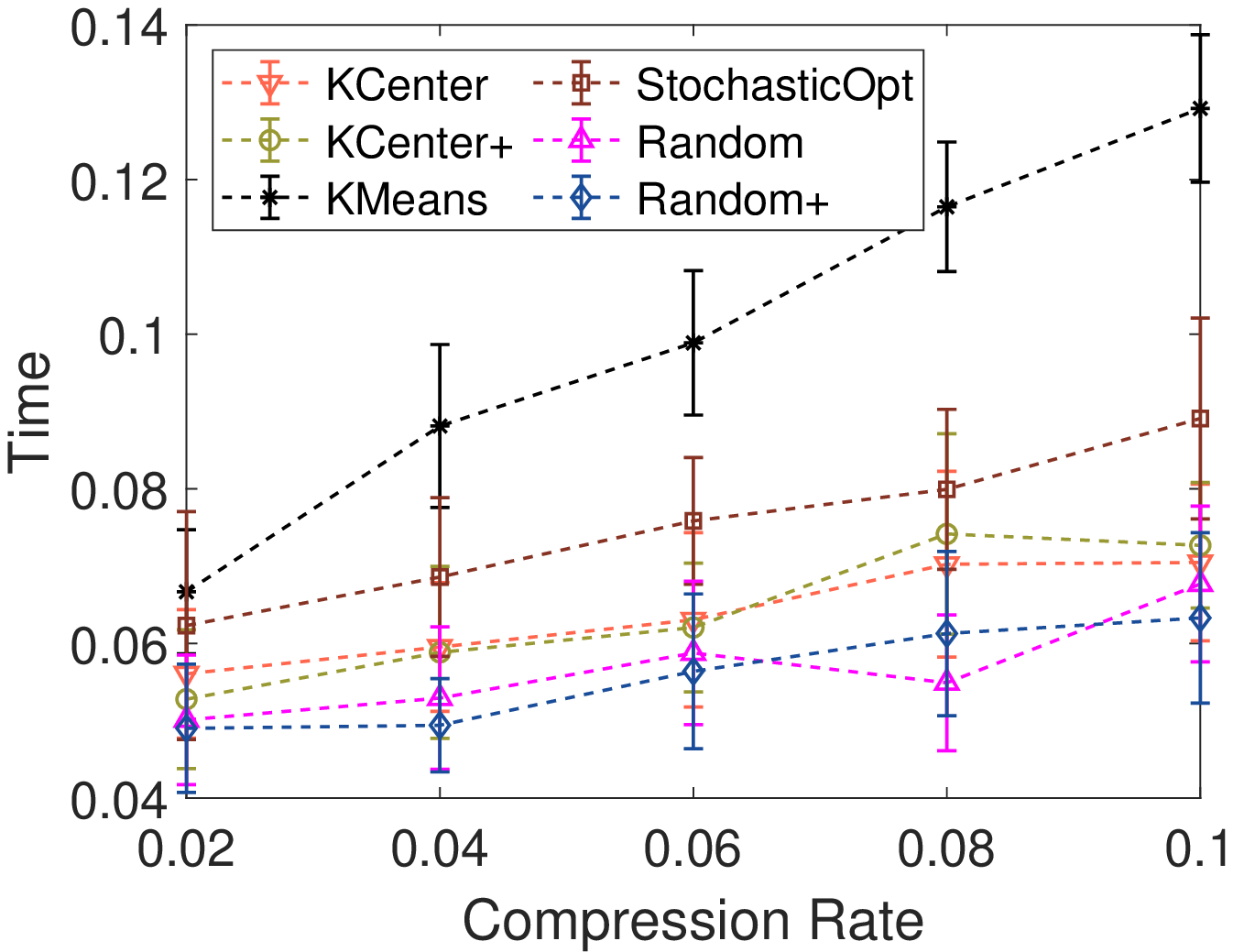}}
	\end{center}
	\vspace{-0.25in}
	\caption{The Wasserstein  distance and normalized running time on \textsc{DMR} for PPI network alignment. } 
	\label{fig:DMR1}  
\end{figure}

\begin{figure}[htbp]
	\begin{center}
		\centerline
		{\includegraphics[width=0.44\columnwidth]{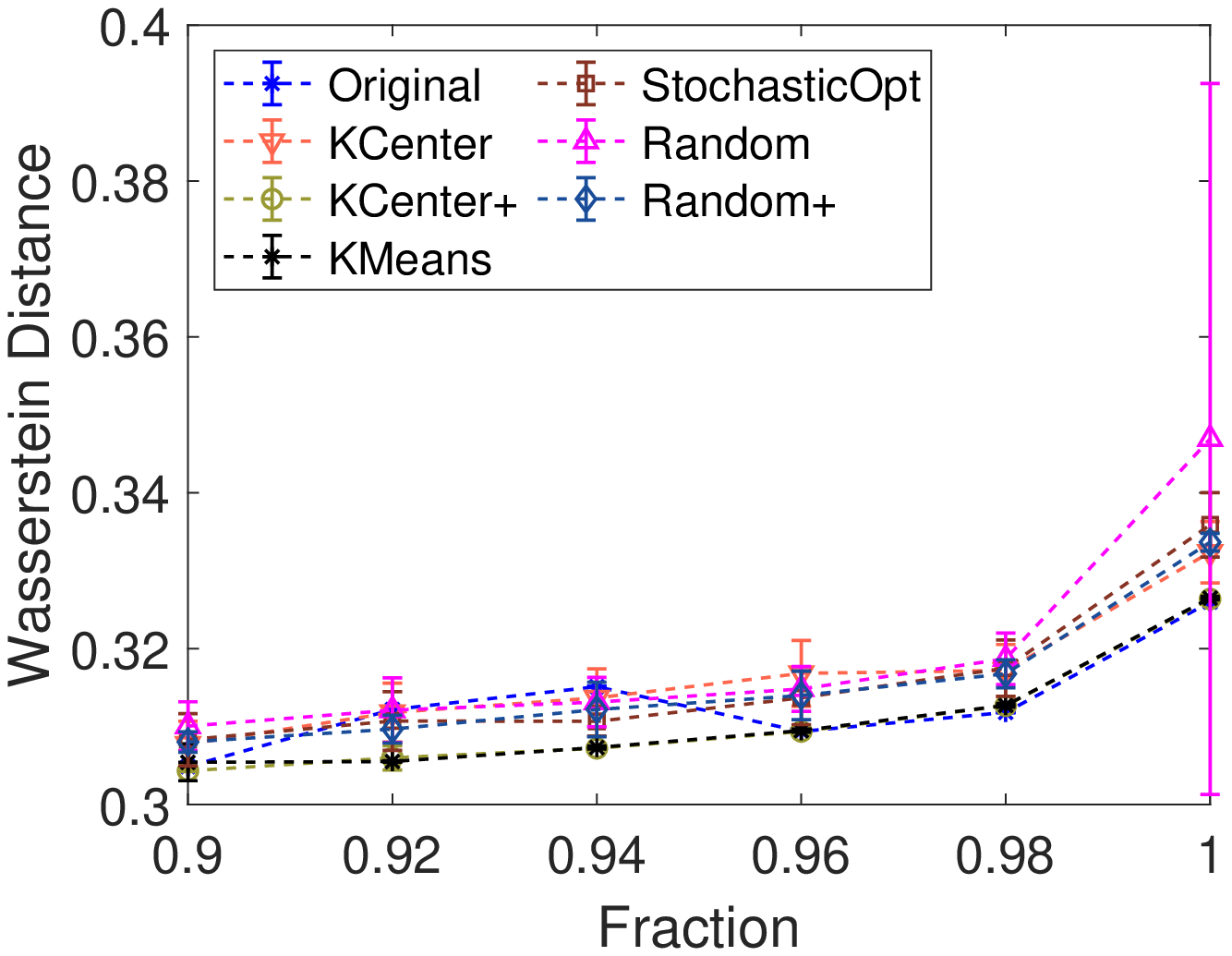} 
			\hspace{0.1in}
			\includegraphics[width=0.44\columnwidth]{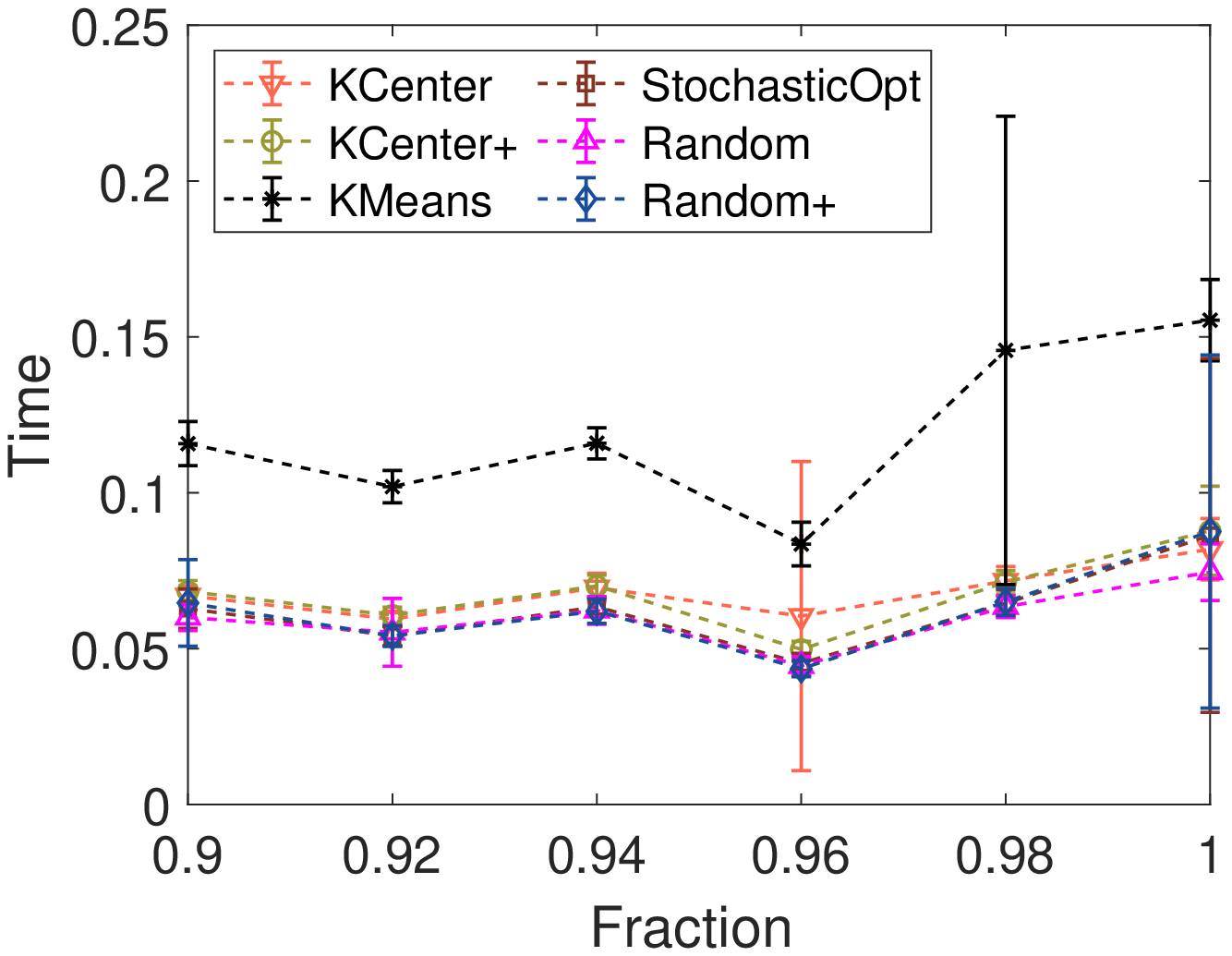}}
	\end{center}
	\vspace{-0.25in}
	\caption{The Wasserstein  distance and normalized running time on \textsc{CG} for PPI network alignment with different fraction $\lambda$. } 
	\label{fig:CG2}  
\end{figure}

\begin{figure}[htbp]
	\begin{center}
		\centerline
		{\includegraphics[width=0.44\columnwidth]{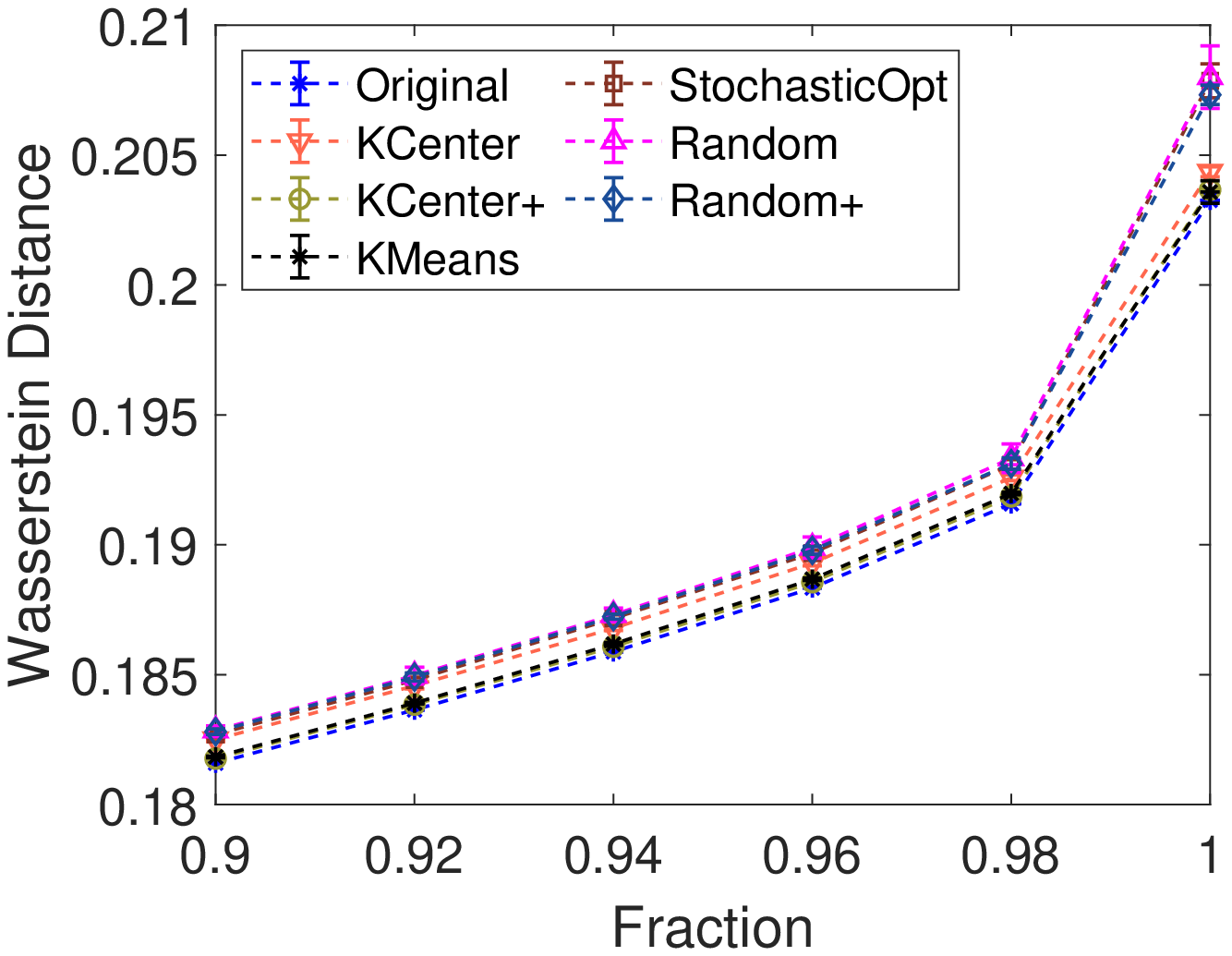} 
			\hspace{0.1in}
			\includegraphics[width=0.44\columnwidth]{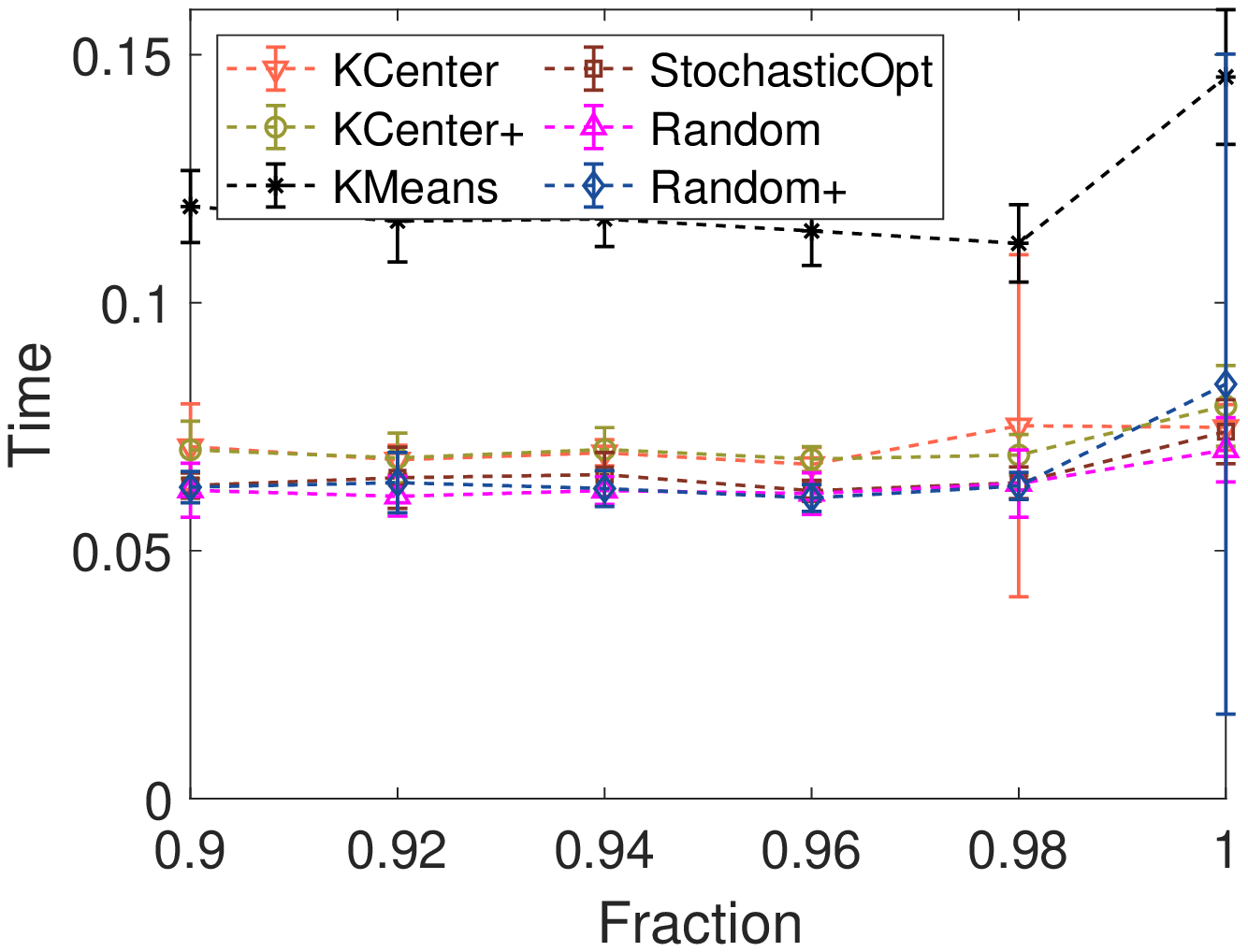}}
	\end{center}
	\vspace{-0.25in}
	\caption{The Wasserstein  distance and normalized running time on \textsc{DMC} for PPI network alignment with different fraction $\lambda$. } 
	\label{fig:DMC2}  
\end{figure}

\begin{figure}[htbp]
	\begin{center}
		\centerline
		{\includegraphics[width=0.44\columnwidth]{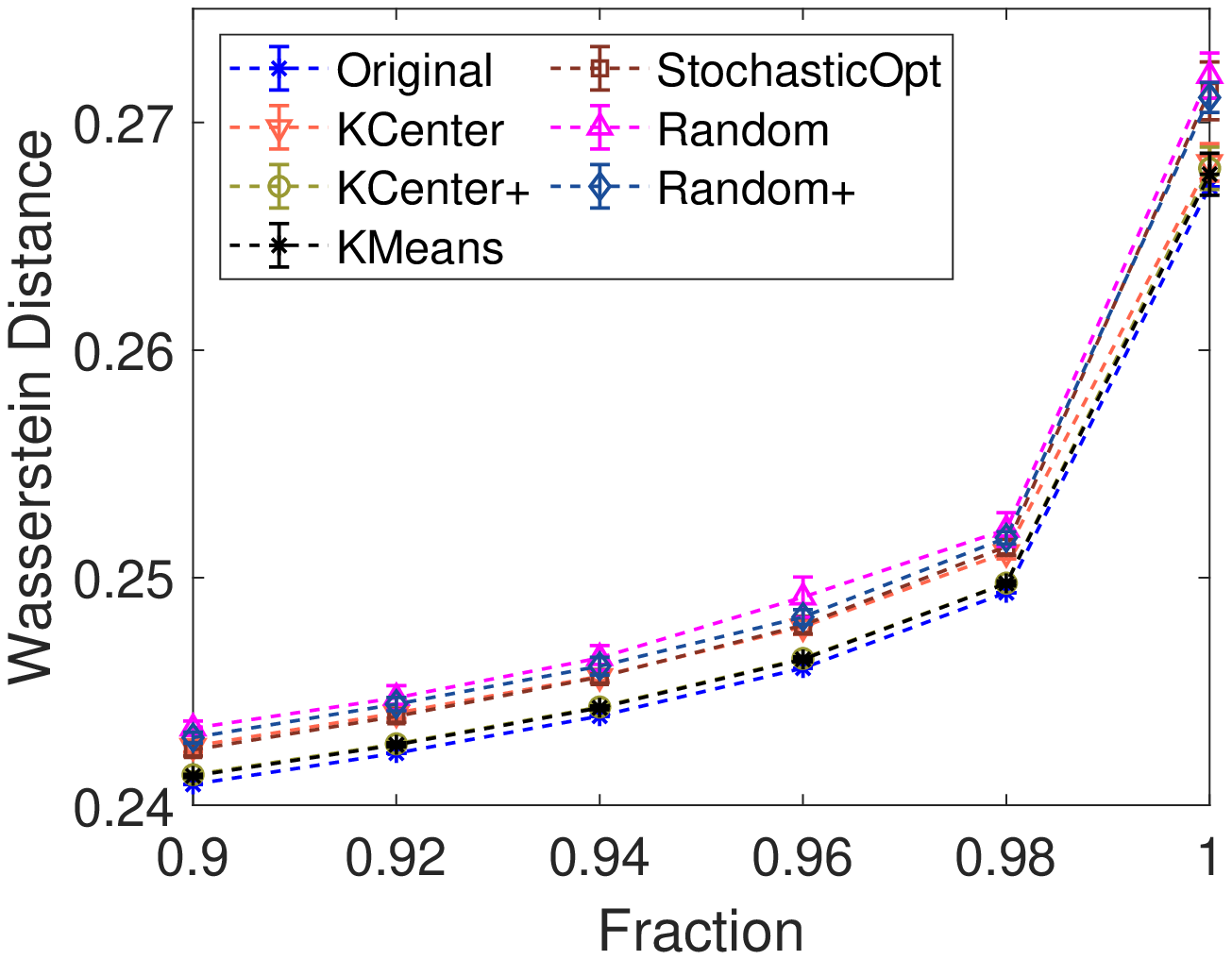} 
			\hspace{0.1in}
			\includegraphics[width=0.44\columnwidth]{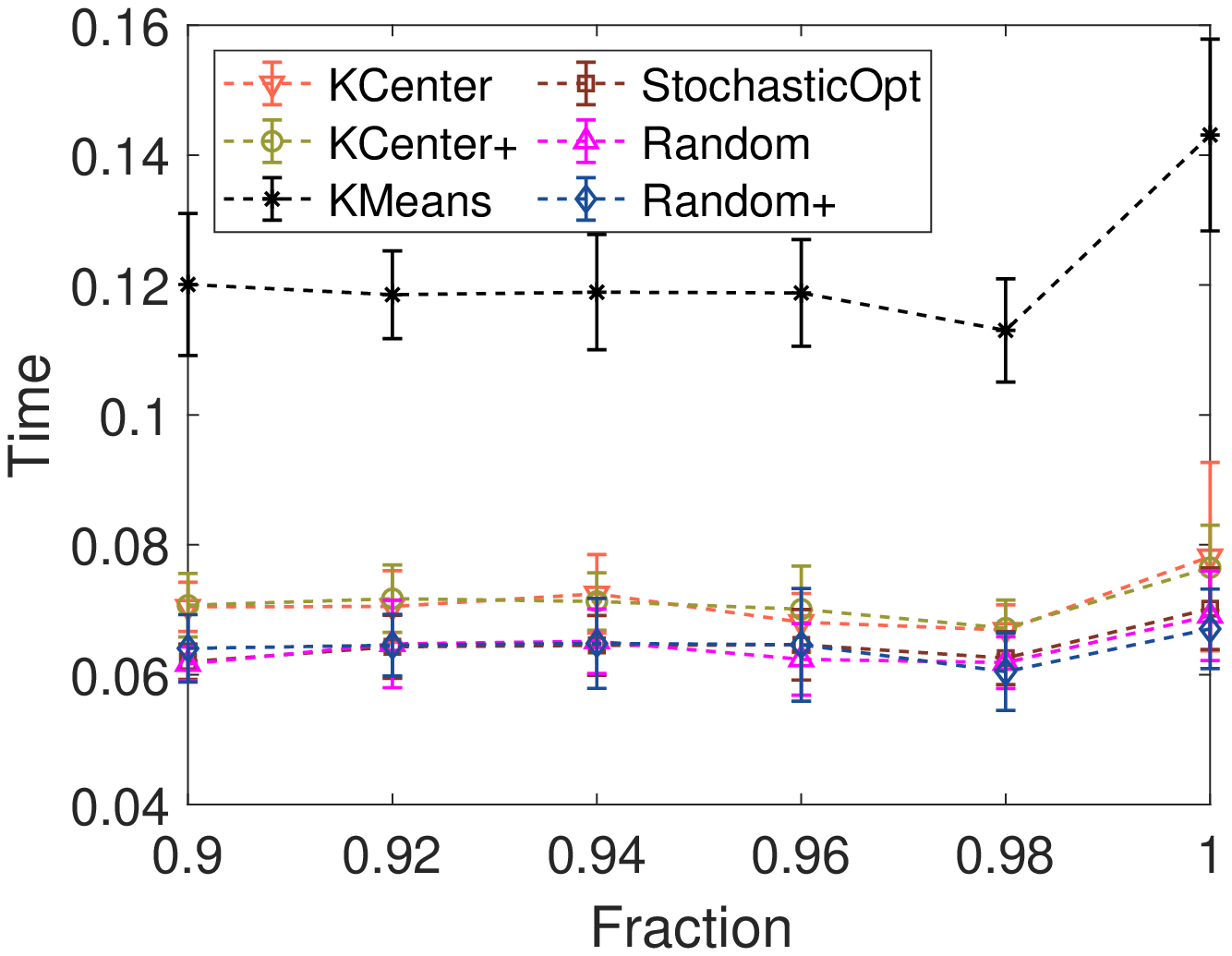}}
	\end{center}
	\vspace{-0.25in}
	\caption{The Wasserstein  distance and normalized running time on \textsc{DMR} for PPI network alignment with different fraction $\lambda$. } 
	\label{fig:DMR2}  
\end{figure}

\begin{figure}[htbp]
	\begin{center}
		\centerline
		{\includegraphics[width=0.44\columnwidth]{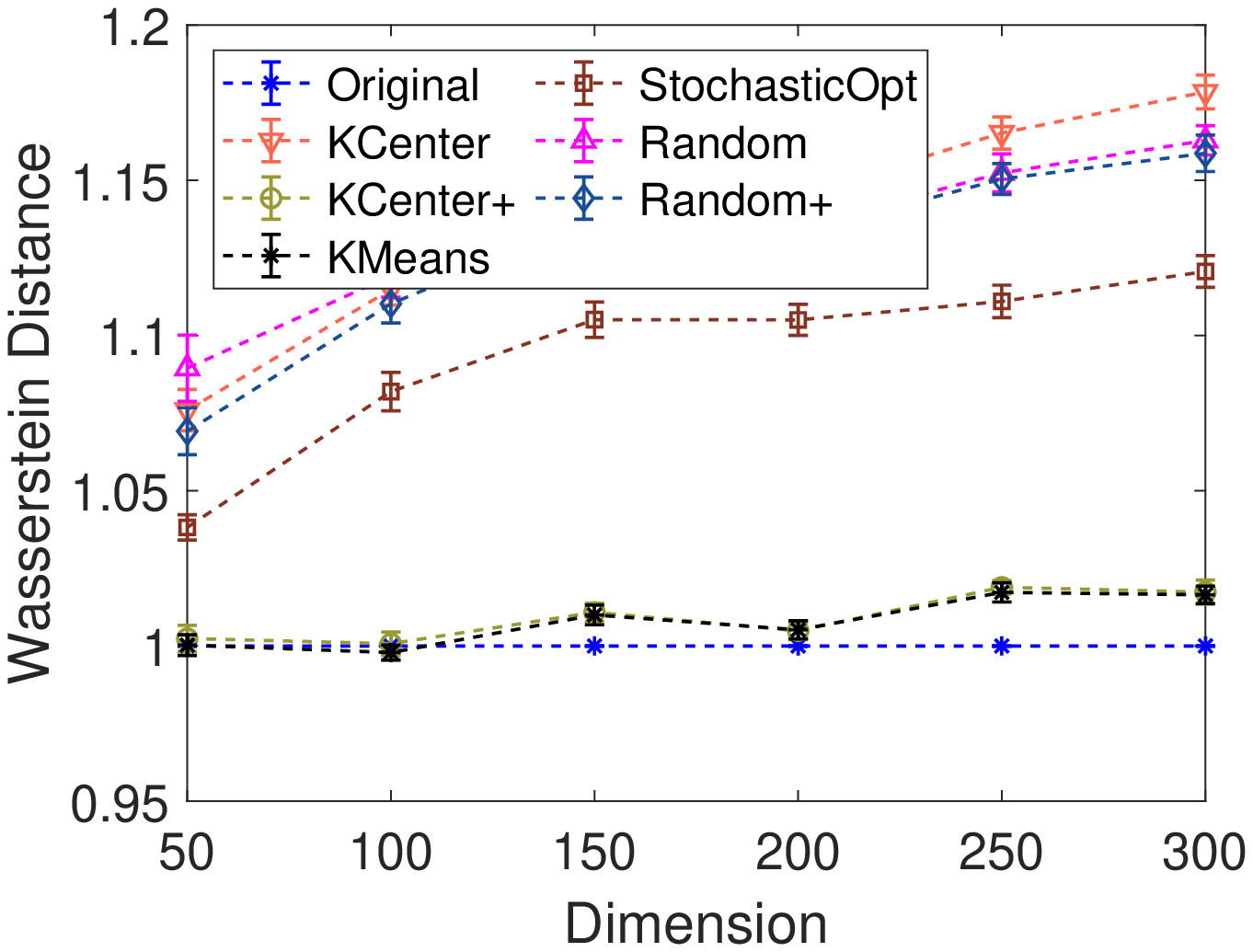} 
			\hspace{0.1in}
			\includegraphics[width=0.44\columnwidth]{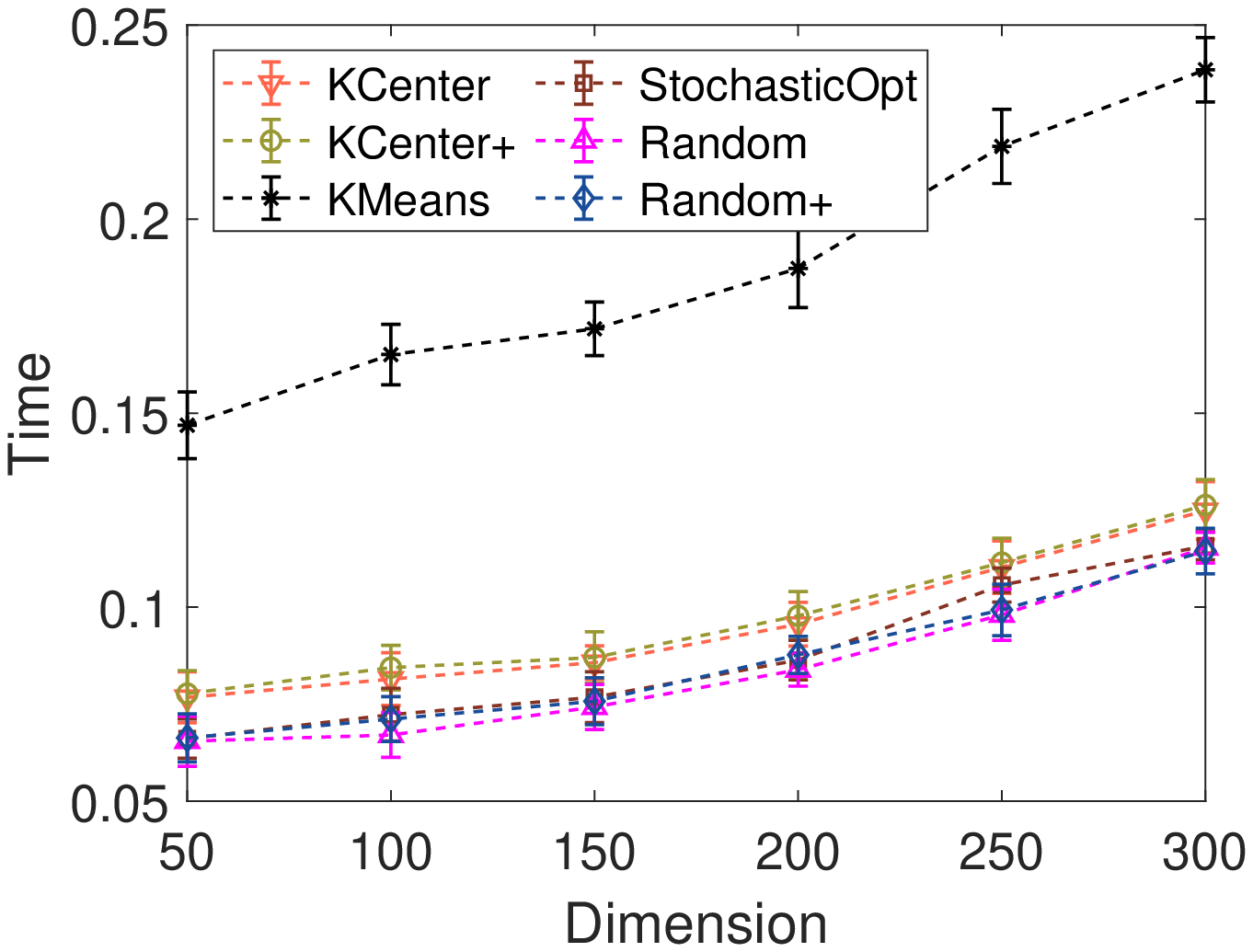}}
	\end{center}
	\vspace{-0.25in}
	\caption{The normalized Wasserstein  distance and normalized running time (over \textsc{Orignial}) on CG for PPI network alignment  with different dimensions. } 
	\label{fig:CG_dim_size} 
\end{figure}

\begin{figure}[htbp]
	\begin{center}
		\centerline
		{\includegraphics[width=0.44\columnwidth]{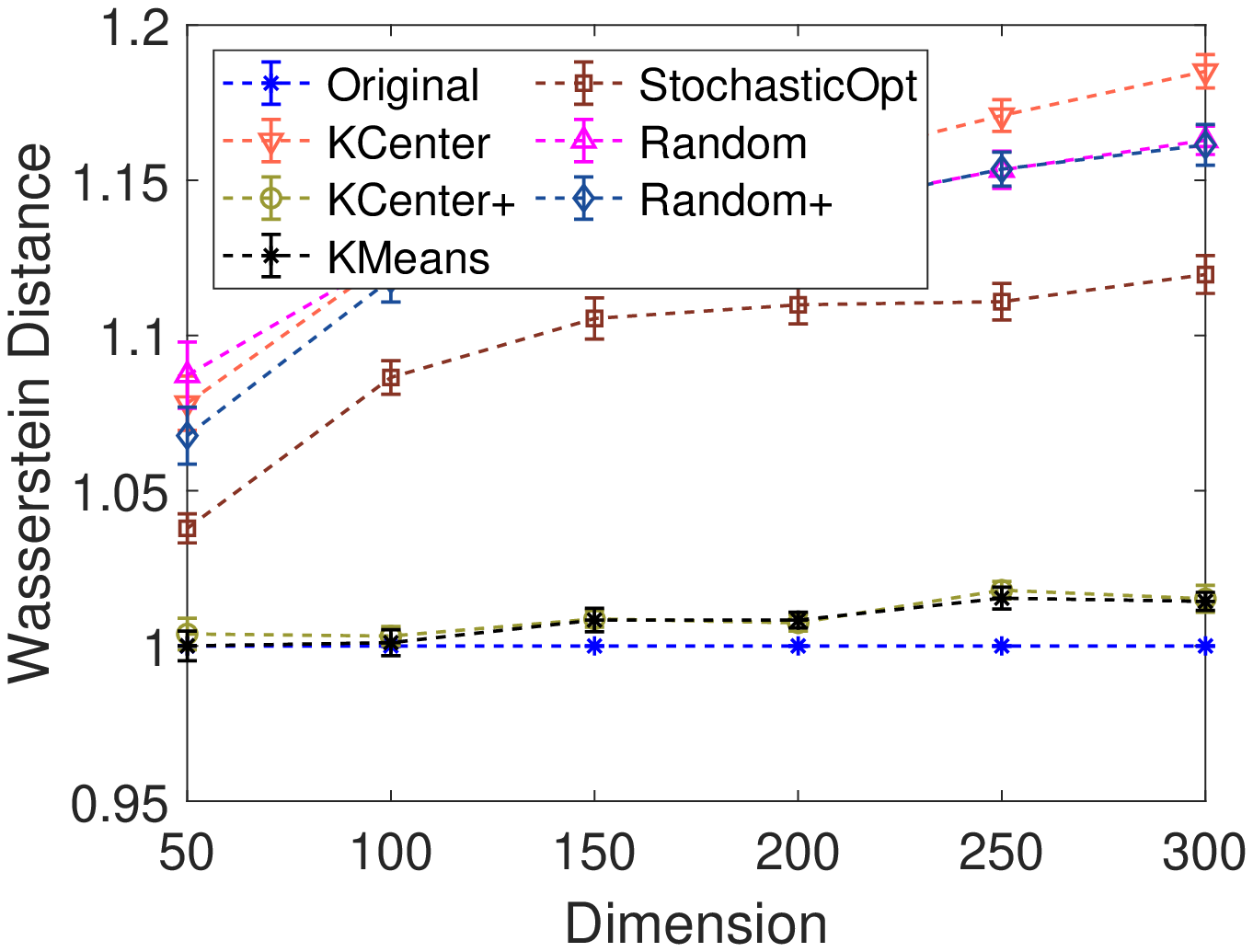} 
			\hspace{0.1in}
			\includegraphics[width=0.44\columnwidth]{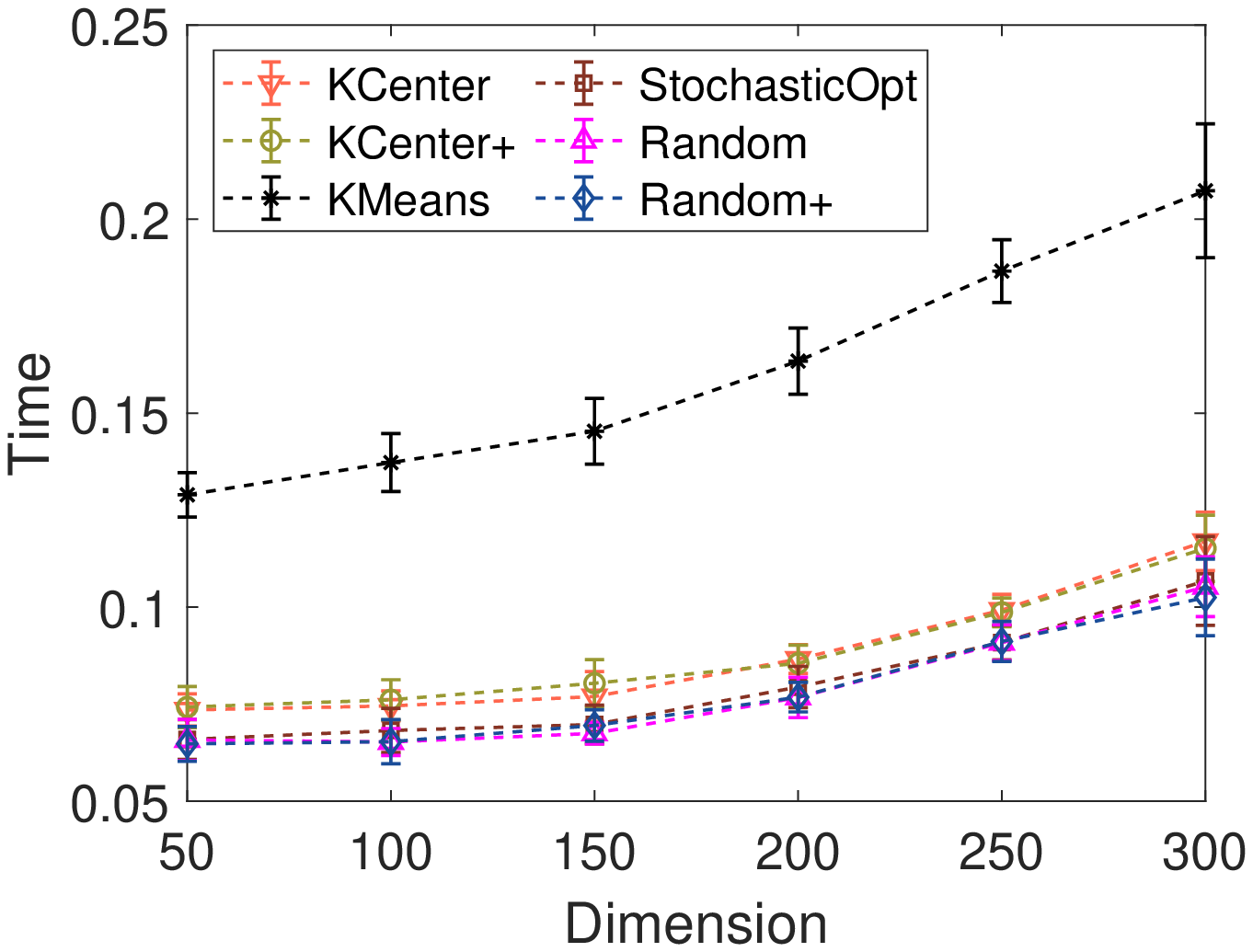}}
	\end{center}
	\vspace{-0.25in}
	\caption{The normalized fractional Wasserstein  distance and normalized running time (over \textsc{Orignial}) on CG for PPI network alignment with different dimensions. The fraction $\lambda$ is    $0.9$. } 
	\label{fig:CG_dim_noise} 
\end{figure}

\begin{figure}[H]
	\begin{center}
		\centerline
		{\includegraphics[width=0.44\columnwidth]{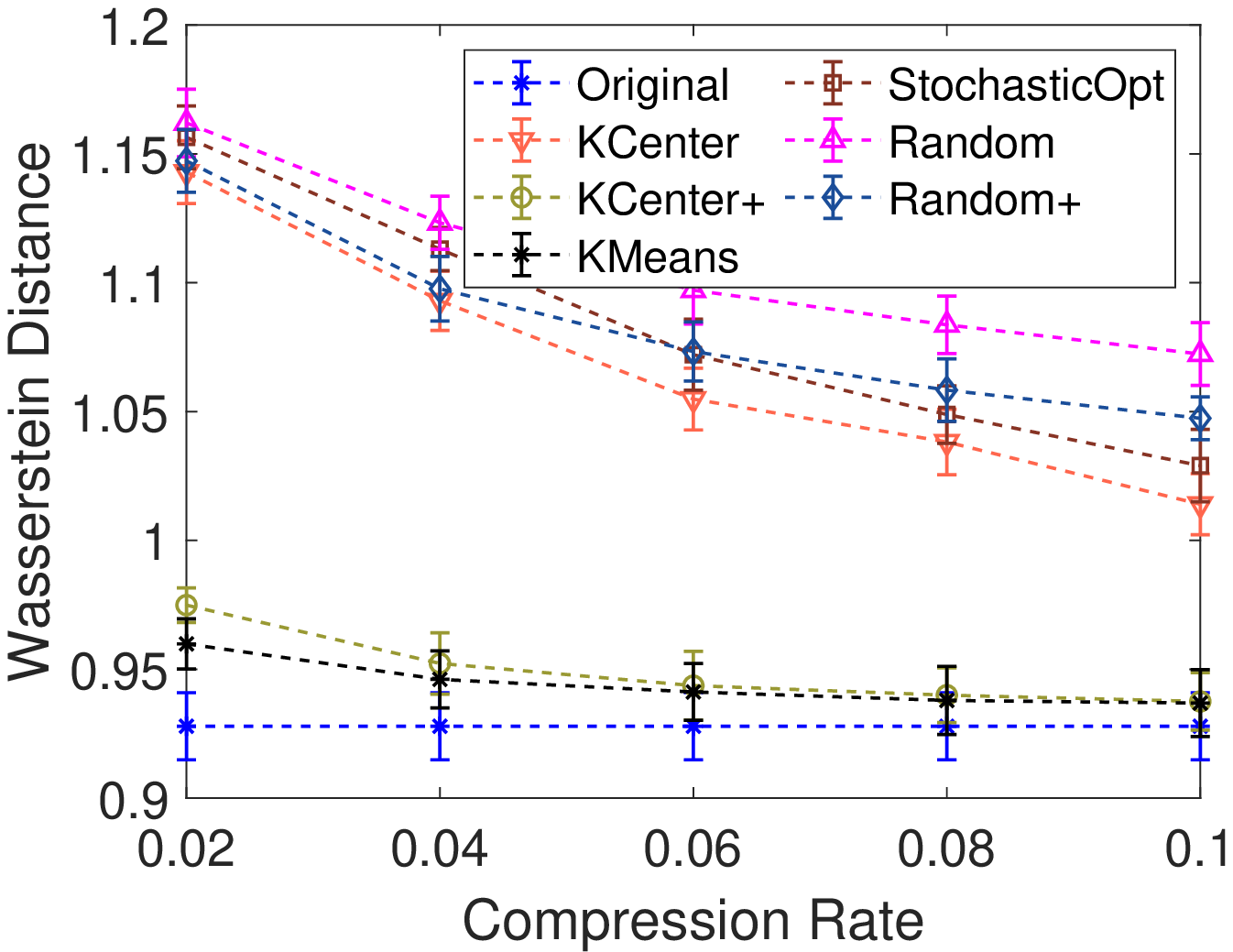} 
			\hspace{0.1in}
			\includegraphics[width=0.44\columnwidth]{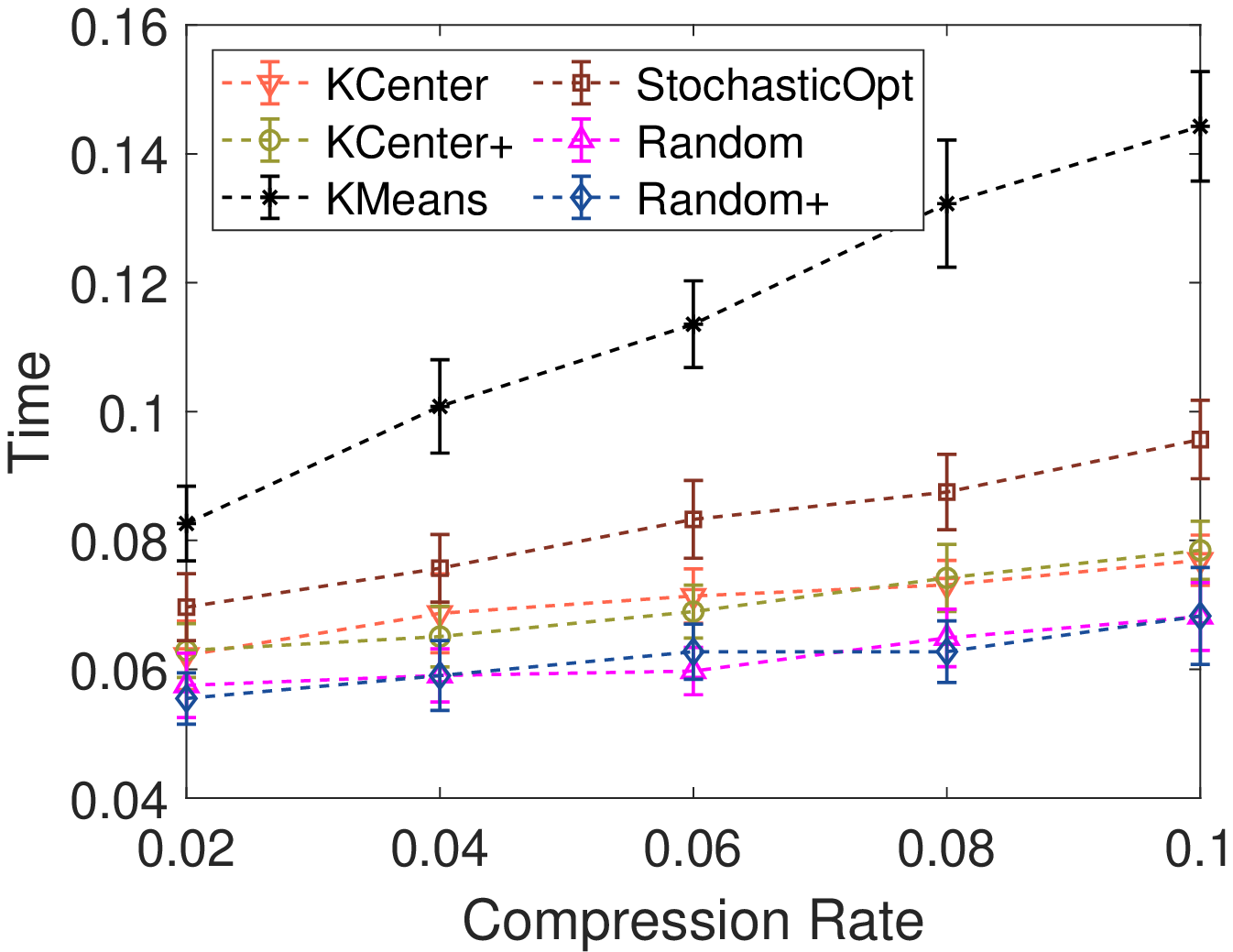}}
	\end{center}
	\vspace{-0.25in}
	\caption{The Wasserstein  distance and normalized running time on \textsc{es-en} for bilingual lexicon induction. } 
	\label{fig:esen1}  
\end{figure}

\begin{figure}[H]
	\begin{center}
		\centerline
		{\includegraphics[width=0.42\columnwidth]{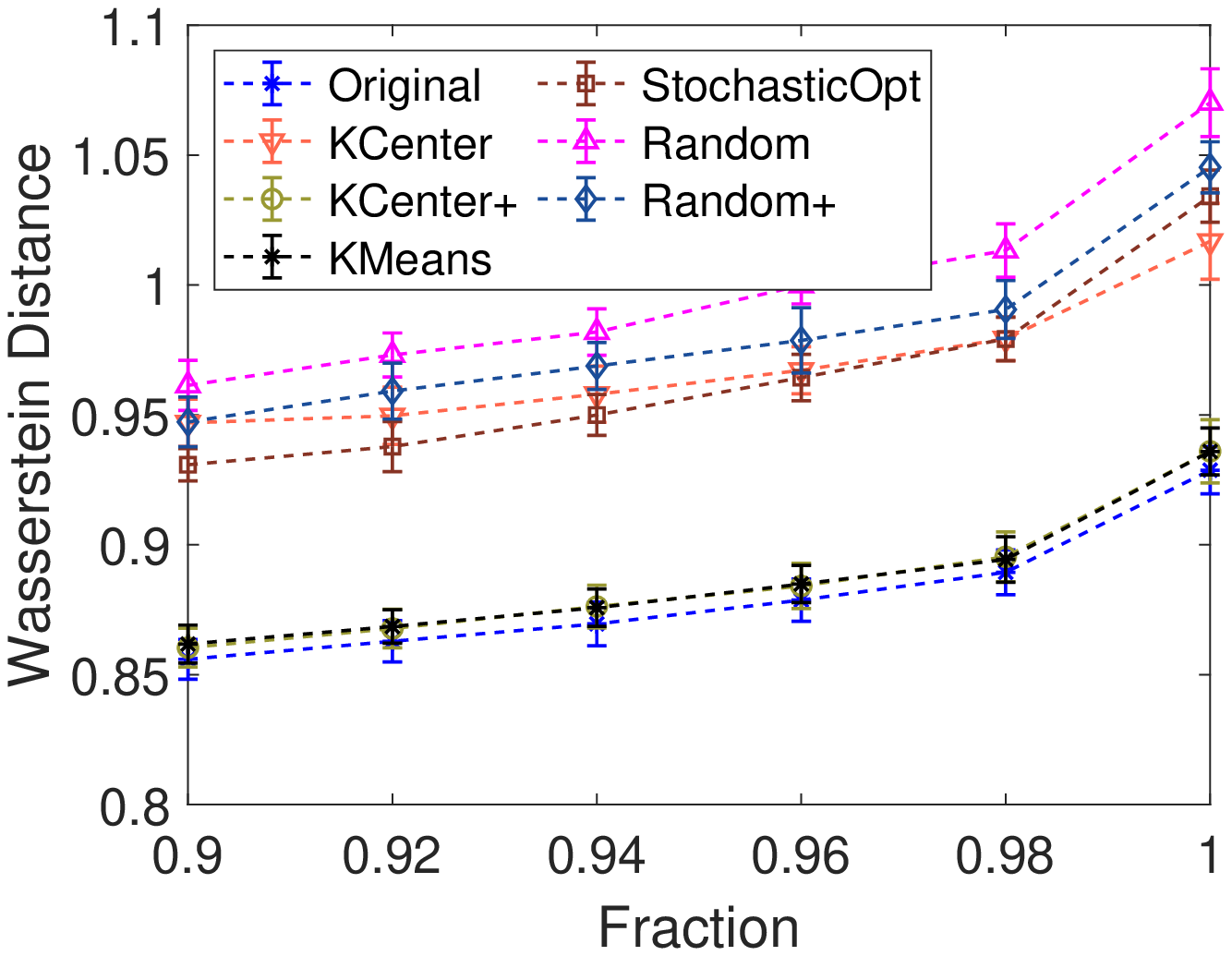} 
			\hspace{0.1in}
			\includegraphics[width=0.42\columnwidth]{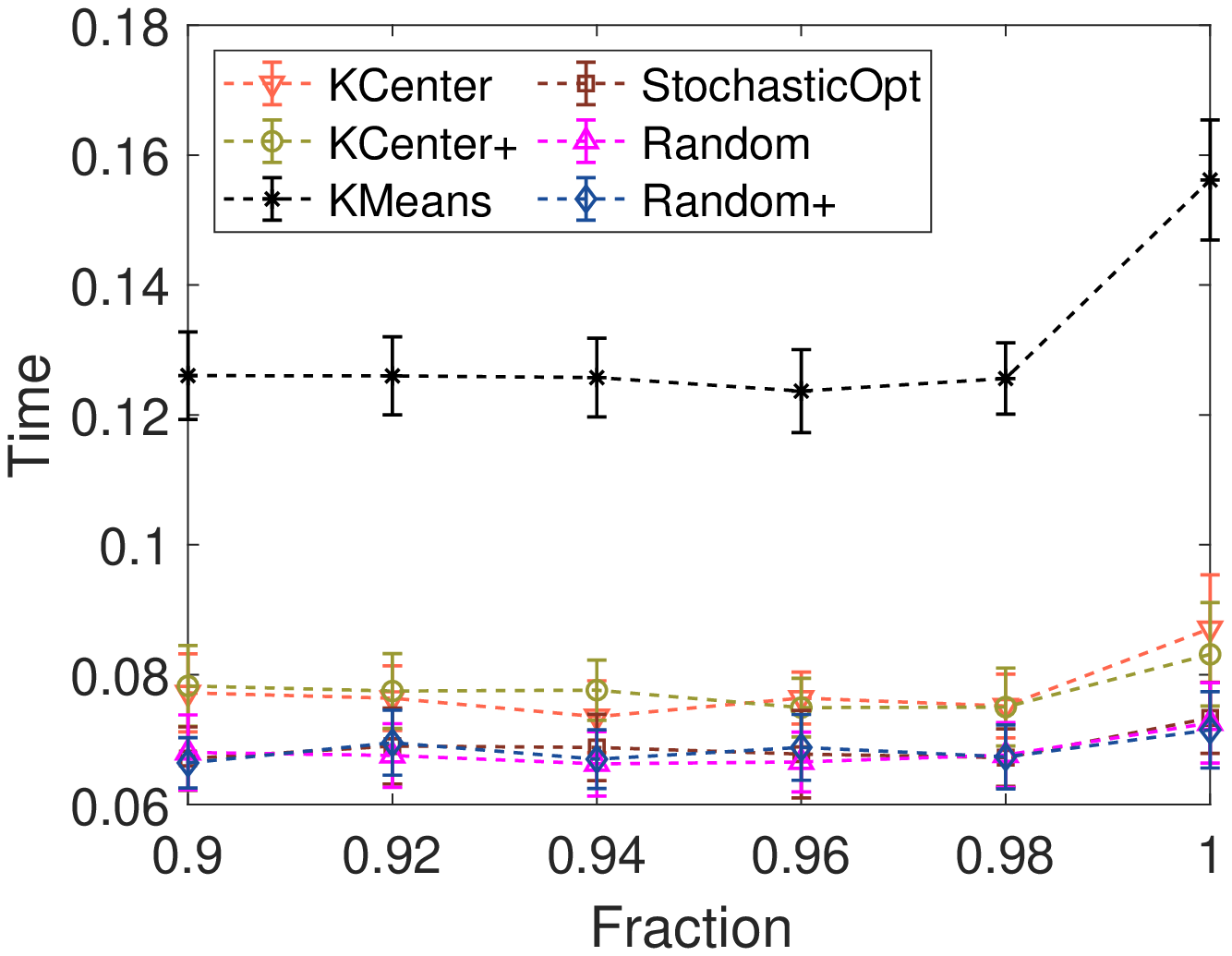}}
	\end{center}
	\vspace{-0.25in}
	\caption{The Wasserstein  distance and normalized running time on \textsc{es-en} for bilingual lexicon induction with different fraction $\lambda$. } 
	\label{fig:esen2}  
\end{figure}

\begin{figure}[H]
	\begin{center}
		\centerline
		{\includegraphics[width=0.44\columnwidth]{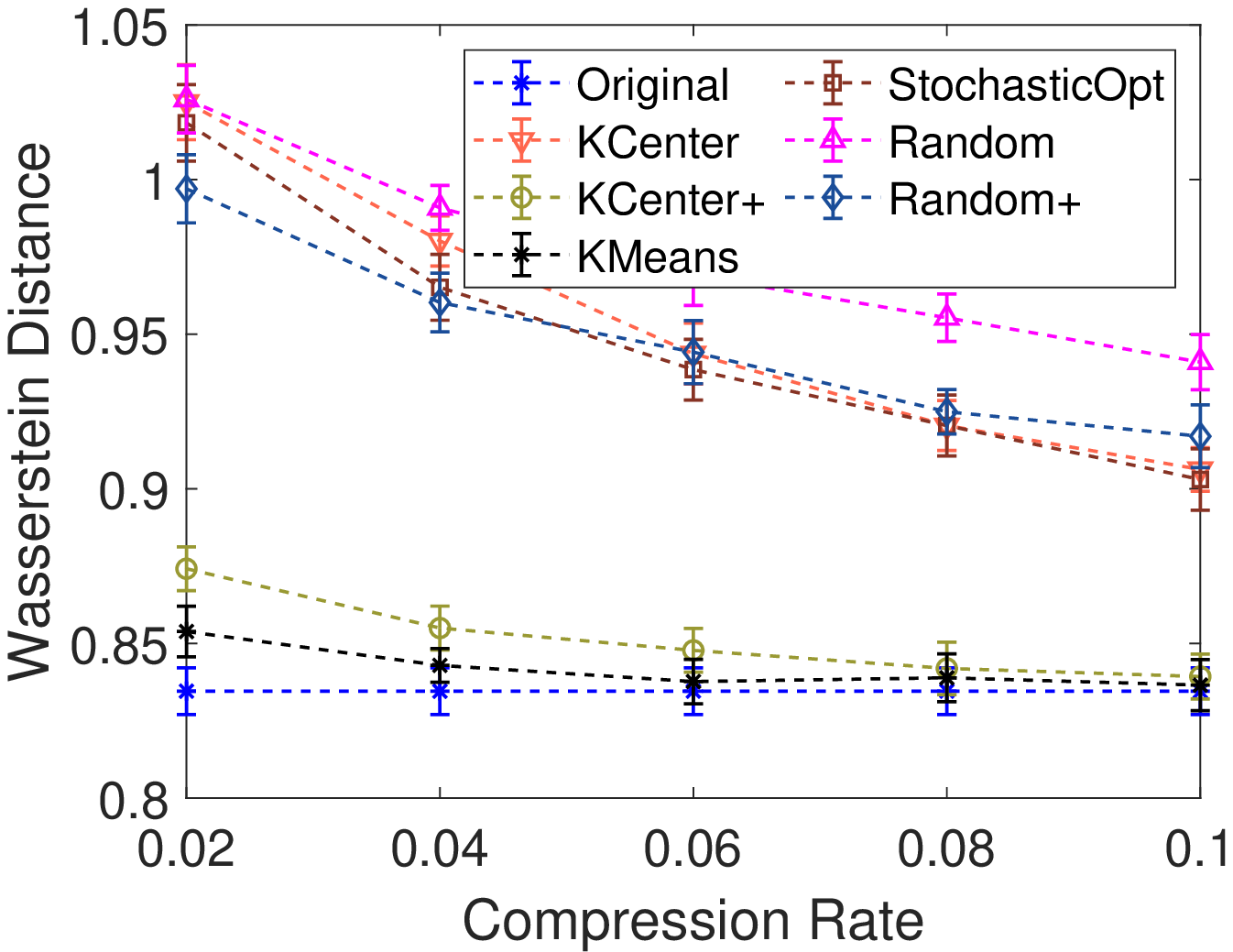} 
			\hspace{0.1in}
			\includegraphics[width=0.44\columnwidth]{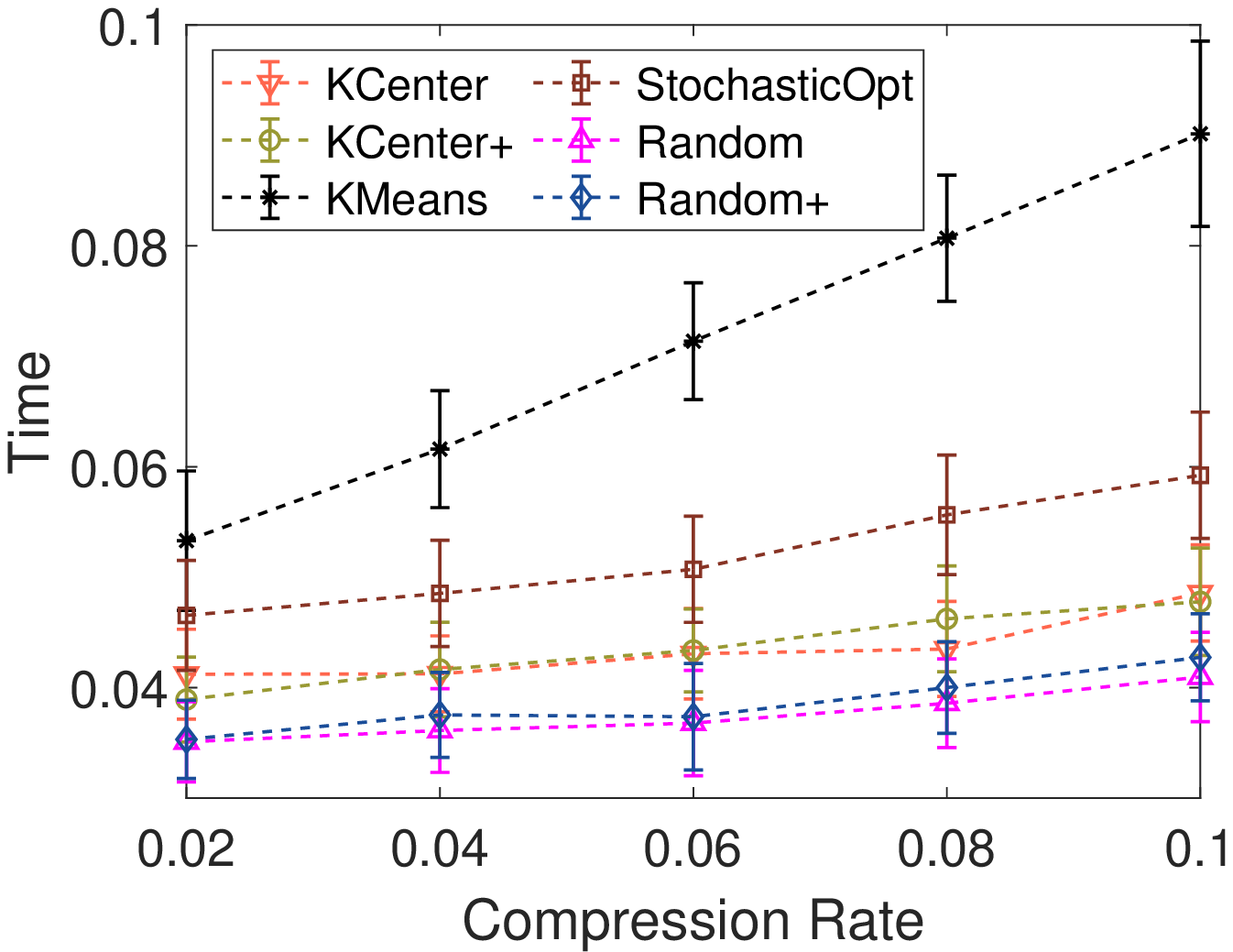}}
	\end{center}
	\vspace{-0.25in}
	\caption{The Wasserstein  distance and normalized running time on \textsc{it-en} for bilingual lexicon induction. } 
	\label{fig:iten1}  
\end{figure}

\begin{figure}[H]
	\begin{center}
		\centerline
		{\includegraphics[width=0.44\columnwidth]{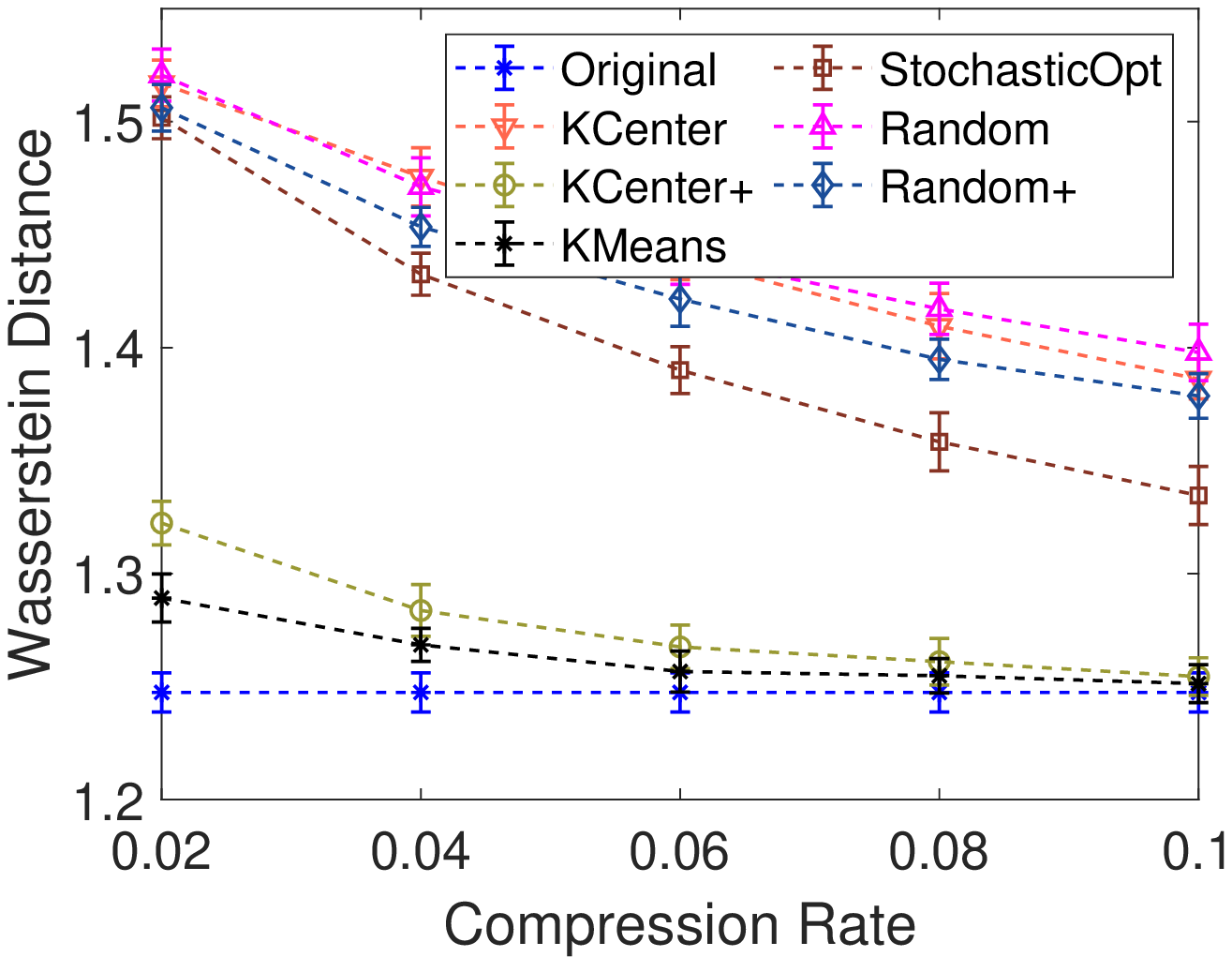} 
			\hspace{0.1in}
			\includegraphics[width=0.44\columnwidth]{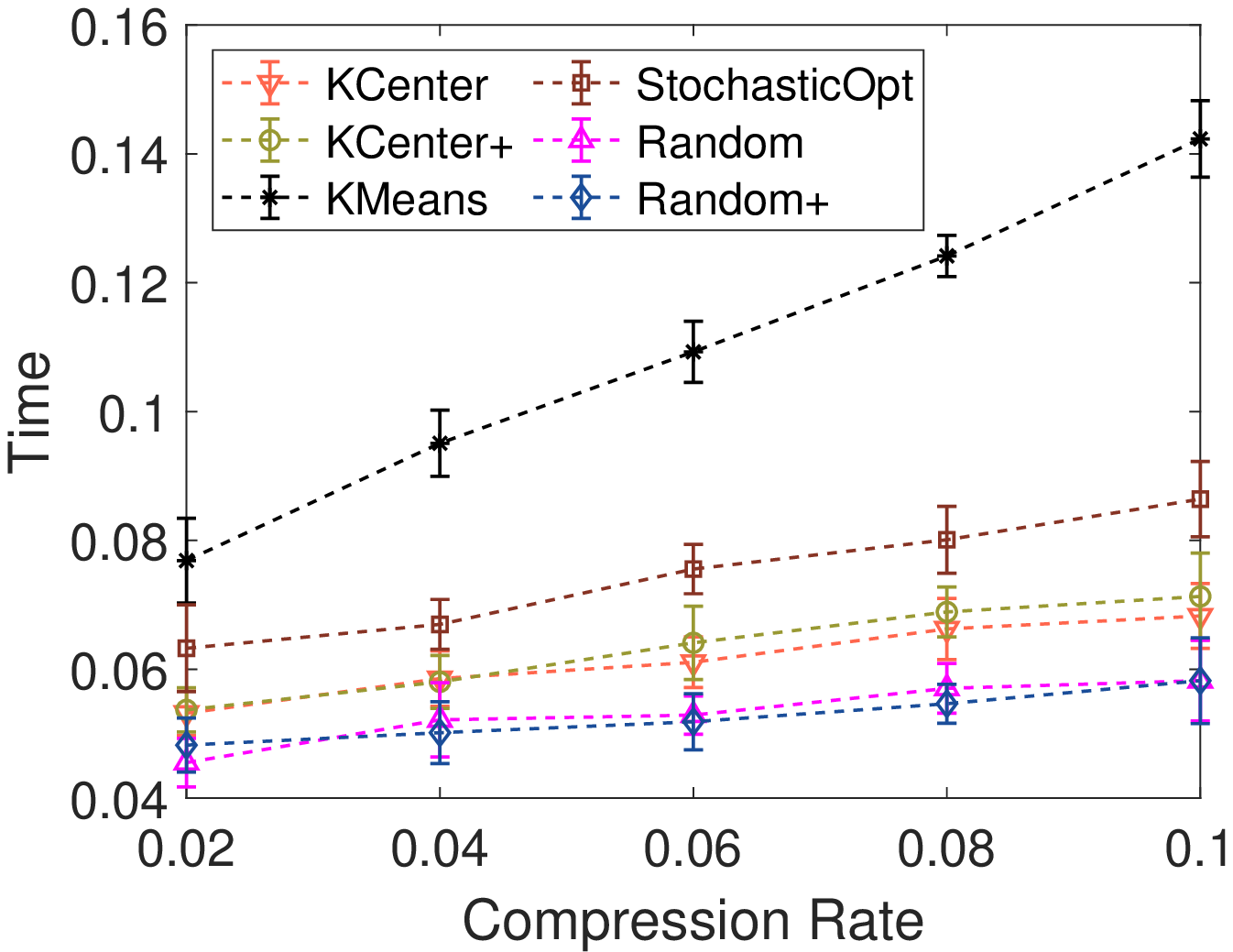}}
	\end{center}
	\vspace{-0.25in}
	\caption{The Wasserstein  distance and normalized running time on \textsc{ja-zh} for bilingual lexicon induction. } 
	\label{fig:jazh1}  
	
\end{figure}
\begin{figure}[H]
	\begin{center}
		\centerline
		{\includegraphics[width=0.44\columnwidth]{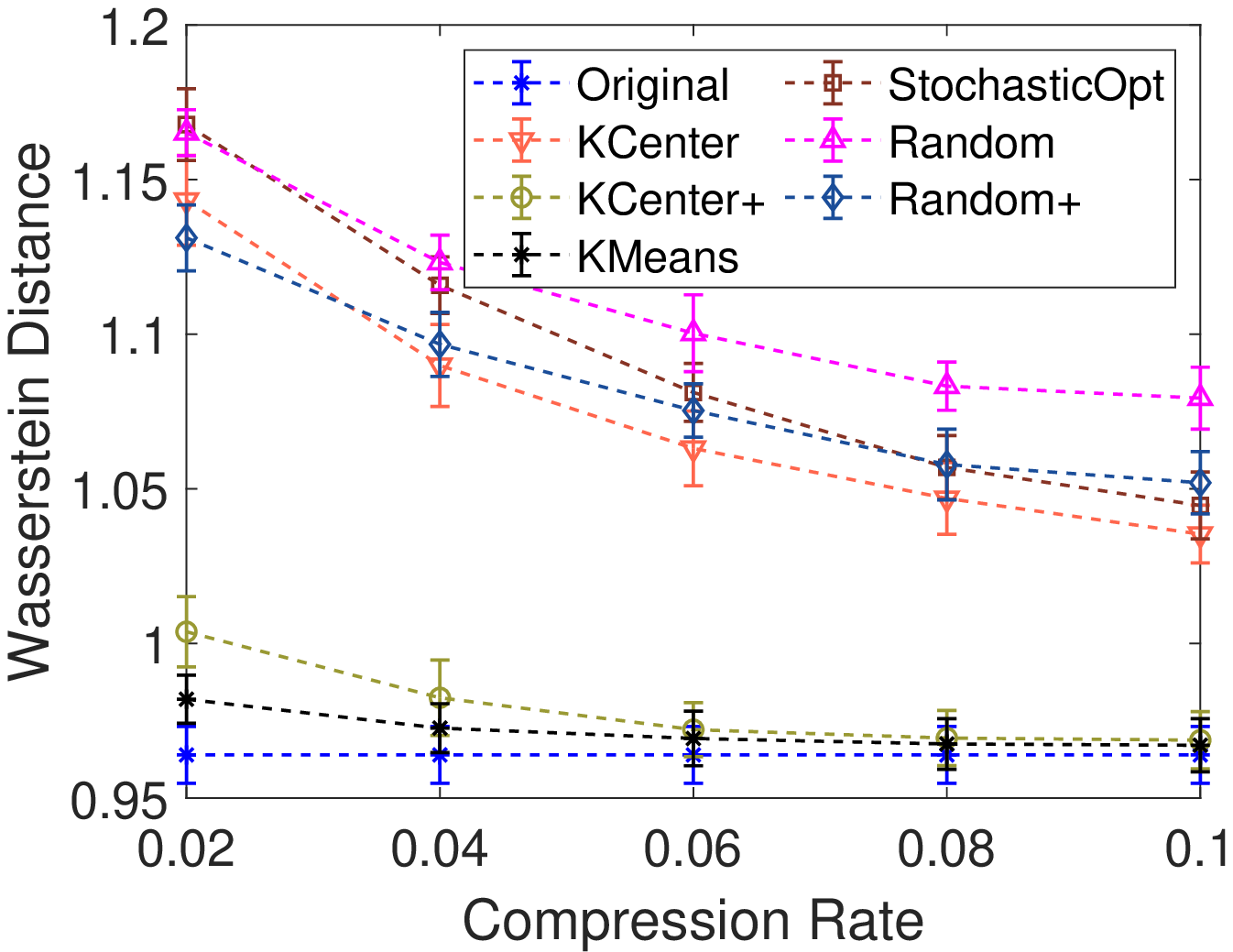} 
			\hspace{0.1in}
			\includegraphics[width=0.44\columnwidth]{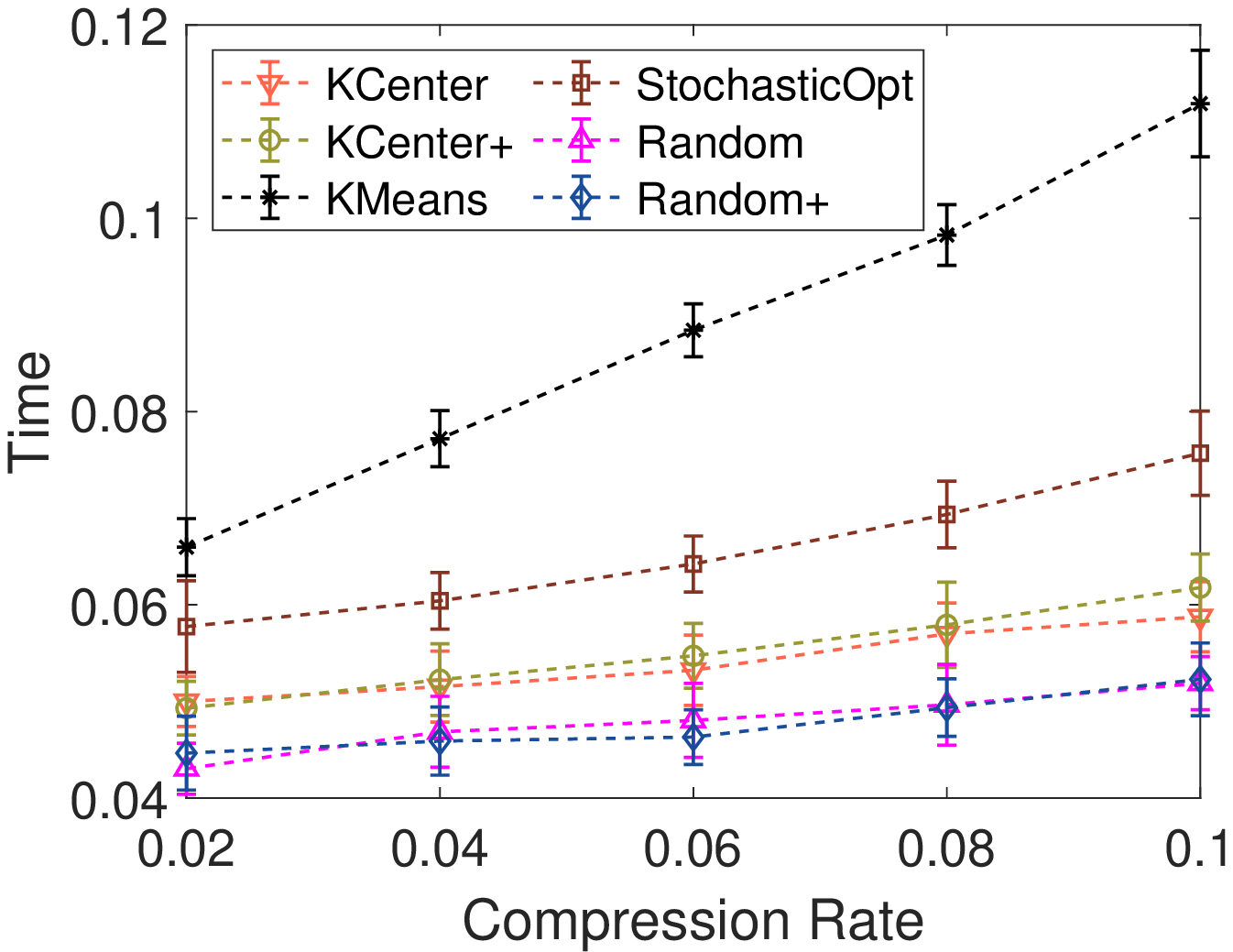}}
	\end{center}
	\vspace{-0.25in}
	\caption{The Wasserstein  distance and normalized running time on \textsc{tr-en} for bilingual lexicon induction. } 
	\label{fig:tren1}  
\end{figure}

\begin{figure}[H]
	\begin{center}
		\centerline
		{\includegraphics[width=0.44\columnwidth]{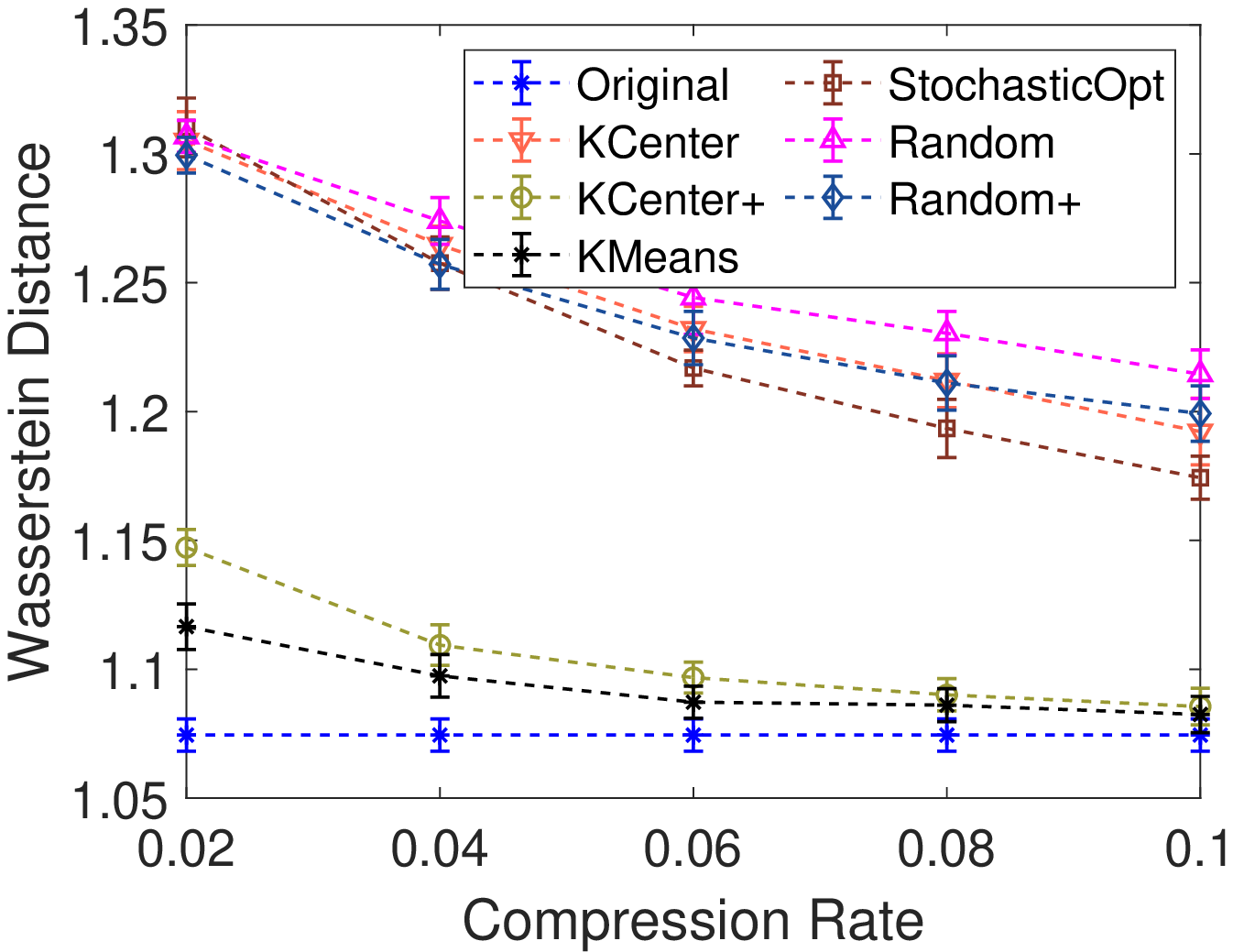} 
			\hspace{0.1in}
			\includegraphics[width=0.44\columnwidth]{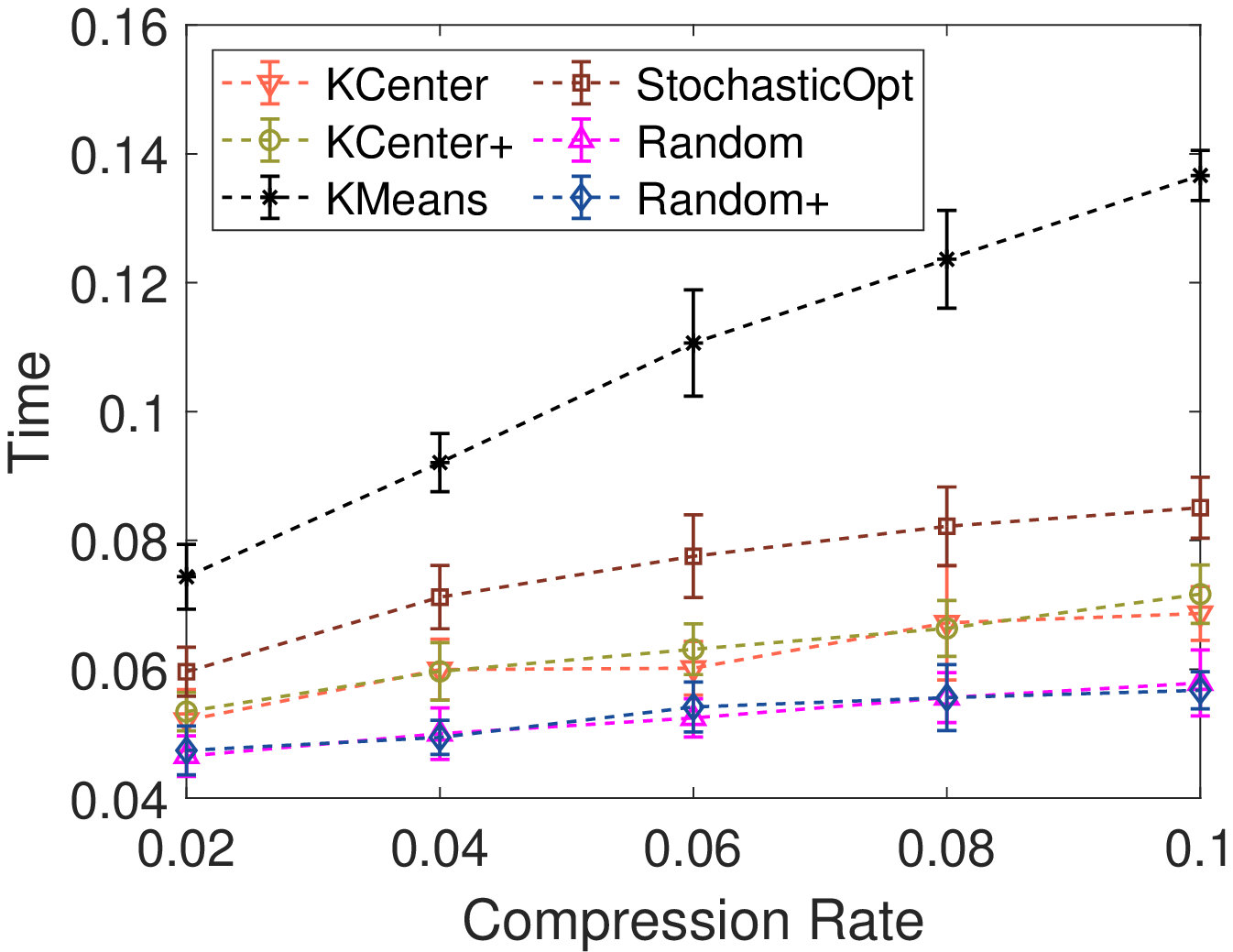}}
	\end{center}
	\vspace{-0.25in}
	\caption{The Wasserstein  distance and normalized running time on \textsc{zh-en} for bilingual lexicon induction. } 
	\label{fig:zhen1}  
\end{figure}

\begin{figure}[H]
	\begin{center}
		\centerline
		{\includegraphics[width=0.42\columnwidth]{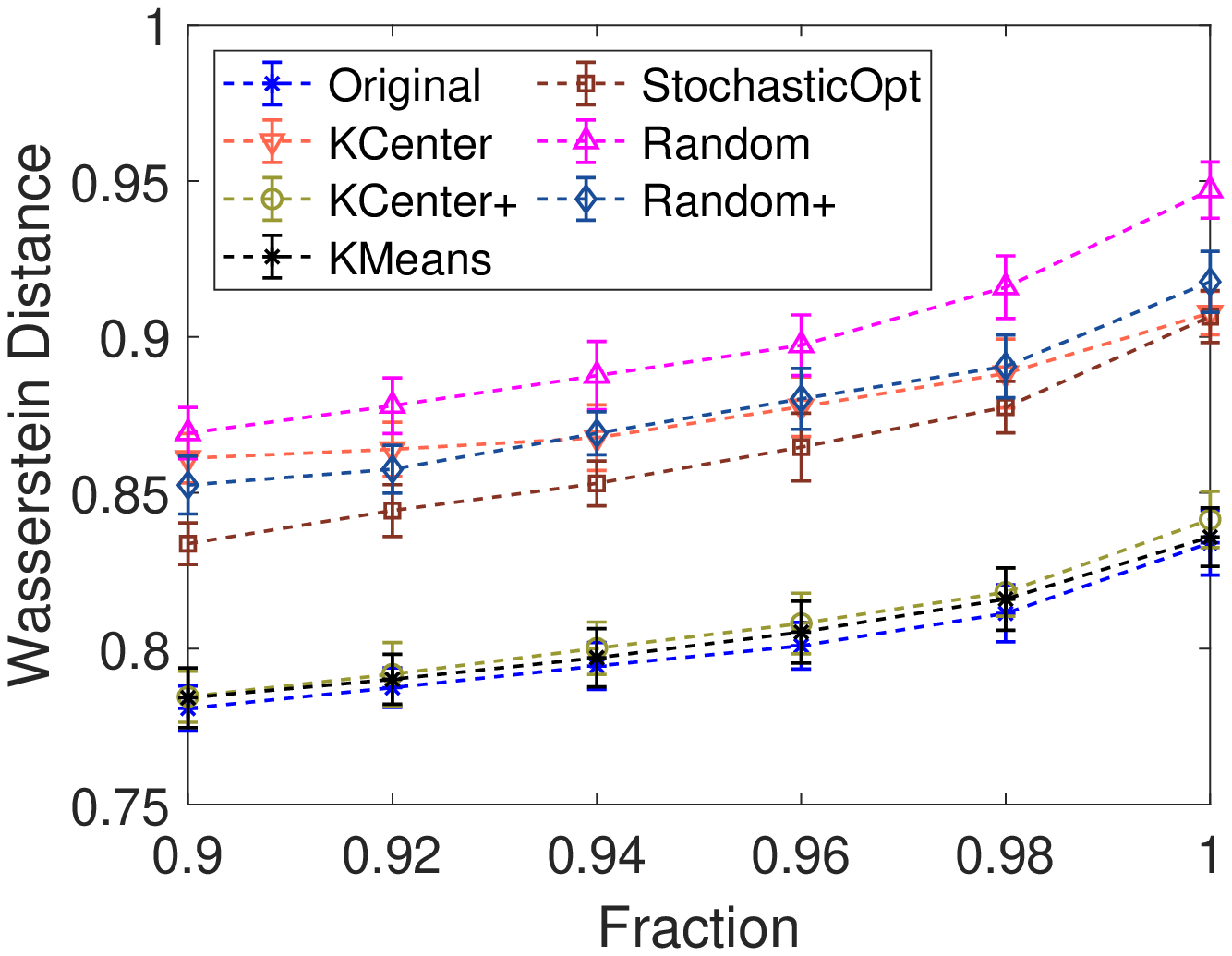} 
			\hspace{0.1in}
			\includegraphics[width=0.42\columnwidth]{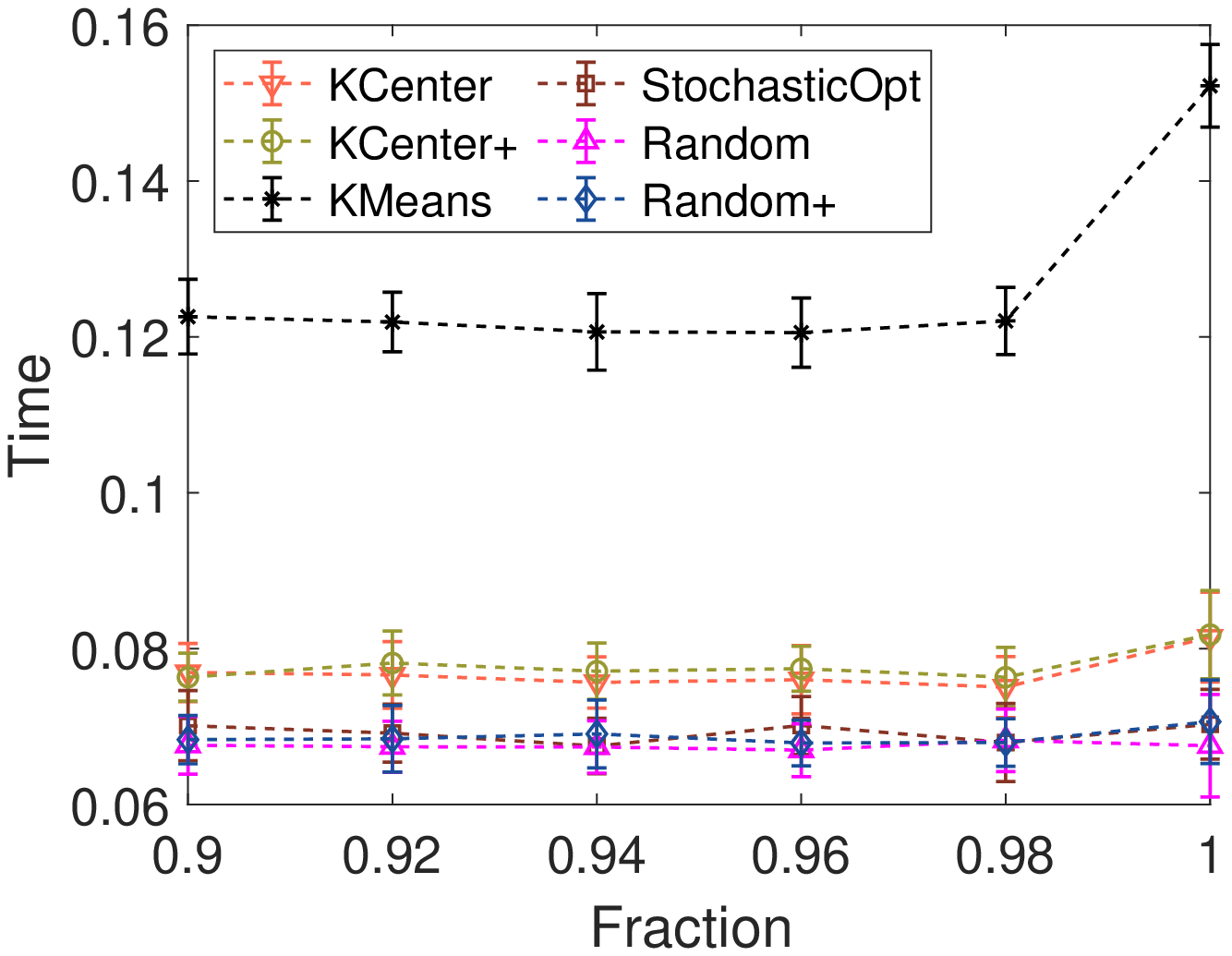}}
	\end{center}
	\vspace{-0.25in}
	\caption{The Wasserstein  distance and normalized running time on \textsc{it-en} for bilingual lexicon induction with different fraction $\lambda$. } 
	\label{fig:iten2}  
\end{figure}

\begin{figure}[H]
	\begin{center}
		\centerline
		{\includegraphics[width=0.42\columnwidth]{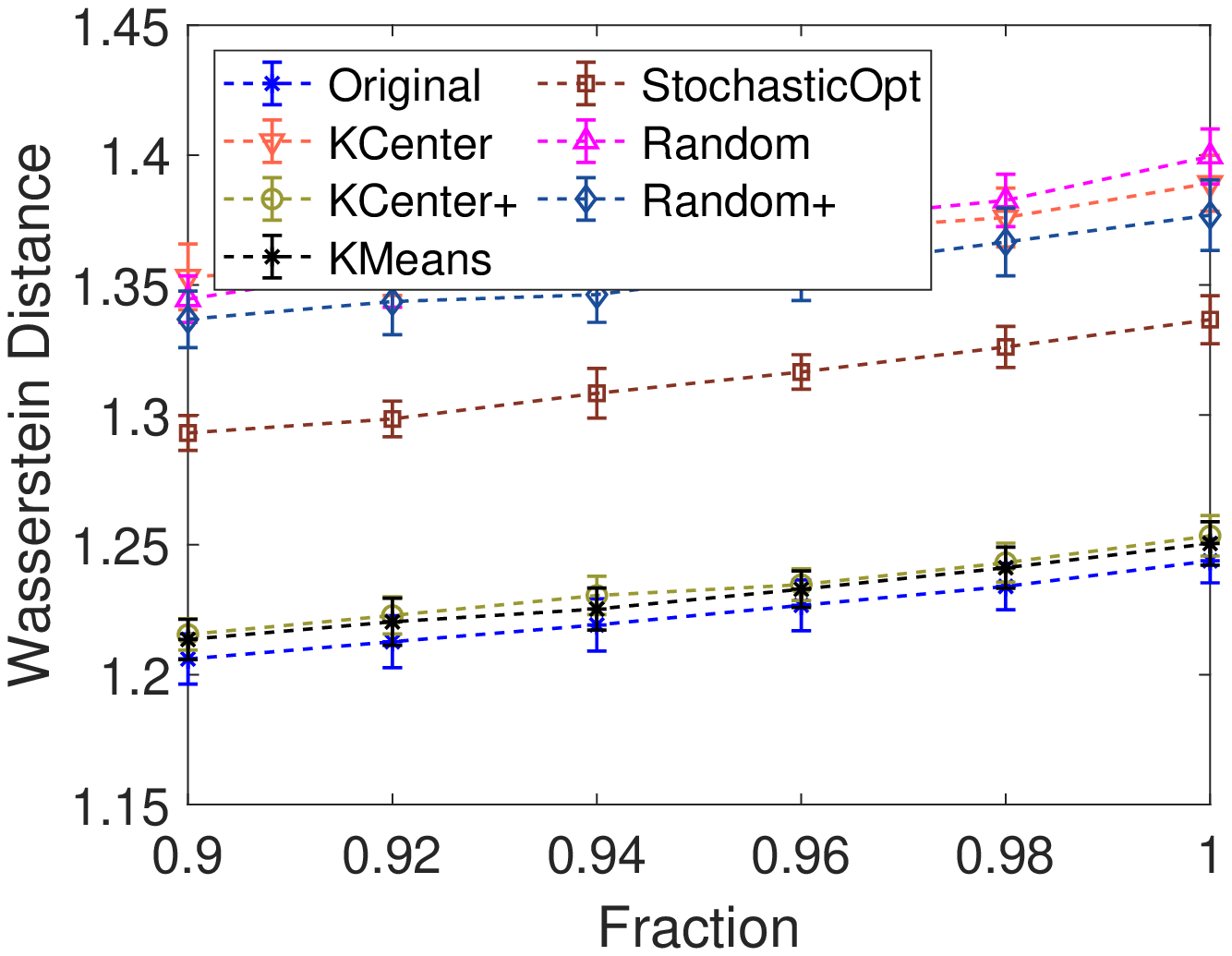} 
			\hspace{0.1in}
			\includegraphics[width=0.42\columnwidth]{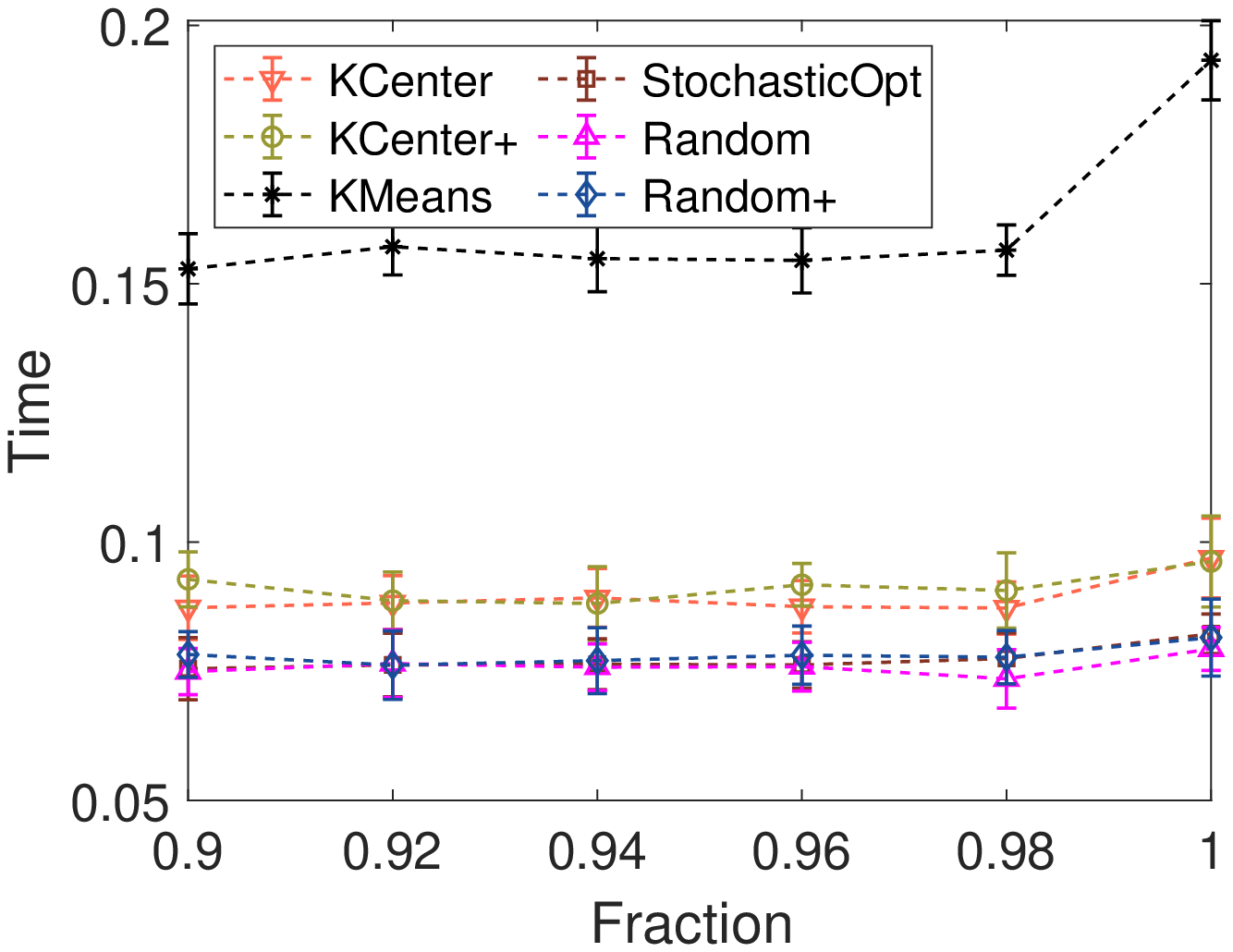}}
	\end{center}
	\vspace{-0.25in}
	\caption{The Wasserstein  distance and normalized running time on \textsc{ja-zh} for bilingual lexicon induction with different fraction $\lambda$. } 
	\label{fig:jazh2}  
	
\end{figure}
\begin{figure}[H]
	\begin{center}
		\centerline
		{\includegraphics[width=0.42\columnwidth]{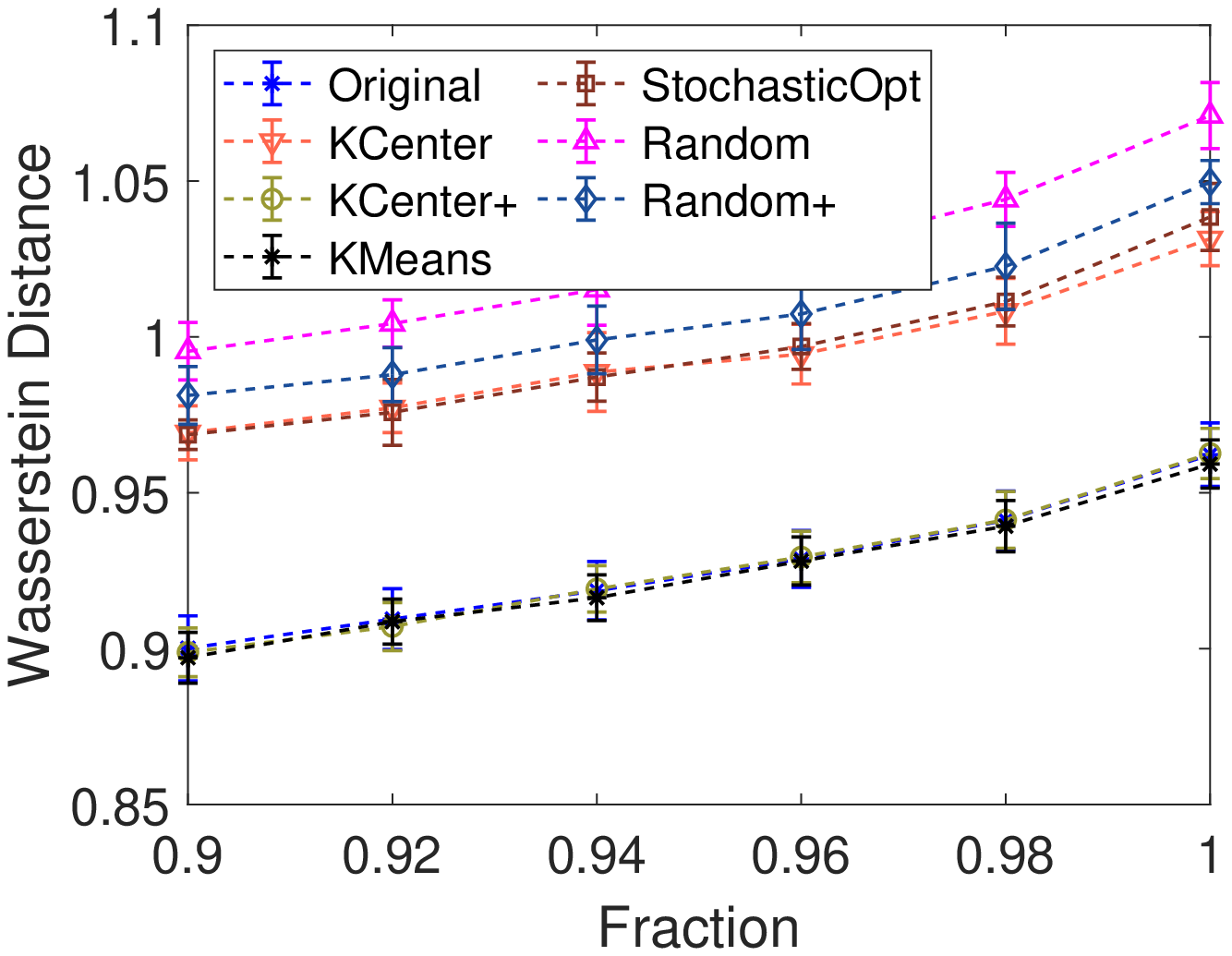} 
			\hspace{0.1in}
			\includegraphics[width=0.42\columnwidth]{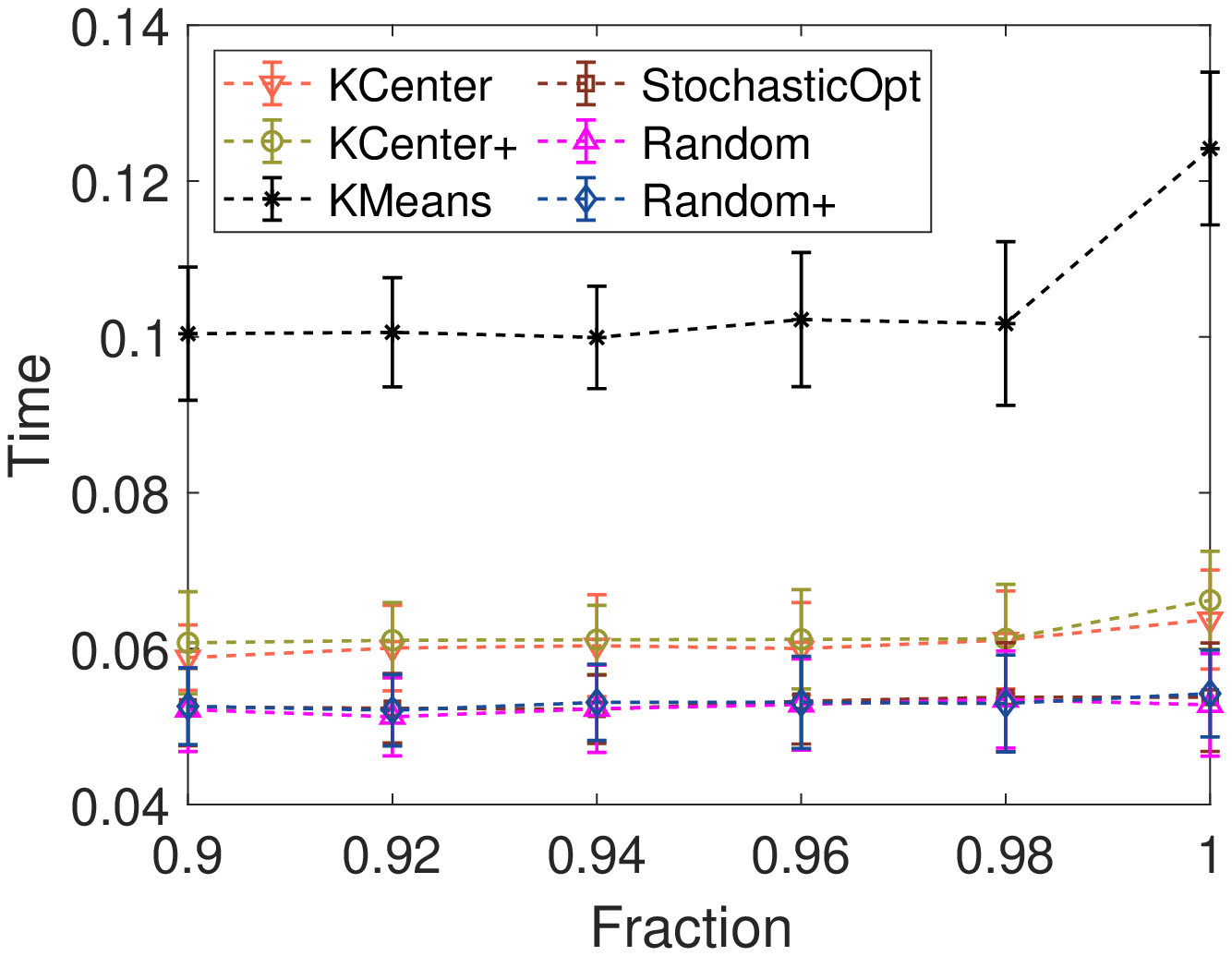}}
	\end{center}
	\vspace{-0.25in}
	\caption{The Wasserstein  distance and normalized running time on \textsc{tr-en} for bilingual lexicon induction with different fraction $\lambda$. } 
	\label{fig:tren2}  
\end{figure}

\begin{figure}[H]
	\begin{center}
		\centerline
		{\includegraphics[width=0.42\columnwidth]{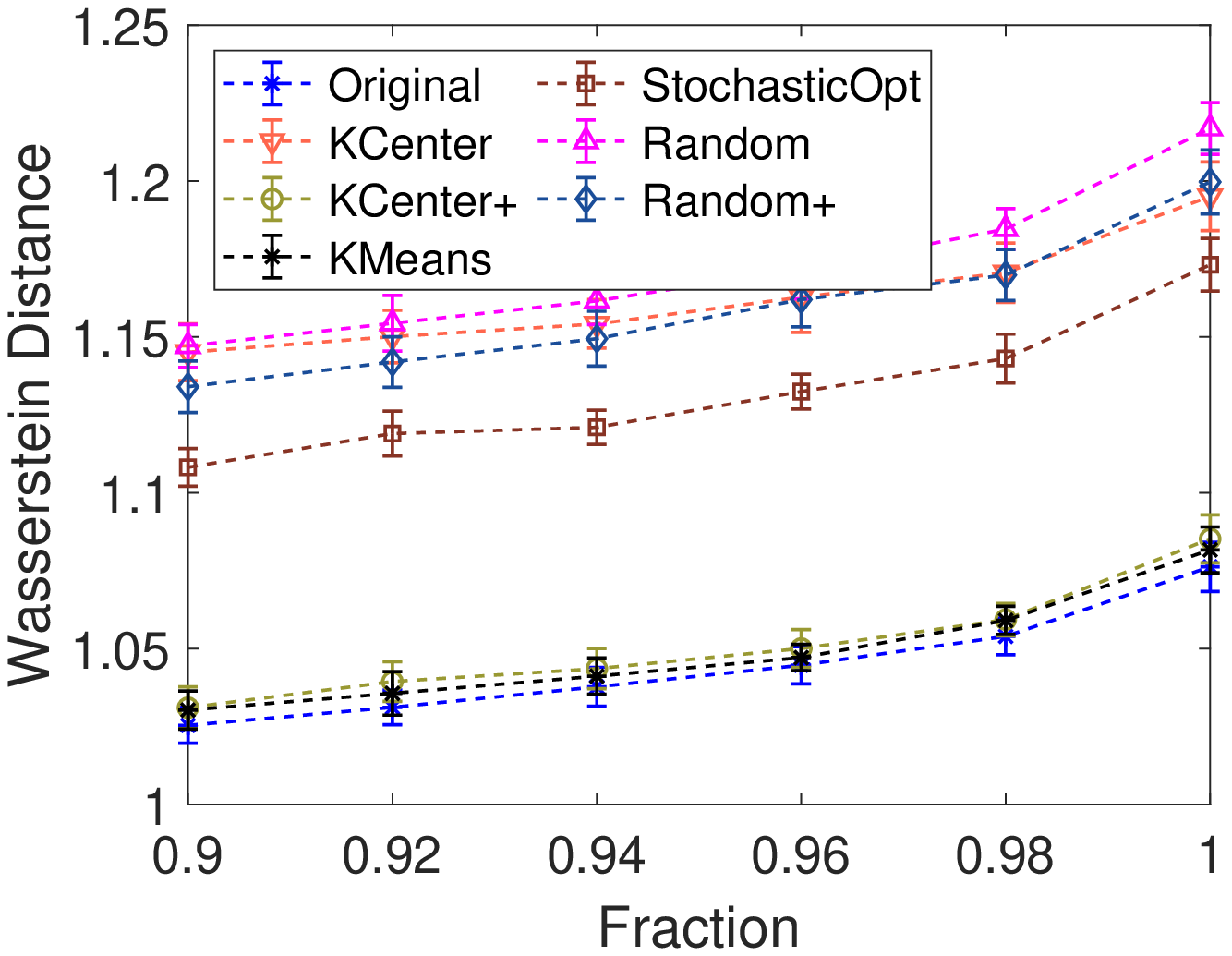} 
			\hspace{0.1in}
			\includegraphics[width=0.42\columnwidth]{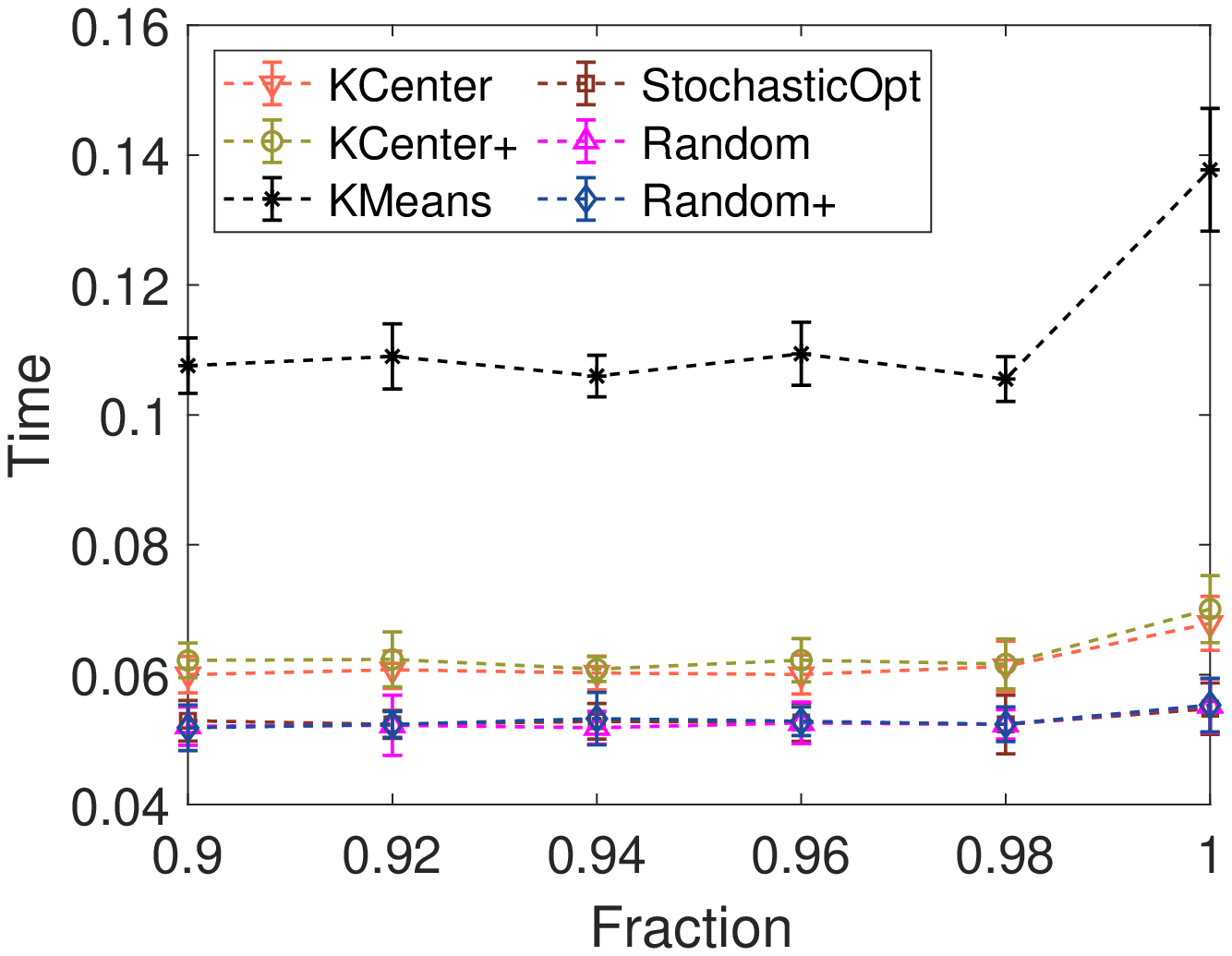}}
	\end{center}
	\vspace{-0.25in}
	\caption{The Wasserstein  distance and normalized running time on \textsc{zh-en} for bilingual lexicon induction with different fraction $\lambda$. } 
	\label{fig:zhen2}  
\end{figure}

\begin{table}[htbp]
	\small
	\centering
	\caption{Wasserstein distance for DA with different compression level}
	\begin{tabular}{c|c|ccccccc}
		\hline
		&    Compression   &\multicolumn{1}{c}{\textsc{Original}} & \multicolumn{1}{c}{\textsc{KCenter}} & \multicolumn{1}{c}{\textsc{KCenter+}} & \multicolumn{1}{c}{\textsc{KMeans}} &\multicolumn{1}{c}{\textsc{StochasticOpt}}& \multicolumn{1}{c}{\textsc{Random}} & \multicolumn{1}{c}{\textsc{Random+}} \\
		\hline
		
		\multirow{5}[0]{*}{D$\rightarrow$A} & 0.02 & 17.675  & 20.736  & \textbf{20.421} & 20.662  & 22.260  & 21.387 & 21.546 \\
		& 0.04 & 17.675  & 19.267  & \textbf{18.698} & 19.077  & 21.686  & 20.414 & 20.581 \\
		& 0.06 & 17.675  & 18.851  & \textbf{18.098} & 18.293  & 21.008  & 19.726 & 19.973 \\
		& 0.08 & 17.675  & 18.568  & \textbf{17.959} & 18.050  & 20.385  & 19.4  & 19.527 \\
		& 0.10 & 17.675  & 18.417  & \textbf{17.810} & 17.946  & 19.817  & 19.103 & 19.185 \\
		\hline
		\multirow{5}[0]{*}{D$\rightarrow$C} & 0.02 & 18.915  & 21.114  & \textbf{20.976} & 21.313   & 22.588  & 21.845 & 22.102 \\
		& 0.04 & 18.915  & 19.958  & \textbf{19.564} & 20.075  & 22.106  & 21.028 & 21.378 \\
		& 0.06 & 18.915  & 19.559  & \textbf{19.048} & 19.411   & 21.605  & 20.549 & 20.895 \\
		& 0.08 & 18.915  & 19.464  & \textbf{18.987} & 19.224  & 21.125  & 20.181 & 20.483 \\
		& 0.10 & 18.915  & 19.367  & \textbf{18.935} & 19.149  & 20.704  & 19.951 & 20.179 \\
		\hline
		\multirow{5}[0]{*}{D$\rightarrow$W} & 0.02 & 13.549  & \textbf{21.351} & 21.891  & 23.112  & 27.227  & 24.06 & 24.229 \\
		& 0.04 & 13.549  & 15.914  & \textbf{15.616} & 17.458  & 25.318  & 20.721 & 20.866 \\
		& 0.06 & 13.549  & 14.108  & \textbf{13.640} & 14.128  & 23.174  & 18.471 & 18.623 \\
		& 0.08 & 13.549  & 13.727  & \textbf{13.565} & 13.600  & 21.009  & 16.976 & 17.034 \\
		& 0.10 & 13.549  & 13.566  & \textbf{13.530} & 13.532  & 19.171  & 15.923 & 15.829 \\
		\hline
		\multirow{5}[0]{*}{W$\rightarrow$A} & 0.02 & 12.962  & 17.119  & \textbf{17.001} & 17.335   & 18.547  & 17.708 & 17.878 \\
		& 0.04 & 12.962  & 15.804  & \textbf{15.648} & 16.093  & 18.117  & 16.749 & 16.957 \\
		& 0.06 & 12.962  & 14.726  & \textbf{14.442} & 14.930   & 17.661  & 16.036 & 16.27 \\
		& 0.08 & 12.962  & 13.824  & \textbf{13.477} & 14.036  & 17.185  & 15.485 & 15.725 \\
		& 0.10 & 12.962  & 13.506  & \textbf{13.060} & 13.417  & 16.655  & 15.049 & 15.275 \\
		\hline
		\multirow{5}[0]{*}{W$\rightarrow$C} & 0.02 & 14.080  & \textbf{17.483} & 17.552  & 17.867   & 18.830  & 18.126 & 18.337 \\
		& 0.04 & 14.080  & 16.328  & \textbf{16.265} & 16.823   & 18.473  & 17.303 & 17.651 \\
		& 0.06 & 14.080  & 15.457  & \textbf{15.271} & 15.907 & 18.105  & 16.694 & 17.054 \\
		& 0.08 & 14.080  & 14.782  & \textbf{14.459} & 15.071   & 17.729  & 16.237 & 16.560 \\
		& 0.10 & 14.080  & 14.511  & \textbf{14.129} & 14.547 & 17.290  & 15.844 & 16.178 \\
		\hline
		\multirow{5}[0]{*}{W$\rightarrow$D} & 0.02 & 13.548  & \textbf{21.342} & 21.756  & 23.110   & 27.263  & 24.059 & 24.18 \\
		& 0.04 & 13.548  & 15.939  & \textbf{15.633} & 17.378  & 25.315  & 20.74 & 20.869 \\
		& 0.06 & 13.548  & 14.091  & \textbf{13.654} & 14.152  & 23.092  & 18.266 & 18.65 \\
		& 0.08 & 13.548  & 13.740  & \textbf{13.563} & 13.603 & 21.091  & 16.936 & 17.128 \\
		& 0.10 & 13.548  & 13.564  & \textbf{13.529} & 13.547  & 19.301  & 16.018 & 15.962 \\
		\hline
	\end{tabular}%
	
	\label{tab:DA_nonoiseemd}%
\end{table}%

\begin{table*}[htbp]
	\small
	\centering
	\caption{The Normalized Time for DA with different compression level}
	\begin{tabular}{c|c|cccccc}
		\hline
		&  Compression &  \multicolumn{1}{c}{\textsc{KCenter}} & \multicolumn{1}{c}{\textsc{KCenter+}} & \multicolumn{1}{c}{\textsc{KMeans}} & \multicolumn{1}{c}{\textsc{StochasticOpt}} & \multicolumn{1}{c}{\textsc{Random}} & \multicolumn{1}{c}{\textsc{Random+}} \\
		\hline
		\multirow{5}[0]{*}{D$\rightarrow$A} & 0.02 & 0.142  & 0.145  & 0.241  & 0.134  & 0.13  & 0.133 \\
		& 0.04 & 0.151  & 0.151  & 0.261  & 0.138  & 0.142 & 0.14 \\
		& 0.06 & 0.159  & 0.156  & 0.294  & 0.141  & 0.138 & 0.145 \\
		& 0.08 & 0.162  & 0.169  & 0.293  & 0.150  & 0.147 & 0.15 \\
		& 0.10 & 0.171  & 0.173  & 0.309  & 0.156  & 0.15  & 0.155 \\
		\hline
		\multirow{5}[0]{*}{D$\rightarrow$C} & 0.02 & 0.140  & 0.140  & 0.242  & 0.126  & 0.127 & 0.126 \\
		& 0.04 & 0.146  & 0.147  & 0.283  & 0.133  & 0.137 & 0.128 \\
		& 0.06 & 0.154  & 0.153  & 0.328  & 0.140  & 0.133 & 0.14 \\
		& 0.08 & 0.160  & 0.161  & 0.320  & 0.144  & 0.138 & 0.145 \\
		& 0.10 & 0.167  & 0.169  & 0.337  & 0.149  & 0.144 & 0.148 \\
		\hline
		\multirow{5}[0]{*}{D$\rightarrow$W} & 0.02 & 0.184  & 0.177  & 0.235  & 0.189  & 0.183 & 0.186 \\
		& 0.04 & 0.206  & 0.184  & 0.255  & 0.191  & 0.193 & 0.196 \\
		& 0.06 & 0.218  & 0.214  & 0.269  & 0.199  & 0.201 & 0.197 \\
		& 0.08 & 0.221  & 0.224  & 0.309  & 0.212  & 0.203 & 0.207 \\
		& 0.10 & 0.232  & 0.230  & 0.317  & 0.218  & 0.214 & 0.216 \\
		\hline
		\multirow{5}[0]{*}{W$\rightarrow$A} & 0.02 & 0.144  & 0.146  & 0.251  & 0.134  & 0.133 & 0.131 \\
		& 0.04 & 0.151  & 0.152  & 0.268  & 0.139  & 0.137 & 0.142 \\
		& 0.06 & 0.159  & 0.157  & 0.295  & 0.145  & 0.143 & 0.146 \\
		& 0.08 & 0.164  & 0.169  & 0.312  & 0.153  & 0.147 & 0.153 \\
		& 0.10 & 0.172  & 0.174  & 0.318  & 0.159  & 0.154 & 0.158 \\
		\hline
		\multirow{5}[0]{*}{W$\rightarrow$C} & 0.02 & 0.145  & 0.146  & 0.262  & 0.129  & 0.13  & 0.13 \\
		& 0.04 & 0.148  & 0.150  & 0.299  & 0.135  & 0.133 & 0.134 \\
		& 0.06 & 0.158  & 0.155  & 0.318  & 0.138  & 0.137 & 0.144 \\
		& 0.08 & 0.168  & 0.165  & 0.336  & 0.148  & 0.142 & 0.145 \\
		& 0.10 & 0.171  & 0.172  & 0.342  & 0.152  & 0.145 & 0.155 \\
		\hline
		\multirow{5}[0]{*}{W$\rightarrow$D} & 0.02 & 0.195  & 0.191  & 0.248  & 0.195  & 0.194 & 0.197 \\
		& 0.04 & 0.211  & 0.196  & 0.266  & 0.204  & 0.211 & 0.209 \\
		& 0.06 & 0.220  & 0.220  & 0.293  & 0.210  & 0.211 & 0.209 \\
		& 0.08 & 0.231  & 0.232  & 0.317  & 0.222  & 0.215 & 0.224 \\
		& 0.10 & 0.243  & 0.242  & 0.344  & 0.233  & 0.226 & 0.23 \\
		
		\hline
	\end{tabular}%
	
	\label{tab:DA_nonoisetime}%
\end{table*}%

\begin{table*}[htbp]
	\small
	\centering
	\caption{Accuracy for DA with different compression level}
	\begin{tabular}{c|c|ccccccc}
		\hline
		&    Compression   &\multicolumn{1}{c}{\textsc{Original}} & \multicolumn{1}{c}{\textsc{KCenter}} & \multicolumn{1}{c}{\textsc{KCenter+}} & \multicolumn{1}{c}{\textsc{KMeans}} &\multicolumn{1}{c}{\textsc{StochasticOpt}}& \multicolumn{1}{c}{\textsc{Random}} & \multicolumn{1}{c}{\textsc{Random+}} \\
		\hline
		\multirow{5}[0]{*}{D$\rightarrow$A} & 0.02 & 0.779  & \textbf{0.803} & 0.792  & 0.785  & 0.785  & 0.79  & 0.783 \\
		& 0.04 & 0.779  & \textbf{0.788} & 0.776  & 0.779  & 0.780  & 0.778 & 0.775 \\
		& 0.06 & 0.779  & 0.782  & \textbf{0.800} & 0.775  & 0.770  & 0.789 & 0.766 \\
		& 0.08 & 0.779  & 0.771  & \textbf{0.787} & 0.774  & 0.767  & 0.778 & 0.765 \\
		& 0.10 & 0.779  & 0.771  & \textbf{0.795} & 0.775  & 0.765  & 0.783 & 0.763 \\
		\hline
		\multirow{5}[0]{*}{D$\rightarrow$C} & 0.02 & 0.748  & 0.728  & 0.736  & \textbf{0.776} & 0.753  & 0.754 & 0.728 \\
		& 0.04 & 0.748  & 0.699  & 0.736  & \textbf{0.771} & 0.769  & 0.757 & 0.722 \\
		& 0.06 & 0.748  & 0.723  & 0.752  & 0.760  & \textbf{0.762} & 0.747 & 0.734 \\
		& 0.08 & 0.748  & 0.711  & 0.756  & \textbf{0.763} & 0.756  & 0.742 & 0.732 \\
		& 0.10 & 0.748  & 0.727  & 0.741  & \textbf{0.756} & 0.750  & 0.746 & 0.731 \\
		\hline
		\multirow{5}[0]{*}{D$\rightarrow$W} & 0.02 & 0.927  & 0.768  & \textbf{0.788} & 0.786  & 0.735  & 0.8   & 0.79 \\
		& 0.04 & 0.927  & \textbf{0.859} & 0.857  & 0.807  & 0.750  & 0.82  & 0.816 \\
		& 0.06 & 0.927  & 0.890  & \textbf{0.892} & 0.864  & 0.779  & 0.853 & 0.844 \\
		& 0.08 & 0.927  & \textbf{0.925} & 0.922  & 0.896  & 0.775  & 0.87  & 0.869 \\
		& 0.10 & 0.927  & 0.922  & \textbf{0.928} & 0.914  & 0.785  & 0.862 & 0.868 \\
		\hline
		\multirow{5}[0]{*}{W$\rightarrow$A} & 0.02 & 0.678  & 0.612  & \textbf{0.659} & 0.649  & 0.609  & 0.634 & 0.589 \\
		& 0.04 & 0.678  & 0.622  & \textbf{0.660} & 0.653  & 0.643  & 0.63  & 0.61 \\
		& 0.06 & 0.678  & 0.627  & \textbf{0.660} & 0.660  & 0.649  & 0.64  & 0.62 \\
		& 0.08 & 0.678  & 0.668  & \textbf{0.683} & 0.664  & 0.643  & 0.644 & 0.626 \\
		& 0.10 & 0.678  & 0.661  & \textbf{0.692} & 0.668  & 0.653  & 0.656 & 0.629 \\
		\hline
		\multirow{5}[0]{*}{W$\rightarrow$C} & 0.02 & 0.600  & 0.530  & 0.541  & \textbf{0.588} & 0.538  & 0.563 & 0.536 \\
		& 0.04 & 0.600  & 0.544  & 0.566  & \textbf{0.598} & 0.551  & 0.57  & 0.517 \\
		& 0.06 & 0.600  & 0.563  & 0.600  & \textbf{0.607} & 0.569  & 0.582 & 0.538 \\
		& 0.08 & 0.600  & 0.563  & 0.593  & \textbf{0.600} & 0.568  & 0.579 & 0.546 \\
		& 0.10 & 0.600  & 0.572  & 0.599  & \textbf{0.602} & 0.579  & 0.587 & 0.544 \\
		\hline
		\multirow{5}[0]{*}{W$\rightarrow$D} & 0.02 & 0.911  & 0.642  & \textbf{0.779} & 0.749  & 0.571  & 0.664 & 0.641 \\
		& 0.04 & 0.911  & 0.771  & \textbf{0.832} & 0.810  & 0.644  & 0.753 & 0.737 \\
		& 0.06 & 0.911  & 0.853  & \textbf{0.856} & 0.848  & 0.662  & 0.794 & 0.783 \\
		& 0.08 & 0.911  & \textbf{0.909} & 0.899  & 0.894  & 0.708  & 0.819 & 0.822 \\
		& 0.10 & 0.911  & 0.907  & \textbf{0.911} & 0.901  & 0.726  & 0.835 & 0.838 \\
		\hline
	\end{tabular}%
	\label{tab:DA_nonoiseacc}%
\end{table*}%

\begin{table*}[htbp]
	\small
	\centering
	\caption{Wasserstein distance for DA with different fraction $\lambda$}
	\begin{tabular}{c|c|ccccccc}
		\hline
		&       & \multicolumn{1}{c}{\textsc{Original}} & \multicolumn{1}{c}{\textsc{KCenter}} & \multicolumn{1}{c}{\textsc{KCenter+}} & \multicolumn{1}{c}{\textsc{KMeans}} 
		&\multicolumn{1}{c}{\textsc{StochasticOpt}}&\multicolumn{1}{c}{\textsc{Random}} & \multicolumn{1}{c}{\textsc{Random+}} \\
		\hline
		\multirow{6}[0]{*}{D$\rightarrow$A} & $\lambda=1.0$ &17.671   & 18.416  & \textbf{17.817} & 17.943  & 19.824 & 19.111 & 19.191 \\
		& $\lambda=0.98$ & 17.492   & 18.293  & \textbf{17.690} & 17.798  & 19.652 & 18.936 & 19.045 \\
		& $\lambda=0.96$ & 17.394 & 18.189  & \textbf{17.546} & 17.633  & 19.534 & 18.749 & 18.845 \\
		& $\lambda=0.94$ & 17.293  & 18.046  & \textbf{17.416} & 17.496  & 19.349 & 18.613 & 18.744 \\
		& $\lambda=0.92$ & 17.139  & 17.921  & \textbf{17.308} & 17.340  & 19.15 & 18.435 & 18.588 \\
		& $\lambda=0.90$ & 16.972  & 17.761  & \textbf{17.142} & 17.176  & 19.081 & 18.258 & 18.477 \\
		\hline
		\multirow{6}[0]{*}{D$\rightarrow$C} & $\lambda=1.0$ & 18.911   & 19.353  & \textbf{18.943} & 19.146  & 20.711 & 19.954 & 20.187 \\
		& $\lambda=0.98$ & 18.829   & 19.272  & \textbf{18.883} & 18.995  & 20.534 & 19.775 & 20.077 \\
		& $\lambda=0.96$ &18.726  & 19.182  & \textbf{18.746} & 18.860  & 20.35 & 19.599 & 19.913 \\
		& $\lambda=0.94$ &18.553 & 19.052  & \textbf{18.611} & 18.713  & 20.22 & 19.422 & 19.806 \\
		& $\lambda=0.92$ & 18.426   & 18.917  & \textbf{18.471} & 18.555  & 20.027 & 19.24 & 19.647 \\
		& $\lambda=0.90$ & 18.276 & 18.766  & \textbf{18.323} & 18.416  & 19.886 & 19.111 & 19.516 \\
		\hline
		\multirow{6}[0]{*}{D$\rightarrow$W} & $\lambda=1.0$ &13.555  & 13.554  & \textbf{13.531} & 13.536  & 19.239 & 15.999 & 15.919 \\
		& $\lambda=0.98$ &13.209  & 13.313  & \textbf{13.198} & 13.254  & 18.831 & 15.731 & 15.799 \\
		& $\lambda=0.96$ & 12.999  & 13.061  & 12.984  & \textbf{12.965} & 18.726 & 15.39 & 15.325 \\
		& $\lambda=0.94$ & 12.670 & 12.807  & 12.678  & \textbf{12.650} & 18.304 & 14.955 & 14.907 \\
		& $\lambda=0.92$ & 12.380  & 12.505  & 12.364  & \textbf{12.351} & 18.127 & 14.525 & 14.795 \\
		& $\lambda=0.90$ &12.037 & 12.175  & 12.045  & \textbf{12.016} & 17.91 & 14.243 & 14.463 \\
		\hline
		\multirow{6}[0]{*}{W$\rightarrow$A} & $\lambda=1.0$ & 12.960  & 13.523  & \textbf{13.063} & 13.413  & 16.657 & 15    & 15.225 \\
		& $\lambda=0.98$ & 12.815  & 13.379  & \textbf{12.903} & 13.252  & 16.51 & 14.846 & 15.074 \\
		& $\lambda=0.96$ & 12.675  & 13.225  & \textbf{12.746} & 13.093  & 16.329 & 14.669 & 14.939 \\
		& $\lambda=0.94$ & 12.502  & 13.059  & \textbf{12.600} & 12.922  & 16.267 & 14.518 & 14.759 \\
		& $\lambda=0.92$ & 12.364  & 12.923  & \textbf{12.453} & 12.818  & 16.141 & 14.309 & 14.638 \\
		& $\lambda=0.90$ &12.227 & 12.776  & \textbf{12.310} & 12.663  & 15.962 & 14.22 & 14.51 \\
		\hline
		\multirow{6}[0]{*}{W$\rightarrow$C} & $\lambda=1.0$ & 14.079  & 14.508  & \textbf{14.131} & 14.548  & 17.329 & 15.849 & 16.187 \\
		& $\lambda=0.98$ &13.923  & 14.372  & \textbf{13.985} & 14.400  & 17.123 & 15.695 & 16.028 \\
		& $\lambda=0.96$ &13.821  & 14.235  & \textbf{13.850} & 14.229  & 16.989 & 15.472 & 15.899 \\
		& $\lambda=0.94$ & 13.682  & 14.087  & \textbf{13.720} & 14.116  & 16.889 & 15.312 & 15.756 \\
		& $\lambda=0.92$ & 13.525  & 13.960  & \textbf{13.583} & 13.985  & 16.715 & 15.154 & 15.655 \\
		& $\lambda=0.90$ & 13.375  & 13.822  & \textbf{13.447} & 13.861  & 16.593 & 15.023 & 15.504 \\
		\hline
		\multirow{6}[0]{*}{W$\rightarrow$D} & $\lambda=1.0$ &13.554  & 13.540  & \textbf{13.525} & 13.527  & 19.164 & 15.986 & 15.864 \\
		& $\lambda=0.98$ & 13.217  & 13.291  & \textbf{13.197} & 13.239  & 18.971 & 15.807 & 15.609 \\
		& $\lambda=0.96$ & 12.998  & 13.052  & 12.984  & \textbf{12.962} & 18.595 & 15.349 & 15.286 \\
		& $\lambda=0.94$ & 12.670  & 12.807  & 12.681  & \textbf{12.672} & 18.216 & 15.011 & 15.032 \\
		& $\lambda=0.92$ &12.380  & 12.501  & 12.363  & \textbf{12.341} & 18.143 & 14.667 & 14.786 \\
		& $\lambda=0.90$ & 12.037 & 12.175  & 12.043  & \textbf{12.030} & 17.909 & 14.33 & 14.391 \\
		
		\hline
	\end{tabular}
	\label{tab:DA_noiseemd}%
\end{table*}%

\begin{table*}[htbp]
	\small
	\centering
	\caption{The Normalized Time for DA with different fraction $\lambda$}
	\begin{tabular}{c|c|cccccc}
		\hline
		&   &  \multicolumn{1}{c}{\textsc{KCenter}} & \multicolumn{1}{c}{\textsc{KCenter+}} & \multicolumn{1}{c}{\textsc{KMeans}} &\multicolumn{1}{c}{\textsc{StochasticOpt}}& \multicolumn{1}{c}{\textsc{Random}} & \multicolumn{1}{c}{\textsc{Random+}} \\
		\hline
		\multirow{6}[0]{*}{D$\rightarrow$A} & $\lambda=1.0$ & 0.171  & 0.175  & 0.299  & 0.156  & 0.154 & 0.156 \\
		& $\lambda=0.98$ & 0.137  & 0.140  & 0.248  & 0.128  & 0.124 & 0.126 \\
		& $\lambda=0.96$ & 0.138  & 0.140  & 0.238  & 0.128  & 0.124 & 0.126 \\
		& $\lambda=0.94$ & 0.135  & 0.137  & 0.242  & 0.128  & 0.122 & 0.125 \\
		& $\lambda=0.92$ & 0.143  & 0.145  & 0.248  & 0.131  & 0.129 & 0.13 \\
		& $\lambda=0.90$ & 0.137  & 0.138  & 0.236  & 0.129  & 0.121 & 0.127 \\
		\hline
		\multirow{6}[0]{*}{D$\rightarrow$C} & $\lambda=1.0$ & 0.166  & 0.164  & 0.329  & 0.156  & 0.144 & 0.149 \\
		& $\lambda=0.98$ & 0.137  & 0.137  & 0.255  & 0.131  & 0.12  & 0.123 \\
		& $\lambda=0.96$ & 0.130  & 0.131  & 0.242  & 0.123  & 0.115 & 0.117 \\
		& $\lambda=0.94$ & 0.130  & 0.130  & 0.239  & 0.123  & 0.113 & 0.116 \\
		& $\lambda=0.92$ & 0.133  & 0.139  & 0.256  & 0.130  & 0.119 & 0.122 \\
		& $\lambda=0.90$ & 0.140  & 0.139  & 0.262  & 0.133  & 0.124 & 0.128 \\
		\hline
		\multirow{6}[0]{*}{D$\rightarrow$W} & $\lambda=1.0$ & 0.234  & 0.245  & 0.337  & 0.225  & 0.219 & 0.225 \\
		& $\lambda=0.98$ & 0.275  & 0.274  & 0.374  & 0.256  & 0.256 & 0.259 \\
		& $\lambda=0.96$ & 0.192  & 0.197  & 0.263  & 0.181  & 0.178 & 0.179 \\
		& $\lambda=0.94$ & 0.184  & 0.183  & 0.248  & 0.173  & 0.168 & 0.172 \\
		& $\lambda=0.92$ & 0.184  & 0.181  & 0.255  & 0.173  & 0.167 & 0.174 \\
		& $\lambda=0.90$ & 0.181  & 0.180  & 0.254  & 0.177  & 0.173 & 0.175 \\
		\hline
		\multirow{6}[0]{*}{W$\rightarrow$A} & $\lambda=1.0$ & 0.188  & 0.191  & 0.346  & 0.172  & 0.169 & 0.172 \\
		& $\lambda=0.98$ & 0.144  & 0.148  & 0.265  & 0.136  & 0.132 & 0.135 \\
		& $\lambda=0.96$ & 0.145  & 0.146  & 0.253  & 0.132  & 0.128 & 0.133 \\
		& $\lambda=0.94$ & 0.151  & 0.156  & 0.260  & 0.140  & 0.137 & 0.141 \\
		& $\lambda=0.92$ & 0.150  & 0.153  & 0.263  & 0.139  & 0.136 & 0.14 \\
		& $\lambda=0.90$ & 0.145  & 0.148  & 0.263  & 0.136  & 0.131 & 0.136 \\
		\hline
		\multirow{6}[0]{*}{W$\rightarrow$C} & $\lambda=1.0$ & 0.168  & 0.169  & 0.334  & 0.147  & 0.144 & 0.148 \\
		& $\lambda=0.98$ & 0.125  & 0.127  & 0.238  & 0.112  & 0.109 & 0.114 \\
		& $\lambda=0.96$ & 0.125  & 0.127  & 0.238  & 0.112  & 0.11  & 0.112 \\
		& $\lambda=0.94$ & 0.134  & 0.137  & 0.251  & 0.117  & 0.115 & 0.119 \\
		& $\lambda=0.92$ & 0.128  & 0.131  & 0.237  & 0.114  & 0.11  & 0.115 \\
		& $\lambda=0.90$ & 0.127  & 0.129  & 0.242  & 0.112  & 0.111 & 0.116 \\
		\hline
		\multirow{6}[0]{*}{W$\rightarrow$D} & $\lambda=1.0$ & 0.221  & 0.220  & 0.329  & 0.208  & 0.197 & 0.206 \\
		& $\lambda=0.98$ & 0.165  & 0.162  & 0.236  & 0.154  & 0.147 & 0.153 \\
		& $\lambda=0.96$ & 0.168  & 0.169  & 0.237  & 0.154  & 0.152 & 0.153 \\
		& $\lambda=0.94$ & 0.166  & 0.165  & 0.232  & 0.153  & 0.152 & 0.153 \\
		& $\lambda=0.92$ & 0.161  & 0.167  & 0.232  & 0.151  & 0.147 & 0.15 \\
		& $\lambda=0.90$ & 0.169  & 0.167  & 0.244  & 0.160  & 0.154 & 0.159 \\
		
		\hline
	\end{tabular}%
	
	\label{tab:DA_noisetime}%
\end{table*}%

\begin{table*}[htbp]
	\small
	\centering
	\caption{Accuracy for DA with different fraction $\lambda$}
	\begin{tabular}{c|c|ccccccc}
		\hline
		&       & \multicolumn{1}{c}{\textsc{Original}}  & \multicolumn{1}{c}{\textsc{KCenter}} & \multicolumn{1}{c}{\textsc{KCenter+}} & \multicolumn{1}{c}{\textsc{KMeans}} 
		&\multicolumn{1}{c}{\textsc{StochasticOpt}}&\multicolumn{1}{c}{\textsc{Random}} & \multicolumn{1}{c}{\textsc{Random+}} \\
		\hline
		
		\multirow{6}[0]{*}{D$\rightarrow$A} & $\lambda=1.0$ & 0.779  & 0.773  & \textbf{0.794} & 0.778  & 0.764 & 0.773 & 0.772 \\
		& $\lambda=0.98$ & 0.779  & 0.774  & \textbf{0.791} & 0.774  & 0.766 & 0.785 & 0.765 \\
		& $\lambda=0.96$ & 0.777   & 0.774  & \textbf{0.787} & 0.774  & 0.767 & 0.778 & 0.769 \\
		& $\lambda=0.94$ & 0.775  & 0.772  & \textbf{0.795} & 0.774  & 0.765 & 0.78  & 0.761 \\
		& $\lambda=0.92$ & 0.780  & 0.776  & \textbf{0.800} & 0.774  & 0.758 & 0.784 & 0.766 \\
		& $\lambda=0.90$ & 0.767 & 0.776  & \textbf{0.801} & 0.781  & 0.757 & 0.78  & 0.766 \\
		\hline
		\multirow{6}[0]{*}{D$\rightarrow$C} & $\lambda=1.0$ & 0.748  & 0.730  & 0.736  & 0.754  & \textbf{0.755} & 0.747 & 0.734 \\
		& $\lambda=0.98$ & 0.756   & 0.730  & 0.746  & \textbf{0.755} & 0.752 & 0.746 & 0.73 \\
		& $\lambda=0.96$ & 0.760   & 0.733  & 0.747  & \textbf{0.761} & 0.75  & 0.751 & 0.736 \\
		& $\lambda=0.94$ & 0.758   & 0.733  & 0.757  & \textbf{0.758} & 0.753 & 0.745 & 0.73 \\
		& $\lambda=0.92$ & 0.754   & 0.733  & \textbf{0.755} & \textbf{0.755} & \textbf{0.755} & 0.747 & 0.736 \\
		& $\lambda=0.90$ & 0.749   & 0.734  & 0.755  & \textbf{0.758} & 0.753 & 0.756 & 0.74 \\
		\hline
		\multirow{6}[0]{*}{D$\rightarrow$W} & $\lambda=1.0$ & 0.927  & 0.921  & \textbf{0.928} & 0.917  & 0.786 & 0.867 & 0.885 \\
		& $\lambda=0.98$ & 0.925  & 0.924  & \textbf{0.927} & 0.911  & 0.792 & 0.867 & 0.88 \\
		& $\lambda=0.96$ & 0.919  & \textbf{0.926} & 0.917  & 0.915  & 0.768 & 0.871 & 0.874 \\
		& $\lambda=0.94$ & 0.921   & \textbf{0.925} & 0.922  & 0.914  & 0.784 & 0.875 & 0.88 \\
		& $\lambda=0.92$ & 0.925   & 0.923  & \textbf{0.924} & 0.917  & 0.779 & 0.876 & 0.872 \\
		& $\lambda=0.90$ & 0.927   & \textbf{0.926} & 0.925  & 0.917  & 0.78  & 0.879 & 0.87 \\
		\hline
		\multirow{6}[0]{*}{W$\rightarrow$A} & $\lambda=1.0$ & 0.678  & 0.664  & \textbf{0.697} & 0.670  & 0.66  & 0.657 & 0.631 \\
		& $\lambda=0.98$ & 0.675 & 0.661  & \textbf{0.691} & 0.666  & 0.65  & 0.642 & 0.625 \\
		& $\lambda=0.96$ &0.672  & 0.662  & \textbf{0.693} & 0.674  & 0.647 & 0.656 & 0.629 \\
		& $\lambda=0.94$ & 0.664   & 0.664  & \textbf{0.695} & 0.669  & 0.656 & 0.649 & 0.626 \\
		& $\lambda=0.92$ & 0.660   & 0.668  & \textbf{0.696} & 0.665  & 0.64  & 0.66  & 0.615 \\
		& $\lambda=0.90$ &0.660  & 0.666  & \textbf{0.690} & 0.667  & 0.648 & 0.634 & 0.622 \\
		\hline
		\multirow{6}[0]{*}{W$\rightarrow$C} & $\lambda=1.0$ & 0.600   & 0.571  & 0.597  & \textbf{0.601} & 0.576 & 0.578 & 0.544 \\
		& $\lambda=0.98$ & 0.588  & 0.571  & 0.589  & \textbf{0.601} & 0.58  & 0.577 & 0.545 \\
		& $\lambda=0.96$ & 0.597  & 0.568  & 0.581  & \textbf{0.597} & 0.579 & 0.591 & 0.554 \\
		& $\lambda=0.94$ & 0.598  & 0.572  & 0.584  & \textbf{0.599} & 0.575 & 0.582 & 0.551 \\
		& $\lambda=0.92$ & 0.588  & 0.572  & 0.583  & \textbf{0.597} & 0.572 & 0.572 & 0.537 \\
		& $\lambda=0.90$ & 0.585  & 0.576  & 0.588  & \textbf{0.599} & 0.578 & 0.573 & 0.55 \\
		\hline
		\multirow{6}[0]{*}{W$\rightarrow$D} & $\lambda=1.0$ & 0.911 & 0.903  & \textbf{0.911} & 0.903  & 0.724 & 0.835 & 0.844 \\
		& $\lambda=0.98$ &0.911 & 0.910  & \textbf{0.911} & 0.906  & 0.737 & 0.828 & 0.84 \\
		& $\lambda=0.96$ & 0.917 & \textbf{0.917} & \textbf{0.917} & 0.909  & 0.732 & 0.825 & 0.847 \\
		& $\lambda=0.94$ & 0.917  & 0.915  & \textbf{0.917} & 0.912  & 0.715 & 0.833 & 0.83 \\
		& $\lambda=0.92$ & 0.924  & 0.917  & \textbf{0.924} & 0.913  & 0.734 & 0.828 & 0.848 \\
		& $\lambda=0.90$ & 0.924  & 0.918  & \textbf{0.924} & 0.914  & 0.72  & 0.831 & 0.845 \\
		
		\hline
	\end{tabular}%
	\label{tab:DA_noiseacc}%
\end{table*}%

\section{Acknowledgements} 
The research of this work was supported in part by National Key R\&D program of China through grant 2021YFA1000900, the NSFC throught Grant 62272432, and the Provincial NSF of Anhui through grant 2208085MF163. We also want to thank the anonymous reviewers for their helpful comments.

\bibliographystyle{abbrv}

\bibliography{acm.bib}

\appendix
\section{Proof of Theorem \ref{the-quality-2}}
Let $\mathcal{T}_{ \mathtt{opt}}$ be the optimal rigid transformation with respect to $\min_\mathcal{T}\mathcal{W}^2_2\big(A, \mathcal{T}(B), \lambda\big)$. Since $\tilde{\mathcal{T}}$ yields $c$-approximation for minimizing $\mathcal{W}^2_2\big(S_A, \mathcal{T}(S_B), \lambda \big)$, we have 
\begin{eqnarray}
	\mathcal{W}^2_2\big(S_A, \tilde{\mathcal{T}}(S_B),\lambda\big)&\leq& c\cdot \min_\mathcal{T}\mathcal{W}^2_2\big(S_A, \mathcal{T}(S_B), \lambda \big) \nonumber\\
	&\leq& c\cdot\mathcal{W}^2_2\big(S_A, \mathcal{T}_{ \mathtt{opt}}(S_B), \lambda \big).\label{for-quality8}
\end{eqnarray}
We denote 
the flow of $\mathcal{W}^2_2\big(S_A, \tilde{\mathcal{T}}(S_B), \lambda \big)$ as $\tilde{F}^{s}=\{\tilde{f}_{ij}^s\}$ and  the flow of  $\mathcal{W}^2_2\big(A, \tilde{\mathcal{T}}(B), \lambda \big)$ as $\tilde{F}=\{\tilde{f}_{ij}\}$. 
Then we have 
\begin{eqnarray}
	&&\mathcal{W}^2_2\big(A, \tilde{\mathcal{T}}(B), \lambda\big)\nonumber \\
	&=&\frac{1}{\lambda \min\{W_A, W_B\}}\min_{\tilde{F}}\sum^{n_1}_{i=1}\sum^{n_2}_{j=1}\tilde{f}_{ij}||a_{i}-\tilde{\mathcal{T}}(b_j)||^2\nonumber\\
	&\leq&\frac{1}{\lambda \min\{W_A, W_B\}}\sum^{n_1}_{i=1}\sum^{n_2}_{j=1}\tilde{f}_{ij}^s||a_{i}-\tilde{\mathcal{T}}(b_j)||^2\nonumber\\	
	&\underbrace{\leq}_{\text{by (\ref{for-quality4})}}&\frac{1}{\lambda \min\{W_A, W_B\}}\sum^{n_1}_{i=1}\sum^{n_2}_{j=1}\tilde{f}_{ij}^s \big((1+2\epsilon)||a'_{i}-\tilde{\mathcal{T}}(b'_j)||^2+(2\epsilon+4\epsilon^2)\Delta^2 \big)\nonumber\\	
	&=&\frac{(1+2\epsilon)}{\lambda \min\{W_A, W_B\}}\sum^{n_1}_{i=1}\sum^{n_2}_{j=1}\tilde{f}_{ij}^s (||a'_{i}-\tilde{\mathcal{T}}(b'_j)||^2  ) +(2\epsilon+4\epsilon^2)\Delta^2 \frac{\sum^{n_1}_{i=1}\sum^{n_2}_{j=1}\tilde{f}_{ij}^s }{\lambda \min\{W_A, W_B\}}\nonumber \\
	&=&(1+2\epsilon)\mathcal{W}^2_2(S_A, \tilde{\mathcal{T}}(S_B), \lambda )+(2\epsilon+4\epsilon^2)\Delta^2.
	\label{for-quality9}
\end{eqnarray}
By using the similar idea, we denote the flow of  $\mathcal{W}^2_2\big(S_A, \mathcal{T}_{\mathtt{opt}}(S_B), \lambda \big)$ as $F^{s}=\{f_{ij}^{s}\}$ and  the flow of  $\mathcal{W}^2_2\big(A, \mathcal{T}_{\mathtt{opt}}(B), \lambda \big)$ as $F=\{f_{ij}\}$. Then we have 
\begin{eqnarray}
	&&\mathcal{W}^2_2\big(S_A, \mathcal{T}_{ \mathtt{opt}}(S_B), \lambda \big) \nonumber\\
	&=&\frac{1}{\lambda \min\{W_A, W_B\}}\min_{F^{s}}\sum^{n_1}_{i=1}\sum^{n_2}_{j=1}f_{ij}^{s}||a'_{i}-\mathcal{T}_{\mathtt{opt}}(b'_j)||^2\nonumber\\
	&\leq&\frac{1}{\lambda \min\{W_A, W_B\}}\sum^{n_1}_{i=1}\sum^{n_2}_{j=1}f_{ij}||a'_{i}-\mathcal{T}_{\mathtt{opt}}(b'_j)||^2\nonumber\\
	&\underbrace{\leq}_{\text{by(\ref{for-quality5})}}&\frac{1+2\epsilon}{\lambda\min\{W_A, W_B\}}\sum^{n_1}_{i=1}\sum^{n_2}_{j=1}\big(f_{ij}||a_{i}-\mathcal{T}_{\mathtt{opt}}(b_j)||^2+(2\epsilon+4\epsilon^2)\Delta^2\big)\nonumber\\
	&=&(1+2\epsilon)\mathcal{W}^2_2\big(A, \mathcal{T}_{ \mathtt{opt}}(B), \lambda \big)+(2\epsilon+4\epsilon^2)\Delta^2.\label{for-quality10}
\end{eqnarray}
Combining (\ref{for-quality8}), (\ref{for-quality9}), and (\ref{for-quality10}), we have 
\begin{eqnarray}
	&&\mathcal{W}^2_2\big(A, \tilde{\mathcal{T}}(B), \lambda \big) \nonumber\\
	&\underbrace{\leq}_{\text{by(\ref{for-quality9})}}&(1+2\epsilon)\mathcal{W}^2_2\big(S_A, \tilde{\mathcal{T}}(S_B), \lambda \big)+(2\epsilon+4\epsilon^2)\Delta^2\nonumber\\
	&\underbrace{\leq}_{\text{by(\ref{for-quality8})}}&(1+2\epsilon)\cdot c\cdot\mathcal{W}^2_2\big(S_A, \mathcal{T}_{ \mathtt{opt}}(S_B), \lambda \big)+(2\epsilon+4\epsilon^2)\Delta^2 \nonumber\\
	&\underbrace{\leq}_{\text{by(\ref{for-quality10})}}&c(1+2\epsilon) \cdot( (1+2\epsilon)\mathcal{W}^2_2\big(A, \mathcal{T}_{ \mathtt{opt}}(B), \lambda \big)+(2\epsilon+4\epsilon^2)\Delta^2 ) +(2\epsilon+4\epsilon^2)\Delta^2 \nonumber\\
	&=& c(1+2\epsilon)^2\cdot\mathcal{W}^2_2\big(A, \mathcal{T}_{ \mathtt{opt}}(B), \lambda \big) +2\epsilon(c+1+2c\epsilon)(1+2\epsilon)\Delta^2,
\end{eqnarray}
and the proof is completed.

\end{document}